\title{Variance-Dependent Best Arm Identification}
  \newtheorem{example}{Example}
  \newtheorem{theorem}{Theorem}
  \newtheorem{lemma}[theorem]{Lemma}
  \newtheorem{proposition}[theorem]{Proposition}
  \newtheorem{corollary}[theorem]{Corollary}
\newcommand{\ignore}[1]{}
\renewcommand{\thetheorem}{\arabic{section}.\arabic{theorem}}
\newlist{thmlist}{enumerate}{1}
\setlist[thmlist]{label=(\alph{thmlisti}), ref=\thetheorem(\alph{thmlisti}),noitemsep}
\newlist{lemlist}{enumerate}{1}
\setlist[lemlist]{label=(\alph{lemlisti}), ref=\thetheorem(\alph{lemlisti}),noitemsep}
\newenvironment{myproof}[1]{%
  {\flushleft \textbf{#1}\quad }%
}{\endproof}
  \newenvironment{proof}%
  {%
   \par\noindent{\bfseries\upshape Proof\ }%
  }%
\DeclareMathOperator*{\polylog}{\mathrm{polylog}}
\let\hat\relax
\newcommand{\hat}{\widehat}
\let\tilde\relax
\newcommand{\tilde}{\widetilde}
\let\leq\relax
\newcommand{\leq}{\leqslant}
\let\geq\relax
\newcommand{\geq}{\geqslant}
\newcommand{\eps}{\epsilon}
\DeclareMathOperator*{\E}{\mathbb{E}}
\DeclareMathOperator*{\Var}{\mathrm{Var}}
\let\Pr\relax
\let\hat\widehat
\DeclareMathOperator*{\Pr}{\mathrm{Pr}}
\DeclareMathOperator*{\argmax}{\mathrm{argmax}}
\newcommand{\mathsc}[1]{{\normalfont\textsc{#1}}}
\newcommand{\MeanEst}{\mathsc{MeanEst}}
\newcommand{\VarEst}{\mathsc{VarEst}}
\newcommand{\VarTest}{\mathsc{VarTest}}
\newcommand{\NaiveOptArmId}{\mathsc{NaiveBestArm}}
\newcommand{\NaiveOptArmEst}{\mathsc{NaiveBestArmEst}}
\newcommand{\OptArmEst}{\mathsc{BestArmEst}}
\newcommand{\OptArmId}{\mathsc{VD\hbox{-}BestArmId}}
\newcommand{\OptArmIdE}{\mathsc{VD\hbox{-}BestArmId*}}
\newcommand{\HE}{\mathsc{GroupElim}}
\newcommand{\ME}{\mathsc{IterElim}}
\newcommand{\KL}{\mathrm{KL}}
\newcommand{\hB}{\widehat{B}}
\newcommand{\bbA}{\mathbb{A}}
\newcommand{\calE}{\mathcal{E}}
\newcommand{\calF}{\mathcal{F}}
\newcommand{\calM}{\mathcal{M}}
\newcommand{\true}{{\sf true}\xspace}
\newcommand{\false}{{\sf false}\xspace}
\newcommand*\samethanks[1][\value{footnote}]{\footnotemark[#1]}
\author[1]{Pinyan Lu\thanks{Authors are listed in alphabetical order}}
\author[2]{Chao Tao\samethanks}
\author[3]{Xiaojin Zhang\samethanks}
\affil[1]{%
    ITCS\\
    Shanghai University of Finance and Economics
}
\affil[2]{%
    Department of Computer Science\\
    Indiana University Bloomington
}
\affil[3]{%
    Department of Computer Science and Engineering\\
    The Chinese University of Hong Kong
}
\begin{document}
\maketitle

\begin{abstract}
We study the problem of identifying the best arm in a stochastic multi-armed bandit game. Given a set of $n$ arms indexed from $1$ to $n$, each arm $i$ is associated with an unknown reward distribution supported on $[0,1]$ with mean $\theta_i$ and variance $\sigma_i^2$. Assume $\theta_1 > \theta_2 \geq \cdots \geq\theta_n$. We propose an adaptive algorithm which explores the gaps and variances of the rewards of the arms and makes future decisions based on the gathered information using a novel approach called \textit{grouped median elimination}. The proposed algorithm guarantees to output the best arm with probability $(1-\delta)$ and uses at most $O \left(\sum_{i = 1}^n \left(\frac{\sigma_i^2}{\Delta_i^2} + \frac{1}{\Delta_i}\right)(\ln \delta^{-1} + \ln \ln \Delta_i^{-1})\right)$ samples, where $\Delta_i$ ($i \geq 2$) denotes the reward gap between arm $i$ and the best arm and we define $\Delta_1 = \Delta_2$. This achieves a significant advantage over the variance-independent algorithms in some favorable scenarios and is the first result that removes the extra $\ln n$ factor on the best arm compared with the state-of-the-art. We further show that $\Omega \left( \sum_{i = 1}^n \left( \frac{\sigma_i^2}{\Delta_i^2} + \frac{1}{\Delta_i} \right) \ln \delta^{-1} \right)$ samples are necessary for an algorithm to achieve the same goal, thereby illustrating that our algorithm is optimal up to doubly logarithmic terms.
\end{abstract}

\section{Introduction} \label{sec:intro}
The stochastic multi-armed bandit (MAB) is a famous framework that captures well the trade-off between exploration and exploitation. In the MAB game, a player faces a set of $n$ ($n \geq 2$) arms indexed from $1$ to $n$. When arm $i$ is sampled, the player observes an instant reward which is \textit{i.i.d.}\ generated from an unknown distribution $\mathcal{D}_i$ supported on $[0, 1]$ with mean $\theta_i$ and variance $\sigma_i^2$. In the \emph{pure exploration} setting of a MAB game, by making a sequence of samples, the player identifies one (or a set of) desired arm(s). This framework is motivated by many application domains such as medical trials \cite{robbins1952some}, communication networks \cite{audibert2010best}, simulation optimization \cite{chen2011stochastic}, recommendation systems \cite{kohli2013fast}, and crowdsourcing \cite{zhou2014optimal}.

In this paper, we focus on the \emph{best arm identification} problem. The \emph{best arm} is the one with the maximum expected reward. Without loss of generality, we assume $\theta_1 > \theta_2 \geq \cdots \geq \theta_n$ which is however not known beforehand to the player. We say an algorithm is \emph{$\delta$-correct} if it returns the best arm with probability at least $(1-\delta)$. The goal of the best arm identification problem is to design an algorithm equipped by the player to $\delta$-correctly identify the best arm, with as few samples as possible. Previously, the confidence intervals were mainly constructed utilizing the mean rewards of the arms, e.g., \cite{even2002pac, audibert2010best, gabillon2012best, karnin2013almost, jamieson2014lil, chen2015optimal}. It is worth noting that the variance of the rewards also embodies important information. The variance of rewards could be employed to provide significant advantages over the pure mean-based algorithms. We design an efficient algorithm to solve the problem of best arm identification by exploiting the variance of the rewards, which requires significantly fewer samples in many favorable cases. We further provide a lower bound which illustrates that our algorithm is optimal up to doubly logarithmic terms.

\subsection{Related Works}\label{sec:related}
In the seminal work of \cite{even2002pac}, the authors showed that if $\theta_{1} - \theta_{2} \geq \Delta$, then their Median Elimination algorithm uses at most $O( \frac{n} {\Delta^2 } \ln \delta^{-1} )$ samples \footnote{In fact, the algorithm provides the following stronger (PAC) guarantee -- if there are multiple arms with mean rewards at least $(\theta_1 - \Delta)$, then the algorithm returns an arbitrary one among these arms.}. In the same paper, they also showed that for every $\delta$-correct algorithm, the worst-case sample complexity among all instances such that $\theta_1 - \theta_2 \geq \Delta$ is at least $\Omega( \frac{n} {\Delta^2 } \ln \delta^{-1})$. The $\Theta( \frac{n} {\Delta^2 } \ln \delta^{-1})$ bound can be improved when the input data is easy, which is measured via the \emph{reward gaps} between every sub-optimal arm and the best arm. Formally,  let $\Delta_{i} = \theta_{1} - \theta_{i}$ for $i \geq 2$ and $\Delta_{1} = \Delta_2$ denote the reward gaps. Intuitively, less samples are required if many reward gaps are significantly larger than $\Delta = \Delta_1$. With this intuition, \cite{even2002pac} showed the first \emph{gap-dependent} algorithm called Successive Elimination, which achieves $\delta$-correctness using $O( \sum_{i=2}^n \Delta_i^{-2} (\ln \delta^{-1} + \ln n + \ln \ln \Delta_i^{-1}))$ samples. Since then, the gap-dependent algorithms for the best arm identification problem have been extensively studied, e.g., \cite{gabillon2012best, karnin2013almost, jamieson2014lil, chen2015optimal, chen2017towards}. Both the Exponential Gap Elimination algorithm in \cite{karnin2013almost} and the lil'UCB algorithm in \cite{jamieson2014lil} achieve $\delta$-correctness with sample complexity  \footnote{Here for simplicity we assume $\Delta_i$ is sufficiently small, and the same applies to the rest of this paper. When $\Delta_i$ approaches $1$, the doubly logarithmic term should be $\ln (e + \ln \Delta_i^{-1})$ to avoid negative evaluations.}
\begin{align} \label{eq:gap-dependent-bound-1}
O \left(\sum_{i = 2}^n \Delta_i^{-2} (\ln \delta^{-1} + \ln \ln \Delta_i^{-1})\right).
\end{align}
\cite{chen2017towards} further showed a $\delta$-correct algorithm with sample complexity
\begin{multline} \label{eq:gap-dependent-bound-2}
O \Bigg(\sum_{i = 2}^n \frac{1}{\Delta_{i}^2} (\ln \delta^{-1} + \mathrm{Ent}(\Delta_{2}, \dots, \Delta_{n})) \\ 
 + \frac{1}{\Delta_{2}^2} \ln \ln \frac{1}{\Delta_{2}} \cdot \polylog (n, \delta^{-1}) \Bigg),
\end{multline}
where $\mathrm{Ent}(\Delta_{2}, \dots, \Delta_{n})$ is an entropy-like function. This bound improves the result of \cite{karnin2013almost} and \cite{jamieson2014lil} when the second additive term is dominated by the first term (which is the usual case).

On the lower bound side, \cite{mannor2004sample, kaufmann2016complexity} showed that every gap-dependent $\delta$-correct algorithm uses at least $\Omega( \sum_{i = 2}^n \Delta_i^{-2} \ln \delta^{-1})$ samples in expectation; and this lower bound holds for all possible gap parameters. Based on the results in \cite{farrell1964asymptotic}, \cite{jamieson2014lil} showed that even when there are only two arms, for every $0.1$-correct algorithm, there exists an input instance where $\Omega(\Delta^{-2} \ln \ln \Delta^{-1})$ samples are needed. Therefore the sample complexity in  \eqref{eq:gap-dependent-bound-1} matches the lower bound up to $\ln \ln \Delta_i^{-1}$ terms for $i \geq 3$. The first mentioned lower bound was further improved by \cite{chen2017towards} to $\Omega(\sum_{i =2}^n \Delta_i^{-2} (\ln \delta^{-1} + \mathrm{Ent}(\Delta_{2}, \dots, \Delta_{n}))$.

To further improve the sample complexity, another line of research tries to leverage information beyond reward gaps i.e., variance \cite{gabillon2012best} and Kullback–Leibler (KL) divergence \cite{DBLP:journals/jmlr/MaillardMS11,DBLP:journals/jmlr/GarivierC11,DBLP:conf/colt/KaufmannK13,DBLP:conf/nips/TanczosNM17} to construct a more refined confidence interval. Let $\KL(X, Y)$ denote the KL-divergence between two random variables $X$ and $Y$. The state-of-the-art algorithm lil-KLUCB proposed in \cite{DBLP:conf/nips/TanczosNM17} utilizes Chernoff information, derived from the KL divergence and achieves a high-probability sample complexity upper bound scaling as
\begin{align*}
\inf_{\tilde{\theta}_2, \dots, \tilde{\theta}_n} \frac{1}{ \mathrm{D}^*(\theta_1, \tilde{\theta})} \left( \ln(n/\delta) + \ln \ln \frac{1}{ \mathrm{D}^*(\theta_1, \tilde{\theta})}  \right) \\
+ \sum_{i \geq 2} \frac{1}{ \mathrm{D}^*(\theta_i, \tilde{\theta}_i)} \left( \ln \delta^{-1} + \ln \ln \frac{1}{ \mathrm{D}^*(\theta_i, \tilde{\theta}_i)}  \right),
\end{align*}

where $\tilde{\theta}_i \in (\theta_i, \theta_1)$, $\tilde{\theta} = \max_{i \geq 2} \tilde{\theta}_i$, and $\mathrm{D}^*(x, y) = \max_{z \in [x, y]} \min \{ \KL(\mathrm{Ber}(z), \mathrm{Ber}(x)), \KL(\mathrm{Ber}(z), \mathrm{Ber}(y)) \}$ denotes the Chernoff information. However, there is still a $\ln n$ factor appearing in the term corresponding to the number of samples on the best arm.

\subsection{Our Results}

\begin{theorem}[Restatement of Theorem~\ref{thm:opt-arm-id}]
We propose an algorithm called $\OptArmId(n, \delta)$ which, with probability at least $(1-\delta)$, outputs the best arm and uses at most
\begin{align}\label{eq:var-dependent-bound}
O \left( \sum_{i = 1}^n \left( \frac{\sigma_{i}^2}{\Delta_{i}^2} + \frac{1}{\Delta_i} \right) ( \ln \delta^{-1} + \ln \ln \Delta_i^{-1} ) \right)
\end{align}
samples.
\end{theorem}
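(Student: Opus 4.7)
The plan is to construct the algorithm by a two-level strategy: an outer exponential-gap-elimination loop that handles the unknown gaps $\Delta_i$ at geometrically decreasing guesses $\varepsilon_r = 2^{-r}$, and, inside each round, variance-aware primitives that exploit Bernstein's inequality. Specifically, I would first build $\VarEst$ and $\MeanEst$ subroutines satisfying: with $T$ samples, the empirical mean of arm $i$ is within $O(\sigma_i \sqrt{\ln \delta^{-1}/T} + \ln \delta^{-1}/T)$ of $\theta_i$ with probability $1-\delta$. Inverting this gives the variance-dependent sample bound $T = O((\sigma_i^2/\varepsilon^2 + 1/\varepsilon) \ln \delta^{-1})$ to estimate $\theta_i$ to accuracy $\varepsilon$; this is what gives the $(\sigma_i^2/\Delta_i^2 + 1/\Delta_i)$ shape in \eqref{eq:var-dependent-bound}, with the additive $1/\Delta_i$ coming from the Bernstein range term. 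A $\VarTest$/$\ABTest$ primitive built from these then compares any two arms at the same variance-dependent cost.

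Next, I would assemble the primitives into a round structure that, at round $r$, (i) identifies a ``reference'' arm $a_r$ whose mean is within $\varepsilon_r$ of $\theta_1$, and (ii) uses $\ABTest$ to eliminate every surviving arm whose mean is below that of $a_r$ by more than $\varepsilon_r$. The naive way to produce $a_r$ is standard median elimination over the surviving set, but this costs $\Theta(\log n)$ subrounds and forces a $\ln n$ overhead on the samples spent on the (eventual) best arm across all outer rounds. To eliminate this overhead I would use $\HE$ (grouped median elimination): arms are partitioned into groups of a carefully chosen size, within-group elimination produces group champions at the $\varepsilon_r$-precision level, and then a short iterative-elimination $\ME$ stage between the champions narrows down to $a_r$. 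The grouping is designed so that the best arm, whenever it survives, participates in at most $O(1)$ elimination subrounds per outer round, which removes the $\ln n$ factor attached to the best arm in previous state-of-the-art bounds.

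The accounting proceeds arm by arm. Arm $i$ survives until the first round $r_i$ with $\varepsilon_{r_i} \lesssim \Delta_i$, so $r_i = O(\log \Delta_i^{-1})$. At round $r$ the confidence parameter is set to roughly $\delta/r^2$ so that a union bound over all rounds costs only a constant factor, contributing $\ln \delta^{-1} + 2\ln r = \ln \delta^{-1} + O(\ln\ln \Delta_i^{-1})$ to each per-round budget. Summing the per-round samples $O((\sigma_i^2/\varepsilon_r^2 + 1/\varepsilon_r)(\ln \delta^{-1} + \ln\ln \Delta_i^{-1}))$ over $r = 1,\dots,r_i$ is a geometric series dominated by its last term, yielding the claimed bound $O((\sigma_i^2/\Delta_i^2 + 1/\Delta_i)(\ln \delta^{-1} + \ln\ln \Delta_i^{-1}))$. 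Correctness follows from a union bound over the good events of all $\MeanEst$, $\VarEst$, and $\ABTest$ calls; the $\delta/r^2$ schedule makes total failure at most a constant times $\delta$, which we absorb by rescaling.

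The main obstacle I expect is the design and analysis of the grouped elimination step, because it must simultaneously (a) produce an $\varepsilon_r$-accurate reference without ever giving the best arm more than $O(1)$ group-level participation, (b) keep the per-arm variance-dependent accounting clean so that samples charged to sub-optimal arms still telescope into the claimed bound, and (c) cooperate with the outer schedule's $\delta/r^2$ union bound. A secondary technical point is the circularity that $\MeanEst$'s sample bound depends on a prior estimate of $\sigma_i^2$: this is handled by seeding with the a priori upper bound $\sigma_i^2 \le 1/4$ and then bootstrapping to the sharper variance-dependent budget once $\VarEst$ returns a usable estimate.
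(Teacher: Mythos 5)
Your outer structure (exponential-gap rounds with $\eps_r = 2^{-r}$, confidence $\delta/r^2$ per round, Bernstein-based $\MeanEst$ seeded by $\VarEst$, and geometric per-arm accounting) matches the paper. The genuine gap is in the grouped elimination step, and it is not a detail. You group arms ``by carefully chosen size'' into ``group champions,'' with the stated aim of limiting the best arm to $O(1)$ subrounds; that is the wrong target. The $\ln n$ in the naive bound comes from setting each $\MeanEst$ call's confidence to $\delta_r/|S_r|$ so that a union bound makes every estimate accurate; the paper removes it by switching to a Median-Elimination-style PAC argument inside $\OptArmEst$, where per-arm confidence is a constant divided only by the number of buckets $N = O(\ln \eps^{-1})$, because correctness only requires the top arm's mean to be approximately preserved through each stage. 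The obstacle that actually forces grouping is different: the per-iteration cost is $\sum_i (\sigma_i^2/\eps^2 + 1/\eps)$, governed by the \emph{total variance} of the surviving arms rather than their number. Plain median elimination, or a size-based tournament of champions, may repeatedly discard only low-variance arms, so the total variance need not shrink by a constant factor and the iteration costs need not telescope --- your champion tournament would generically pay an extra $\log n$ on the variance term. The paper's $\HE$ instead buckets arms by \emph{estimated variance} ($\hat{\sigma}_i^2 \in (2^{-j}, 2^{-j+1}]$) and runs median elimination within each bucket: since arms in one bucket have comparable variance, halving the count halves the bucket's variance, giving the $\frac{255}{256}$ multiplicative reduction of $\sum_a (\sigma_a^2 + \eps)$ per iteration. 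Most of the technical work (the event $\calE$ bounding misbucketed fractions, the notion of polluted buckets, the recycle bin $R$) goes into controlling arms assigned to the wrong bucket; your sketch names this only as an anticipated obstacle without the idea that resolves it.

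A secondary gap: your accounting ``arm $i$ survives until $\eps_{r_i} \lesssim \Delta_i$'' covers only sub-optimal arms. The best arm is never eliminated and $\Delta_1 = \Delta_2$ is a convention, so to charge it only $O\left( \left( \frac{\sigma_1^2}{\Delta_2^2} + \frac{1}{\Delta_2} \right)(\ln \delta^{-1} + \ln\ln \Delta_2^{-1}) \right)$ you must prove the algorithm halts by round $O(\ln \Delta_2^{-1})$. The paper introduces an explicit early-stopping test --- comparing the empirical means of the two reference arms $a_r$ and $a_r^*$ returned by $\OptArmEst$ on $S_r$ and $S_r \setminus \{a_r\}$ --- precisely for this, and the best-arm sample bound is proved conditional on that termination event. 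Your proposal has no stopping rule and no argument for when the loop ends.
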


Since the expected sample complexity of $\OptArmId$ is not guaranteed to be bounded, using the trick developed in \cite{chen2017towards}, we are also able to transform $\OptArmId$ to an algorithm whose expected sample complexity is bounded.

\begin{theorem} \label{THM:EXPECTED-TIME-ALGO}
We can construct an algorithm $\OptArmIdE(n, \delta)$ ($\delta \leq .1$)  to return the best arm with probability at least $(1 - \delta)$, while the expected sample complexity is $O \left( \sum_{i = 1}^n \left( \frac{\sigma_i^2}{\Delta_i^2} + \frac{1}{\Delta_i} \right) ( \ln \delta^{-1} + \ln \ln \Delta_i^{-1} ) \right)$.
\end{theorem}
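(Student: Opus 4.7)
The plan is to apply the truncation-and-restart reduction from \cite{chen2017towards}, which converts a high-probability sample-complexity guarantee into one that holds in expectation. Concretely, $\OptArmIdE(n, \delta)$ runs a sequence of independent attempts indexed by $k = 1, 2, \ldots$, where attempt $k$ invokes a fresh copy of $\OptArmId(n, \delta_k)$ with $\delta_k = \delta / 2^{k}$ and is aborted the moment its sample count exceeds a budget $B_k$ set to a constant multiple of the high-probability bound of Theorem~1 evaluated at confidence $\delta_k$. The output of $\OptArmIdE$ is the answer returned by the first attempt that terminates within its budget.

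Correctness follows by a union bound. For each $k$, conditioned on attempt $k$ terminating within budget, Theorem~1 guarantees that its output is incorrect with probability at most $\delta_k$. Hence the overall failure probability is at most $\sum_{k \ge 1}\delta_k \le 2\delta$, and rescaling $\delta$ by a constant restores the $(1-\delta)$-guarantee.

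For the expected sample complexity, let $T^* = \sum_{i=1}^n ( \sigma_i^2/\Delta_i^2 + 1/\Delta_i )(\ln \delta^{-1} + \ln \ln \Delta_i^{-1})$ denote the target bound. The probability of reaching attempt $k$ is at most the probability that attempt $k-1$ ran past its budget $B_{k-1}$, which by Theorem~1 is at most $\delta_{k-1}$. Since $B_k = O( T^* (1 + k/\ln \delta^{-1}) )$ grows only linearly in $k$ while $\delta_{k-1}$ decays exponentially, the expected total count telescopes:
\begin{equation*}
\sum_{k=1}^\infty B_k \cdot \Pr[\,\text{reach attempt } k\,] \le B_1 + \sum_{k \ge 2} O(T^* \cdot k) \cdot \delta \cdot 2^{-k+1} = O(T^*),
\end{equation*}
where the hypothesis $\delta \le 0.1$ absorbs the implicit constant.

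The principal obstacle is that the budget $B_k$ depends on the unknown parameters $\Delta_i$ and $\sigma_i^2$, so it cannot be preset a priori. I would resolve this by exposing the online confidence intervals that $\OptArmId$ already maintains on the gaps and variances, and using them, inflated by a constant factor, to compute an online overestimate $\hat B_k$ of $T(\delta_k)$ that can be tested against the running sample count. On the high-probability event that every interval is valid---an event whose failure is already charged to the attempt's $\delta_k$ error budget---we have $\hat B_k = \Theta(T(\delta_k))$, so both the correctness and expected-cost arguments above go through without modification.
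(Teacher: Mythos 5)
Your high-level plan is the same as the paper's: invoke the reduction of \cite{chen2017towards}, running copies of $\OptArmId$ at geometrically shrinking confidence levels $\delta_k = \delta/2^k$, bounding the failure probability by $\sum_k \delta_k$, and bounding the expected cost by a sum in which the probability of needing copy $k$ decays like $\delta^{k-1}$ while the cost of copy $k$ grows only polynomially in $k$. That arithmetic is fine. The gap is in the mechanism you use to truncate an attempt. A sequential restart scheme needs a hard budget $B_k$, and as you note, $B_k \asymp T(\delta_k)$ depends on the unknown $\sigma_i^2$ and $\Delta_i$. Your proposed fix---an online overestimate $\hat B_k$ built from confidence intervals on the gaps and variances---does not close this gap. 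An \emph{overestimate} of $T(\delta_k)$ must use lower confidence bounds on the gaps $\Delta_i$; for any arm whose true gap is far below the current precision $\eps_r$, that lower bound is essentially $0$, so $\hat B_k$ is effectively infinite and the abort test never fires. On the failure event of attempt $k$ (probability $\leq \delta_k$), the mean estimates can be persistently wrong, the surviving arms can continue to look like they have vanishing gaps, and the attempt then runs without ever being truncated. Since $\OptArmId$ itself has no bound on its expected sample complexity---which is the entire reason this theorem is needed---the conditional expected cost of a failed attempt, and hence $\E[T]$, is not controlled. Saying the failure "is already charged to the attempt's $\delta_k$ error budget" handles correctness but not the expectation: an event of probability $\delta_k$ with unbounded conditional cost contributes an unbounded term.

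The paper avoids budgets altogether by \emph{parallel interleaved simulation}: copy $\mathbb{A}_i = \OptArmId(n,\delta/2^i)$ is advanced by one sample only in global rounds divisible by $2^i$, and $\OptArmIdE$ outputs the answer of the first copy to terminate. The global round counter $r$ upper-bounds the total number of samples taken, and if copy $i$ is the first index whose good event $\mathcal{G}_i$ holds, the whole procedure stops by round roughly $2^i(T_i+1)$ with $T_i \leq c(\Phi\ln(2^i/\delta)+\Psi)$. A runaway copy $\mathbb{A}_j$ with $j<i$ simply never terminates and never needs to be aborted; it just consumes its $\lfloor r/2^j\rfloor$ interleaved samples, all of which are absorbed into the bound $r$. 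The expectation is then taken over the index $i$ of the first good copy, which equals $1$ except with probability $O(\delta)$, yielding $O(\Phi\ln\delta^{-1}+\Psi)$. To repair your write-up you should either switch to this interleaving (no budget needed), or be aware that a naive sequential doubling budget $B_k = 2^k$ forces the successful attempt index up to $k^* \approx \log_2 T^*$ and thereby inflates the $\ln\delta_k^{-1}$ factor to $\ln\delta^{-1} + \log T^*$, which is not within the claimed bound.
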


For completeness of the paper, we present the proof of Theorem~\ref{THM:EXPECTED-TIME-ALGO} in Appendix~\ref{app:expected-time-algo}.

Note that the square term scales with the variance instead of a constant, which could lead to significant improvement in some cases. We present a specific example that $\OptArmId(n, \delta)$ achieves better performance than other mean-based algorithms as follows.

\begin{example} \label{EXP:1}
Suppose we are given $n$ Bernoulli arms (i.e., the reward of each arm is either $0$ or $1$), the mean reward of arm $i$ is $\theta_i = 1 - \frac{i}{n}$ for $i = 1, 2, \dots, n$. Our variance-dependent algorithm achieves $\delta$-correctness with $O(n \ln n (\ln \delta^{-1} + \ln \ln n))$ samples. In contrast, the expressions in the big-O notations in both \eqref{eq:gap-dependent-bound-1} and \eqref{eq:gap-dependent-bound-2} are $\Omega(n^2 \ln \delta^{-1})$. We show the detailed calculation in Appendix~\ref{app:easy-example}.
\end{example}

Let $[n] = \{1, 2, \dots, n\}$. In the following theorem, we present a lower bound for algorithms aiming to identify the best arm. Therefore, our algorithmic bound \eqref{eq:var-dependent-bound} matches the lower bound up to doubly logarithmic terms.

\begin{theorem} [Restatement of Theorem~\ref{thm:lower-bound-n}]
For any $\sigma_i^2 < 0.1, i \in [n]$  and $0 < \Delta_i < 0.1, i = 2, \dots, n$, there exists an input instance with matching parameters (gaps and variances) such that any $\delta$-correct algorithm ($\delta < 0.1$) needs at least
\begin{align}\label{eq:var-dependent-lower-bound}
\Omega \left( \sum_{i = 1}^n \left( \frac{\sigma_{i}^2}{\Delta_{i}^2} + \frac{1}{\Delta_{i}} \right)  \ln \delta^{-1} \right)
\end{align}
samples.
\end{theorem}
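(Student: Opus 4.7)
The plan is to use the standard change-of-measure technique based on the transportation lemma of Kaufmann, Capp\'e and Garivier: for any $\delta$-correct algorithm running on an instance $\nu = (\nu_1,\ldots,\nu_n)$ and any alternative instance $\nu'$ (identical to $\nu$ except on a single coordinate) in which a different arm becomes optimal, one has $\sum_i \E_\nu[T_i]\cdot \KL(\nu_i,\nu_i') \geq \mathrm{kl}(\delta,1-\delta) = \Omega(\ln\delta^{-1})$. It therefore suffices to exhibit \emph{one} instance $\nu$ whose gaps and variances match the prescribed $\Delta_i$'s and $\sigma_i^2$'s, and, for each arm $i$, to construct two single-arm-perturbation alternatives with small KL divergences from $\nu_i$ --- one giving the $\sigma_i^2/\Delta_i^2$ term and one giving the $1/\Delta_i$ term.

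Concretely, I would fix $\theta_1 = 1/2$ and $\theta_i = 1/2 - \Delta_i$ for $i \geq 2$, so that under the restrictions $\sigma_i^2,\Delta_i < 0.1$ all means lie in $[0.4,0.5]$. For each arm $i$ I would use a finitely-supported distribution on a fixed set (such as $\{0,c_i\}$ or $\{0,\theta_i,1\}$), with the probabilities tuned so that the mean is exactly $\theta_i$ and the variance is exactly $\sigma_i^2$; the threshold $0.1$ is chosen precisely to keep the resulting parameters strictly inside $[0,1]$ for every admissible $(\sigma_i^2,\Delta_i)$. For arm $i\geq 2$ I would then build two alternatives on the \emph{same} support: a ``small-shift'' alternative that nudges probabilities within the existing support to raise $\theta_i$ just above $\theta_1$, and a ``large-shift'' alternative that moves an $\Omega(\Delta_i)$-amount of mass towards the upper endpoint $1$. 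A Taylor expansion of the Bernoulli-type KL around the original probabilities shows that the first alternative yields $\KL(\nu_i,\nu_i') = O(\Delta_i^2/\sigma_i^2)$, while the second, in the regime where the shift is large compared to the original boundary mass, yields $\KL(\nu_i,\nu_i') = O(\Delta_i)$ via the linear estimate $q\ln(q/q') = O(q-q')$ for the bulk coordinate plus a lower-order $\alpha\ln(\alpha/\alpha')$ term. For arm~$1$, a symmetric alternative that lowers $\theta_1$ below $\theta_2$ produces the corresponding bounds with $\Delta_1 = \Delta_2$.

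Plugging each pair $(\nu,\nu')$ into the transportation lemma then delivers
$\E_\nu[T_i] = \Omega((\sigma_i^2/\Delta_i^2)\ln\delta^{-1})$ and
$\E_\nu[T_i] = \Omega((1/\Delta_i)\ln\delta^{-1})$;
since $\max(a,b) \geq (a+b)/2$, summing over $i$ yields \eqref{eq:var-dependent-lower-bound}. The main obstacle I foresee is exhibiting a \emph{single} original distribution that supports both kinds of perturbations simultaneously while matching the prescribed variance: in particular, for the large-shift alternative to keep the altered coordinate within $[0,1]$ and still flip the identity of the optimal arm when $\sigma_i^2 \ll \Delta_i$, one has to allocate enough ``boundary mass'' at $1$ (or $0$) in the original construction, and check that this allocation is consistent with the variance constraint. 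This verification is essentially a regime analysis ($\sigma_i^2 \gtrsim \Delta_i$ vs.\ $\sigma_i^2 \ll \Delta_i$) and is precisely what forces the $0.1$ thresholds on $\sigma_i^2$ and $\Delta_i$ to appear in the hypotheses of the theorem.
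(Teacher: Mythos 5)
Your proposal is correct and follows essentially the same route as the paper: the same change-of-distribution lemma, the same style of base instance (the paper uses symmetric two-point rewards $\theta_i\pm\sigma_i$ to match each mean and variance), and the same two families of single-arm alternatives --- a probability nudge within the existing support giving $\KL = O(\Delta_i^2/\sigma_i^2)$ when $\sigma_i^2 \gtrsim \Delta_i$, and an $O(\Delta_i)$-mass shift to an endpoint giving $\KL = O(\Delta_i)$ otherwise --- combined via exactly the regime split you anticipate. The one obstacle you flag is not actually an obstacle: since the transportation lemma uses $\KL(\nu_i,\nu_i')$ with the original instance in the first argument, the alternative may introduce a brand-new atom at $1$ (or $0$) with no boundary mass in the original, which is precisely what the paper does in the small-variance case.
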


\subsection{Organization and Proof Outline}

In Section~\ref{SEC:VAR-ESTIMATE}, we first describe and analyze a few procedures to estimate the variance of the rewards of a given arm, and the arm's mean reward based on the variance estimation. In Section~\ref{sec:naive-way}, we present a straightforward way to use these procedures to identify the best arm, with the sub-optimal sample complexity $O ( \sum_{i = 1}^n ( \frac{\sigma_i^2}{\Delta_i^2} + \frac{1}{\Delta_i} ) ( \ln \delta^{-1} + \ln \ln \Delta_i^{-1} + \ln n) )$ (note the extra $\ln n$ term comparing with our desired bound \eqref{eq:var-dependent-bound}). Then we develop our main variance-dependent algorithm for best-arm identification in Section~\ref{SEC:EPS-OPTIMAL-ARM} and Section~\ref{SEC:N-ARM}.

In Section~\ref{SEC:EPS-OPTIMAL-ARM}, we present a key technical component, procedure \OptArmEst, to estimate the best-arm's mean reward up to $\epsilon$ precision with probability $(1 - \delta)$ and uses at most $O ( \sum_{i=1}^n ( \frac{\sigma_i^2}{\eps^2} + \frac{1}{\eps} ) ( \ln \delta^{-1} + \ln \ln \eps^{-1}) )$ samples. Note that this bound is similar to that of the median elimination algorithm proposed in \cite{even2002pac} in the sense that both are independent on the reward gap parameters. However, our \OptArmEst~procedure does explore the variance information and forms its strategy accordingly. To achieve this goal, \OptArmEst~uses the idea \emph{grouped median elimination} and iteratively performs the following procedure: first estimate each arm's reward variance and divide the arms into groups, so that arms in the same group have similar reward variance estimations; then perform variance-dependent mean estimation and median elimination within each group. If the variance estimations were always accurate and the arms were all assigned to the desired groups, it would be relatively easy to show that the algorithm makes progress in each iteration (where ``progress'' is defined to be an multiplicative reduction of the total variances of the remaining arms). However, in our analysis, we need substantial technical effort to deal with the mis-placed arms, which is achieved by making very refined upper bounds for the number of mis-placed arms according to the severity of the mistake.

In Section~\ref{SEC:N-ARM}, we use \OptArmEst~as a helper procedure to build our main algorithm. The high level idea here is similar to that of the exponential gap algorithm introduced in \cite{karnin2013almost}. However, due to the non-uniformity of variances among the arms, we have to design a new stopping condition for our iterative algorithm. In Appendix~\ref{sec:LB}, we prove the variance-dependent lower bound result. Finally we conclude the paper by mentioning a few future directions in Section~\ref{sec:future}.

\section{Variance-Dependent Mean Estimation} \label{SEC:VAR-ESTIMATE}
We first build a few subroutines to estimate the variance of the rewards of a given arm (Section~\ref{sec:var-est}), as well as the arm's mean reward based on the variance estimation (Section~\ref{sec:mean-est}). These procedures will be useful in building blocks to design our main algorithm. All missing proofs in this section are deferred to Appendix~\ref{app:proofs-in-sec:var-estimate}.

\subsection{Variance Estimation} \label{sec:var-est}

Our goal of this subsection is to design a procedure to estimate order of the variance of the rewards of a given arm. More specifically, our $\VarEst(i, \delta, \ell)$ (Algorithm~\ref{alg:variance-est}) takes arm $i$, confidence level $\delta$ and a positive number $\ell > 0$ (which is used to control the precision of the estimation) as input, and returns an estimate of the variance $\sigma_i^2$ up to precision $\Theta(2^{-\ell})$. We also need a helper procedure $\VarTest(i, \tau, \delta, c)$ (Algorithm~\ref{alg:variance-test}), which takes arm $i$, threshold $\tau$, confidence parameters $\delta$ and a positive number $c \geq 1$ as input, and checks whether $\sigma_i^2$ is above the threshold $\tau$. 

\begin{algorithm}
\DontPrintSemicolon
\caption{Variance Estimation, $\VarEst(i, \delta, \ell)$ ($\ell > 0$)}
\label{alg:variance-est}
\textbf{Input:} Arm $i$, confidence level $\delta$ and a positive number $\ell$\\
\For {$r \leftarrow 1, 2, 3, \dots $} {
$\tau_r \leftarrow 1 / 2^r$\\
\If {$\tau_r \leq \ell ~\textbf{or}~ \VarTest(i, \tau_r, \delta / e , 80) $} {
\textbf{Output:} $\tau = \tau_r$ as the estimated variance of the rewards of arm $i$
}
}
\end{algorithm}

\begin{algorithm} 
\DontPrintSemicolon
\caption{Variance Test, $\VarTest(i, \tau, \delta, c)$ ($c \geq 1$)}
\label{alg:variance-test}
\textbf{Input:} Arm $i$, threshold $\tau$, confidence level $\delta$ and a positive number $c$\\
 $T \leftarrow \frac{c}{\tau} \ln \delta^{-1} $\\
 Sample arm $i$ for $2T$ times and let $x_1, \dots, x_{2T}$ be the empirical rewards in sequence\\
 $\hat{\sigma}_i^2 \leftarrow \frac{1}{T} \sum_{r = 1}^T \frac{ (x_r - x_{r + T})^2}{2}$\\
\leIf {$\hat{\sigma}_i^2 > \tau $} { \textbf{Output: \true}} { \textbf{\false} } 
\end{algorithm}

The following lemma shows the guarantee for the procedure $\VarTest$. 

\vspace{-1em}
\begin{lemma}~\label{LEMMA:VARIANCE-SAMPLING}
Suppose $\delta \leq e^{-1}$. If $\sigma_i^2  \geq 2\tau$, with probability at least $1 - \delta \cdot \left( \frac{\tau}{\sigma_i^2} \right) ^ c$, $\VarTest(i, \tau, \delta, c)$ outputs \true. If $\sigma_i^2  \leq \tau / 2$, with probability at least $1 - \delta \cdot  \left( \frac{\sigma_i^2}{\tau} \right)^c$, $\VarTest(i, \tau, \delta, c)$ outputs \false. Moreover, the sample complexity is $\frac{2c}{\tau} \ln \delta^{-1}$.
\end{lemma}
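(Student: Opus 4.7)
The plan is to view $\hat\sigma_i^2 = \frac{1}{T}\sum_{r=1}^T Z_r$, with $Z_r \eqdef (x_r - x_{r+T})^2/2$, as the empirical mean of $T$ i.i.d.\ bounded random variables and then invoke a Chernoff-Cram\'er tail bound. First I would verify the three structural facts about the $Z_r$: (i) they are mutually independent, since the samples $x_1,\ldots,x_{2T}$ are disjoint and i.i.d.; (ii) $Z_r \in [0,1/2]$, since $x_r, x_{r+T} \in [0,1]$; and (iii) $\E[Z_r] = \sigma_i^2$, via the identity $\E[(X-Y)^2/2] = \Var(X) = \sigma_i^2$ for two i.i.d.\ copies $X, Y$ of the arm's reward. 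Thus $\hat\sigma_i^2$ is an unbiased empirical mean of $T$ bounded i.i.d.\ variables, and both claimed tail bounds reduce to standard concentration statements.

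Rescaling $V_r \eqdef 2Z_r \in [0,1]$ with $\E[V_r] = 2\sigma_i^2$, the Chernoff-Cram\'er inequality in its tight Bernoulli-KL form gives
\begin{align*}
\Pr[\hat\sigma_i^2 \leq \tau] \leq \exp\bigl(-T\cdot d(2\tau,\,2\sigma_i^2)\bigr) \quad \text{when } \sigma_i^2 > \tau,
\end{align*}
and symmetrically $\Pr[\hat\sigma_i^2 > \tau] \leq \exp(-T\cdot d(2\tau,\,2\sigma_i^2))$ when $\sigma_i^2 < \tau$, where $d(p,q) = p\ln(p/q) + (1-p)\ln((1-p)/(1-q))$ denotes the binary relative entropy. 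Substituting $T = (c/\tau)\ln\delta^{-1}$ and applying the standard Cram\'er-tilt lower bound on $d$ (obtained by optimising $\lambda = 2\ln u$ in the MGF bound), the two Chernoff exponents scale as $\Theta(c\ln\delta^{-1})$ times a positive quantity that grows monotonically in the ratio $u \eqdef \max(\sigma_i^2,\tau)/\min(\sigma_i^2,\tau) \geq 2$.

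A short algebraic manipulation---using $\delta \leq e^{-1}$ (so that $\ln\delta^{-1} \geq 1$) together with the hypothesis $u \geq 2$---then converts these Chernoff exponents into the claimed failure probabilities $\delta\cdot(\tau/\sigma_i^2)^c$ and $\delta\cdot(\sigma_i^2/\tau)^c$. The sample-complexity assertion $2T = (2c/\tau)\ln\delta^{-1}$ is immediate from inspection of the algorithm. The main (and only non-mechanical) obstacle I anticipate is this final algebraic step: one must verify an inequality of the schematic form ``Chernoff exponent $\geq \ln\delta^{-1} + c\ln u$'' uniformly over the admissible $u \geq 2$ and $\delta \leq e^{-1}$, leveraging the fact that the relevant KL expression grows strictly faster in $u$ than $\ln u$ once $u$ is bounded away from $1$, so that the residual $c\ln u$ correction can be absorbed. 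Everything else---the independence, boundedness, and expectation of the $Z_r$, plus the counting of samples---is routine.
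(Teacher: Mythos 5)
Your plan follows the same route as the paper's proof: the pairing trick $Y_r = (x_r - x_{r+T})^2/2$ produces $T$ i.i.d.\ unbiased estimators of $\sigma_i^2$ supported on $[0,1/2]$, to which a multiplicative Chernoff bound is applied, followed by algebra using $T = \frac{c}{\tau}\ln\delta^{-1}$ and $\delta\leq e^{-1}$. The paper invokes Proposition~\ref{prop:chernoffbound} directly on the $Y_r$'s rather than the Bernoulli--KL form on rescaled variables; that difference is immaterial.

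The step you defer as ``a short algebraic manipulation'' is, however, precisely where the argument is delicate, and your claim that the residual $c\ln u$ correction ``can be absorbed'' does not hold uniformly over the admissible range. Set $u=\max\{\sigma_i^2,\tau\}/\min\{\sigma_i^2,\tau\}\geq 2$. Both the Poisson-style exponent of Proposition~\ref{prop:chernoffbound} and the Bernoulli--KL exponent evaluate (exactly for the former, and to leading order in $\tau$ for the latter) to $\kappa\, c\,(u-1-\ln u)\ln\delta^{-1}$ with $\kappa=1$ or $\kappa=2$ (the factor $2$ coming from the support $[0,1/2]$), while the conclusion of the lemma demands an exponent of at least $\ln\delta^{-1}+c\ln u$. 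At the boundary $u=2$, $\delta=e^{-1}$ the required inequality reads $0.307\,\kappa c\geq 1+0.693\,c$, which fails for every $c\geq 1$ and both values of $\kappa$. So the monotonicity heuristic (``the KL grows faster than $\ln u$ once $u$ is bounded away from $1$'') only kicks in for $u$ somewhat larger than $2$, and some additional slack --- a weakened exponent in the conclusion, a stronger separation hypothesis, or a genuinely sharper tail bound --- is needed to finish. For what it is worth, the paper's own derivation closes this step only through an algebraic slip: the denominator $(1-\eps)^{(1-\eps)T\sigma_i^2}$ of the Chernoff bound is simplified to the factor $(\tau/\sigma_i^2)^{c\ln\delta^{-1}}<1$, whereas the correct factor is its reciprocal $(\sigma_i^2/\tau)^{c\ln\delta^{-1}}>1$; with the sign corrected, the paper's chain of inequalities breaks at the same boundary point. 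You should not expect the final step to be routine.
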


Now we present the lemma on the guarantee of the procedure $\VarEst$. Note that Lemma~\ref{LEMMA:VARIANCE-EST} not only shows a lower bound on the success probability of  $\VarEst(i, \delta, \ell)$, but also provides an upper bound on the error probability that depends on the logarithmic distance between the algorithm's output and the real variance $\sigma_i^2$. 

\begin{lemma} \label{LEMMA:VARIANCE-EST}
Suppose $\VarEst(i, \delta, \ell)$ returns $\tau$. Let  $r_m = \lceil | \log_2 \frac{\sigma_i^2 }{\tau} | \rceil$ denote the logarithmic mistake ratio. The algorithm has the following three properties.
\begin{lemlist} 
\item It always holds that $\tau > \ell / 2$ and the sample complexity is $O(\frac{1}{\ell} \ln \delta^{-1})$;  \label{LEMMA:VARIANCE-EST:a}

\item  If $\sigma_i^2 \in ( \ell, 1]$, with probability at least $1 - \delta$, we have $\tau \in [ \sigma_i^2 / 4, 2\sigma_i^2 )$ and the sample complexity is $O\left(  \frac{1}{\sigma_i^2} \ln \delta^{-1} \right)$.  We also have $\Pr[ \tau \geq x ] \leq \delta \cdot 2^{-20 r_m } $ when $x \geq 2\sigma_i^2$ and $\Pr[ \tau \leq x] \leq \delta \cdot 2^{-20 r_m } $ when $x < \sigma_i^2/4$; \label{LEMMA:VARIANCE-EST:b}

\item If $\sigma_i^2 \leq 2\ell$, we have $\Pr[ \tau \geq x ] = O\left( \delta \cdot 2^{-20 r_m }  \right)$ for $x \geq \max\{ 2\ell, 2\sigma_i^2 \} $. \label{LEMMA:VARIANCE-EST:c}
\end{lemlist} 
\end{lemma}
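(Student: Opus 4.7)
The plan is to analyze $\VarEst(i, \delta, \ell)$ as a doubling-guess procedure: starting with $\tau_1 = 1/2$ and halving $\tau_r$ at each step, it stops at the first iteration where either $\tau_r \le \ell$ (the safety stop) or $\VarTest(i, \tau_r, \delta/e, 80)$ reports \true. Throughout I rely on Lemma~\ref{LEMMA:VARIANCE-SAMPLING} with $c = 80$, which gives polynomial false-positive/negative bounds of the form $(\delta/e)(\min\{\tau,\sigma_i^2\}/\max\{\tau,\sigma_i^2\})^{80}$, and on the independence of the fresh samples drawn at each iteration.

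For part~\ref{LEMMA:VARIANCE-EST:a}, the safety stop forces termination at iteration $r^\star = \lceil \log_2(1/\ell) \rceil$ at the latest, and because consecutive $\tau_r$'s differ by exactly a factor of $2$, the output always satisfies $\tau > \ell/2$. The per-iteration sample cost is $O(2^r \ln \delta^{-1})$ from $\VarTest$, so the geometric sum over $r \le r^\star$ is dominated by its last term $O(2^{r^\star} \ln \delta^{-1}) = O(\ln \delta^{-1}/\ell)$.

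For part~\ref{LEMMA:VARIANCE-EST:b}, the two tail bounds drive everything. For $\Pr[\tau \ge x]$ with $x \ge 2\sigma_i^2$ (interpreting $r_m = \lceil \log_2(x/\sigma_i^2) \rceil$ as a function of the threshold pinned by the event), the event forces $\VarTest$ to report a false positive at some iteration $r$ with $\tau_r \ge x$; since $x > 2\ell$, the safety stop cannot apply, and a union bound over such iterations yields a geometric sum dominated by $(\delta/e)(\sigma_i^2/x)^{80}$, well within the target $\delta \cdot 2^{-20 r_m}$. For $\Pr[\tau \le x]$ with $x < \sigma_i^2/4$, the event forces $\VarTest$ to report \false at every iteration $r$ with $\tau_r \in (x, \sigma_i^2/2]$; by independence it suffices to use the iteration closest to $x$, whose false-negative probability is $(\delta/e)(\tau_r/\sigma_i^2)^{80} \le (\delta/e)(2x/\sigma_i^2)^{80}$, again dwarfing $\delta \cdot 2^{-20 r_m}$. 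Instantiating these at $x = 2\sigma_i^2$ and $x = \sigma_i^2/4$ and taking a union bound gives $\tau \in [\sigma_i^2/4, 2\sigma_i^2)$ with probability at least $1 - \delta$; on this event, termination occurs at $\tau_r = \Theta(\sigma_i^2)$, and the geometric-sum calculation of part~\ref{LEMMA:VARIANCE-EST:a} specializes to $O(\ln \delta^{-1}/\sigma_i^2)$.

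Part~\ref{LEMMA:VARIANCE-EST:c} is essentially the upper-tail analysis of part~\ref{LEMMA:VARIANCE-EST:b} transplanted to the regime $\sigma_i^2 \le 2\ell$: the hypothesis $x \ge \max\{2\ell, 2\sigma_i^2\}$ still excludes the safety stop at any iteration that could produce $\tau \ge x$, and the same union bound yields $O(\delta \cdot 2^{-20 r_m})$. The main subtlety throughout is bookkeeping the interaction between the random stopping time and the deterministic threshold $x$: once one reads $r_m$ as the deterministic logarithmic distance $\lceil \log_2(\max\{x,\sigma_i^2\}/\min\{x,\sigma_i^2\}) \rceil$ implied by the tail event, the $c = 80$ slack in Lemma~\ref{LEMMA:VARIANCE-SAMPLING} comfortably absorbs the geometric overhead of the union bound and delivers the $\delta \cdot 2^{-20 r_m}$ rate with room to spare.
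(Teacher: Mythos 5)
Your proposal is correct and follows essentially the same route as the paper: per-round guarantees from Lemma~\ref{LEMMA:VARIANCE-SAMPLING}, geometric sums dominated by the last round for the sample complexity, union bounds over the rounds at which a mistaken \VarTest{} outcome could produce the tail event, and the reading of $r_m$ as the logarithmic distance determined by the threshold $x$. The only cosmetic difference is in part~\ref{LEMMA:VARIANCE-EST:b}, where the paper defines an explicit good event (all rounds before $t-2$ report \false, all after $t+1$ report \true) to localize the stopping round to $\{t-1,t,t+1\}$, whereas you obtain the same in-range guarantee by instantiating your two tail bounds at the boundary values; both are valid and rest on the same union bound.
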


\subsection{Variance-Dependent Mean Estimation} \label{sec:mean-est}

In this section, we present $\MeanEst(i, \eps, \delta)$ (Algorithm~\ref{alg:mean-est}) which estimates the mean reward of a given arm $i$ up to $\eps$ additive error with probability at least $1 - \delta$ with sample complexity depending on $\sigma_i^2$.

At a high level, we first estimate the variance of the rewards of a given arm, then apply Proposition~\ref{prop:bernstein} (Bernstein's Inequality) to control the number of samples needed for an estimate up to the given precision requirement. We show the following lemma.

\begin{algorithm} 
\caption{Mean Estimation, $\MeanEst(i, \eps, \delta)$}
\label{alg:mean-est}
 \textbf{Input:} Arm $i$, accuracy $\eps$, and confidence level $\delta$\\
 $\hat{\sigma}_i^2 \leftarrow \VarEst(i, \delta / 2, \eps)$ \label{alg:mean-est-line2} \\ 
 Sample arm $i$ for $ \left( \frac{8 \hat{\sigma}_i^2}{\eps^2} + \frac{2}{3\eps} \right) \ln \frac{4}{\delta}  $ times and let $\hat{\theta}_i$ denote its empirical mean reward \label{alg:mean-est-line3} \\ 
 \textbf{Output:} $\hat{\theta}_i$ as the estimated mean reward of arm $i$
\end{algorithm}

\begin{lemma} \label{LEMMA:MEAN-EST}
With probability at least  $1 - \delta$,  $\MeanEst(i, \eps, \delta)$ outputs an estimate (namely $\hat{\theta}_i$) of the mean reward of arm $i$ such that $|\hat{\theta}_i - \theta_i | \leq \eps$ and the sample complexity is $O \left( \left( \frac{\sigma_i^2}{\eps^2} + \frac{1}{\eps} \right) \ln \delta^{-1} \right)$.
\end{lemma}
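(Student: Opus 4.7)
The plan is to decompose the analysis into the two natural failure events -- one from the variance estimation step and one from the concentration of the empirical mean -- bound each by $\delta/2$, and union bound. The key observation is that the number of samples $T = (8\hat\sigma_i^2/\eps^2 + 2/(3\eps))\ln(4/\delta)$ in Line~\ref{alg:mean-est-line3} is designed so that, given Bernstein's inequality with true variance $\sigma_i^2$, the two error terms $\sqrt{2\sigma_i^2\ln(4/\delta)/T}$ and $2\ln(4/\delta)/(3T)$ each come out to a constant multiple of $\eps$ -- provided $\hat\sigma_i^2$ is an order-wise upper bound on $\sigma_i^2$.

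First I would apply Lemma~\ref{LEMMA:VARIANCE-EST} to the call $\VarEst(i,\delta/2,\eps)$ and split into cases. In the case $\sigma_i^2 \in (\eps,1]$, part~\ref{LEMMA:VARIANCE-EST:b} gives $\hat\sigma_i^2 \in [\sigma_i^2/4, 2\sigma_i^2)$ with probability $\geq 1-\delta/2$, so in particular $\sigma_i^2 \le 4\hat\sigma_i^2$ and simultaneously $\hat\sigma_i^2 = O(\sigma_i^2)$. In the case $\sigma_i^2 \le \eps$, part~\ref{LEMMA:VARIANCE-EST:a} always guarantees $\hat\sigma_i^2 > \eps/2$ (a trivial lower bound that still yields $\sigma_i^2 \le 2\eps < 4\hat\sigma_i^2$), while part~\ref{LEMMA:VARIANCE-EST:c} applied with $x = 2\eps$ gives $\hat\sigma_i^2 < 2\eps$ with probability $\geq 1-O(\delta)$. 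In either case, conditioned on the good event we have the two-sided relation $\sigma_i^2 \le 4\hat\sigma_i^2$ and $\hat\sigma_i^2 = O(\max\{\sigma_i^2,\eps\})$.

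Next I would apply Bernstein's inequality (Proposition~\ref{prop:bernstein}) to the $T$ i.i.d.\ samples of arm $i$, whose rewards lie in $[0,1]$ with variance $\sigma_i^2$. This yields, with probability at least $1-\delta/2$,
\[
|\hat\theta_i - \theta_i| \le \sqrt{\frac{2\sigma_i^2 \ln(4/\delta)}{T}} + \frac{2\ln(4/\delta)}{3T}.
\]
Substituting $T \ge (8\hat\sigma_i^2/\eps^2)\ln(4/\delta)$ into the first term and using $\sigma_i^2 \le 4\hat\sigma_i^2$ bounds it by $\eps$, and substituting $T \ge (2/(3\eps))\ln(4/\delta)$ into the second bounds it by $\eps$ as well. (The constants can be tightened, or simply absorbed by rescaling $\eps$ in the algorithm -- this is innocuous in the stated big-$O$ bound.) A union bound then gives $|\hat\theta_i-\theta_i| \le O(\eps)$ with probability at least $1-\delta$.

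Finally for sample complexity: by Lemma~\ref{LEMMA:VARIANCE-EST:a} the variance estimation step uses $O(\eps^{-1}\ln\delta^{-1})$ samples, and Line~\ref{alg:mean-est-line3} uses $O((\hat\sigma_i^2/\eps^2 + 1/\eps)\ln\delta^{-1}) = O((\sigma_i^2/\eps^2 + 1/\eps)\ln\delta^{-1})$ samples in the good event, giving the claimed bound. I expect the main subtlety to be the small-variance regime $\sigma_i^2 \le \eps$: here Lemma~\ref{LEMMA:VARIANCE-EST:b} does not apply, and one must piece together the always-true lower bound from part~\ref{LEMMA:VARIANCE-EST:a} with the tail bound of part~\ref{LEMMA:VARIANCE-EST:c} to certify both that $\hat\sigma_i^2$ is not so large that the sample budget blows up and that it is large enough (namely $>\eps/2$) to dominate $\sigma_i^2$ in Bernstein's bound.
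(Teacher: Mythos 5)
Your proof is correct and follows essentially the same route as the paper's: condition on the $\VarEst$ guarantee (part~(b) when $\sigma_i^2 > \eps$, and the deterministic bound $\hat{\sigma}_i^2 > \eps/2$ from part~(a) otherwise) to obtain $\hat{\sigma}_i^2 \geq \sigma_i^2/4$, then apply Bernstein and a union bound; you are in fact more explicit than the paper about the upper bound $\hat{\sigma}_i^2 = O(\sigma_i^2 + \eps)$ needed to justify the sample-complexity claim. The only slack is the factor-$2$ loss from the two-term inversion of Bernstein: applying Proposition~\ref{prop:bernstein} directly with $t = \eps$ shows that $T \geq \left( \frac{2\sigma_i^2}{\eps^2} + \frac{2}{3\eps} \right) \ln \frac{4}{\delta}$ already yields deviation at most $\eps$ with probability $1 - \delta/2$, and the algorithm's choice of $T$ meets this once $\hat{\sigma}_i^2 \geq \sigma_i^2/4$, so the exact $\eps$ guarantee (which downstream lemmas rely on) holds without rescaling.
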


Now we prove a few stronger properties of $\MeanEst$ which will be useful for building our main algorithm. 

\begin{lemma} \label{LEMMA:MEAN-EST-COROL} 
Let $Q$ be the samples used by $\MeanEst(i, \eps, \delta)$. There exists a constant $c > 0$ such that
\begin{lemlist}
\item  $Q \leq \frac{c}{\eps^2} \ln \delta^{-1} $;  \label{LEMMA:MEAN-EST-COROL:a} 
\item for integers $j \geq 3$, we have $\Pr\left[Q \leq c\left( \frac{ j \sigma_i^2 }{\eps^2} + \frac{1}{\eps} \right) \ln \delta^{-1}  \right] \geq 1 - \delta \cdot 2^{-20j}$.  \label{LEMMA:MEAN-EST-COROL:b}
\end{lemlist}
\end{lemma}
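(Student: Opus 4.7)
The plan is to decompose $Q$ into two parts: $Q_{\mathrm{VE}}$, the samples used by $\VarEst(i,\delta/2,\eps)$ on line~\ref{alg:mean-est-line2}, and $Q_{\mathrm{MS}}$, the deterministic count on line~\ref{alg:mean-est-line3}. By Lemma~\ref{LEMMA:VARIANCE-EST:a}, $\VarEst$ always returns some $\tau = 2^{-r}$ with $r \geq 1$, so $\hat{\sigma}_i^2 \leq \tfrac{1}{2}$ and $Q_{\mathrm{VE}} = O(\tfrac{1}{\eps}\ln\delta^{-1})$ deterministically. Line~\ref{alg:mean-est-line3} then costs $Q_{\mathrm{MS}} = \left(\frac{8\hat{\sigma}_i^2}{\eps^2} + \frac{2}{3\eps}\right)\ln\frac{4}{\delta}$. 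This reduces both claims to controlling $\hat{\sigma}_i^2$.

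Part~\ref{LEMMA:MEAN-EST-COROL:a} then follows immediately: using $\hat{\sigma}_i^2 \leq \tfrac{1}{2}$ together with $\tfrac{1}{\eps} \leq \tfrac{1}{\eps^2}$ (we may assume $\eps \leq 1$), I would combine $Q_{\mathrm{VE}}$ and $Q_{\mathrm{MS}}$ to obtain $Q \leq \tfrac{c}{\eps^2}\ln\delta^{-1}$ for some absolute constant $c$.

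For part~\ref{LEMMA:MEAN-EST-COROL:b}, fix an integer $j \geq 3$. After folding the $O(\tfrac{1}{\eps}\ln\delta^{-1})$ cost of $Q_{\mathrm{VE}}$ and the additive $\tfrac{2}{3\eps}\ln\tfrac{4}{\delta}$ from line~\ref{alg:mean-est-line3} into the slack provided by the $\tfrac{1}{\eps}\ln\delta^{-1}$ term in the bound, the inequality $Q \leq c\!\left(\tfrac{j\sigma_i^2}{\eps^2} + \tfrac{1}{\eps}\right)\ln\delta^{-1}$ is implied by the much simpler event $\hat{\sigma}_i^2 \leq \beta\, j\, \sigma_i^2$ for a suitable constant $\beta > 0$ depending on $c$. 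So it remains to bound $\Pr[\hat{\sigma}_i^2 > \beta\, j\, \sigma_i^2]$. I would then split on whether $\sigma_i^2 > \eps$ or $\sigma_i^2 \leq \eps$ and invoke Lemma~\ref{LEMMA:VARIANCE-EST:b} or Lemma~\ref{LEMMA:VARIANCE-EST:c} respectively; both give tail bounds of the form $\Pr[\tau \geq x] \leq O(\delta \cdot 2^{-20 r_m(x)})$ with $r_m(x) = \lceil|\log_2(x/\sigma_i^2)|\rceil$ (and, in the low-variance case, under the restriction $x \geq \max\{2\sigma_i^2,2\eps\}$). Exploiting that $\VarEst$ only outputs dyadic values, I would finish by a geometric-series union bound over the scales $2^{-r}$ exceeding $\beta\, j\, \sigma_i^2$; the confidence rescaling from $\delta/2$ inside $\VarEst$ back to $\delta$ is a constant factor that gets absorbed.

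The main obstacle is the careful tracking of constants that is needed to push the tail down to the claimed $\delta \cdot 2^{-20 j}$ strength. Concretely, one must verify that $\beta$, chosen from absorbing $Q_{\mathrm{VE}}$ and the additive $\tfrac{1}{\eps}$ contribution into $c$, is large enough that the smallest logarithmic mistake ratio $r_m$ contributing to the geometric sum already dominates $j$, so that the bound $2^{-20 r_m}$ yields the required $2^{-20 j}$ decay; a sufficiently large choice of $c$ at the outset ensures this quantitative chain closes.
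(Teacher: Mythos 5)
Your overall skeleton---split $Q$ into the $\VarEst$ cost and the cost of Line~\ref{alg:mean-est-line3}, bound the former deterministically via Lemma~\ref{LEMMA:VARIANCE-EST:a}, and reduce part~(b) to a tail bound on $\hat{\sigma}_i^2$ obtained from Lemmas~\ref{LEMMA:VARIANCE-EST:b} and~\ref{LEMMA:VARIANCE-EST:c} by a union bound over dyadic scales---is the same as the paper's, and your part~(a) is fine. But the reduction in part~(b) has a genuine gap. The event you propose to control, $\hat{\sigma}_i^2 \leq \beta j \sigma_i^2$, cannot hold with the required probability when $\sigma_i^2$ is small relative to $\eps$: by the very fact you invoke for part~(a), $\VarEst(i,\delta/2,\eps)$ always returns $\tau > \eps/2$, so whenever $\sigma_i^2 < \eps/(2\beta j)$ the event $\hat{\sigma}_i^2 \leq \beta j \sigma_i^2$ has probability zero and $\Pr[\hat{\sigma}_i^2 > \beta j \sigma_i^2]=1$ cannot be pushed below $\delta \cdot 2^{-20j}$. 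This is exactly why Lemma~\ref{LEMMA:VARIANCE-EST:c} only controls $\Pr[\tau \geq x]$ for $x \geq \max\{2\ell, 2\sigma_i^2\}$---a restriction you note in passing but do not act on. The correct target event is $\hat{\sigma}_i^2 \leq \max\{\beta j \sigma_i^2, C\eps\}$ (the paper uses $\max\{j\sigma_i^2, 2\eps\}$); the extra contribution $C\eps/\eps^2 = C/\eps$ to the sample count is then absorbed into the $\frac{1}{\eps}\ln\delta^{-1}$ term of the claimed bound.

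Second, your resolution of what you yourself flag as the main obstacle does not work. The dyadic scales exceeding $\beta j \sigma_i^2$ have logarithmic mistake ratio at least roughly $\log_2(\beta j)$, so the geometric union bound over those scales yields a failure probability of order $\delta \cdot (\beta j)^{-20}$, which is polynomial in $j$; no choice of constants $\beta$ or $c$ independent of $j$ makes $\log_2(\beta j) \geq j$ for all $j \geq 3$, so the claimed exponential decay $\delta \cdot 2^{-20j}$ is not reached from a threshold that is linear in $j$. (The paper's own one-line union bound sums over mistake ratios $k > j$, which corresponds to a threshold of order $2^j\sigma_i^2$ rather than $j\sigma_i^2$, so it glosses over the same mismatch; the polynomial tail $\delta \cdot j^{-20}$ that the argument actually delivers would still suffice for the downstream use in Lemma~\ref{lemma:n-mean-est-sample}. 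Your write-up, however, asserts that constant-tuning closes this gap, and that assertion is false as stated.)
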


\section{Warm-Up: Na\"ive Variance-Dependent Best-Arm Identification} \label{sec:naive-way}

In this section, we present a straightforward way (Algorithm~$\NaiveOptArmId$) of using the variance-dependent procedure $\MeanEst$ to iteratively reject non-optimal arms and finally identify the best arm. The analysis  adopts the union bound on all arms and therefore introduces an extra $\ln |S|$ (where $S$ is the input candidate arms) factor in the sample complexity. In particular, we show the following theorem. The algorithm and missing proofs in this section are deferred to Appendix~\ref{app:mis-material-in-sec:naive-way}.

\begin{theorem} \label{THM:NAIVE-OPT-ARM-ID}
With probability at least $1 - \delta$, the $\NaiveOptArmId(S, \delta)$ algorithm outputs the best arm in $S$ and the sample complexity is $O \left( \sum_{i \in S} \left( \frac{\sigma_i^2}{\Delta_i^2} + \frac{1}{\Delta_i} \right) ( \ln \delta^{-1} + \ln \ln \Delta_i^{-1} + \ln |S| ) \right) $.
\end{theorem}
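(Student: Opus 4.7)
The plan is to implement $\NaiveOptArmId(S, \delta)$ as an exponential-gap elimination loop that treats $\MeanEst$ from Section~\ref{sec:mean-est} as a black box. In round $r = 1, 2, \dots$, set $\eps_r = 2^{-(r+1)}$ and $\delta_r = \delta / (2 |S|\, r^2)$; for every surviving arm $i$ call $\MeanEst(i, \eps_r, \delta_r)$ to obtain an estimate $\hat{\theta}_i^{(r)}$; then eliminate $i$ whenever $\hat{\theta}_i^{(r)} < \max_j \hat{\theta}_j^{(r)} - 2 \eps_r$. Terminate and return the unique surviving arm. This mirrors the classical exponential gap scheme, with the novelty being only that the mean estimator is variance-aware.

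Correctness reduces to a union bound. Lemma~\ref{LEMMA:MEAN-EST} says each call fails with probability at most $\delta_r$; summing $\delta_r$ over $r \geq 1$ and $i \in S$ bounds the total failure probability by $\delta$. Conditional on all estimates being within $\eps_r$ of the true means in every round, the best arm $1$ is never eliminated (for every surviving $j$, $\hat{\theta}_1^{(r)} \geq \theta_1 - \eps_r \geq \theta_j - \eps_r \geq \hat{\theta}_j^{(r)} - 2 \eps_r$), while any suboptimal arm $i$ is eliminated no later than round $r_i := \lceil \log_2(4/\Delta_i)\rceil$, because then $\hat{\theta}_1^{(r_i)} - \hat{\theta}_i^{(r_i)} \geq \Delta_i - 2 \eps_{r_i} > 2 \eps_{r_i}$.

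For the sample complexity, Lemma~\ref{LEMMA:MEAN-EST} charges arm $i$ at most $O((\sigma_i^2/\eps_r^2 + 1/\eps_r) \ln \delta_r^{-1})$ samples in round $r$. Summing the geometric series over $r = 1, \dots, r_i$ is dominated by the last term and yields $O((\sigma_i^2/\Delta_i^2 + 1/\Delta_i) \ln \delta_{r_i}^{-1})$ samples charged to arm $i$. The best arm survives exactly until round $r_2$, and by the convention $\Delta_1 = \Delta_2$ its cost matches the same formula with $i = 1$. Finally $\ln \delta_{r_i}^{-1} = \ln \delta^{-1} + O(\ln |S| + \ln r_i) = O(\ln \delta^{-1} + \ln |S| + \ln \ln \Delta_i^{-1})$, and summing over $i \in S$ produces the bound claimed in the theorem.

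There is no real obstacle; the only visible cost is the extra $\ln |S|$ term. It is forced by the union bound over $|S|$ independent $\MeanEst$ calls per round: since each $\delta_r$ must be shrunk by a factor of $|S|$ to keep the total failure probability below $\delta$, the $\ln |S|$ survives in $\ln \delta_r^{-1}$ and is paid once per arm. Removing this slack is precisely the goal of Sections~\ref{SEC:EPS-OPTIMAL-ARM}--\ref{SEC:N-ARM}, which reorganize the arms into groups of comparable variance before applying a median elimination step; the naive analysis outlined above deliberately does not attempt it.
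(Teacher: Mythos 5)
Your proposal is correct and follows essentially the same route as the paper: iteratively halve $\eps_r$, call $\MeanEst$ on every surviving arm with confidence split across arms and rounds (the paper uses $\delta_r/|S_r|$ where you use $\delta/(2|S|r^2)$, which is equivalent for the union bound and equally responsible for the $\ln|S|$ term), show the best arm always survives while arm $i$ is eliminated once $\eps_r$ drops below a constant fraction of $\Delta_i$, and bound the per-arm cost by the last term of a geometric sum. The only differences are cosmetic constants (elimination threshold $2\eps_r$ with accuracy $\eps_r$ versus the paper's $\eps_r$ with accuracy $\eps_r/2$), so nothing further is needed.
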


It is also straightforward to get the following PAC-style statement where an $\eps$-optimal arm denotes an arm whose mean reward is $\eps$-close to that of the best arm in $S$.

\begin{corollary} \label{COL:NAIVE-OPT-ARM-ID}
There exists an algorithm that with probability at least $1 - \delta$, finds an $\eps$-optimal arm in $S$ using at most $O \left( \sum_{i \in S} \left( \frac{\sigma_i^2}{ (\Delta_i^{\eps})^2 } + \frac{1}{ \Delta_i^{\eps} } \right) ( \ln \delta^{-1} + \ln \ln (\Delta_i^{\eps})^{-1} + \ln |S| ) \right) $ samples, where $\Delta_i^{\eps} = \max\{ \Delta_i, \eps \}$. We use $\NaiveOptArmEst(S, \eps, \delta)$ to denote this algorithm.
\end{corollary}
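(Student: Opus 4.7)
The plan is to construct $\NaiveOptArmEst(S,\eps,\delta)$ as a direct modification of $\NaiveOptArmId$ from Theorem~\ref{THM:NAIVE-OPT-ARM-ID}. Recall that (as described in the appendix) $\NaiveOptArmId$ proceeds in rounds $r = 1, 2, \dots$, where in round $r$ it sets precision $\eps_r = 2^{-r}$, calls $\MeanEst(i, \eps_r, \delta_r)$ for every surviving arm $i$ (with $\delta_r = \delta / \poly(r, |S|)$ chosen to support a union bound over rounds and arms), and eliminates every arm whose estimated mean falls more than $2\eps_r$ below the current empirical best. To get the PAC version, I would simply add the stopping criterion ``if $\eps_r \leq \eps/4$, return the surviving arm with the largest current empirical mean.''

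For correctness, I would condition on the high-probability event $\calE$ that every call to $\MeanEst$ returns an estimate within its requested precision; a union bound together with Lemma~\ref{LEMMA:MEAN-EST} gives $\Pr[\calE] \geq 1 - \delta$. On $\calE$, the usual confidence-interval argument shows that (i) the best arm in $S$ is never eliminated, and (ii) any arm $i$ with $\Delta_i > \eps$ is eliminated by the time $\eps_r$ first drops below $\Delta_i/4$, since its estimate is then more than $2\eps_r$ below the best arm's estimate. Hence every arm surviving when the algorithm stops at $\eps_r \leq \eps/4$ satisfies $\Delta_i \leq \eps$, i.e., is $\eps$-optimal.

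For the sample-complexity bound, note that each arm $i$ participates only in rounds with $\eps_r \gtrsim \Delta_i^\eps = \max\{\Delta_i, \eps\}$: either because it gets eliminated (when $\Delta_i > \eps$) or because the algorithm halts (when $\Delta_i \leq \eps$). By Lemma~\ref{LEMMA:MEAN-EST} its contribution in round $r$ is $O((\sigma_i^2/\eps_r^2 + 1/\eps_r)\ln \delta_r^{-1})$, and summing the geometric series over those $O(\ln(\Delta_i^\eps)^{-1})$ rounds yields $O((\sigma_i^2/(\Delta_i^\eps)^2 + 1/\Delta_i^\eps)\ln \delta_r^{-1})$. Plugging in $\ln \delta_r^{-1} = O(\ln \delta^{-1} + \ln |S| + \ln r) = O(\ln \delta^{-1} + \ln |S| + \ln\ln (\Delta_i^\eps)^{-1})$ and summing over $i \in S$ gives exactly the bound stated in the corollary.

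The main obstacle is the bookkeeping of the per-round confidence parameter $\delta_r$: it must be small enough to absorb a union bound over all rounds and all arms, yet its logarithm must still collapse into the $\ln\delta^{-1} + \ln\ln (\Delta_i^\eps)^{-1} + \ln |S|$ factor in the stated bound. This is exactly the same accounting done for Theorem~\ref{THM:NAIVE-OPT-ARM-ID}, with every occurrence of $\Delta_i$ replaced by $\Delta_i^\eps$; the only genuinely new observation is that halting at $\eps_r \leq \eps/4$ makes the effective gap for each arm equal to $\Delta_i^\eps$ rather than $\Delta_i$.
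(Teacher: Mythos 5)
Your proposal is correct and follows essentially the same route as the paper, which simply truncates the \textbf{while} loop of $\NaiveOptArmId$ after $O(\ln \eps^{-1})$ rounds and outputs a surviving arm; your added stopping rule at $\eps_r \leq \eps/4$ and the substitution of $\Delta_i^{\eps}$ for $\Delta_i$ in the accounting are exactly the intended argument. The only (immaterial) deviations are cosmetic: the paper's elimination threshold is $\eps_r$ rather than $2\eps_r$, and it returns an arbitrary surviving arm rather than the empirically best one.
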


\section{Find an $\eps$-Optimal Arm} \label{SEC:EPS-OPTIMAL-ARM}

Now we start to develop our main algorithm. We use $S_{[i]}$ to denote the index of the $i$-th best arm in $S$. When there is a tie, we break it arbitrarily. In this section, we design a procedure $\OptArmEst(S, \eps, \delta)$ (described in Algorithm~\ref{alg:OptArmEst}) which returns an $\eps$-optimal arm. In particular, we prove the following theorem. All missing proofs in this section are deferred to Appendix~\ref{app:proofs-in-SEC:EPS-OPTIMAL-ARM}.

\begin{theorem} \label{thm:OptArmEst}
With probability at least $1 - \delta$, $\OptArmEst(S, \eps, \delta)$ outputs an arm (denoted by $a$) satisfying $| \theta_a - \theta_{S_{[1]}} | \leq \eps$ and uses $O \left( \sum_{i \in S} \left( \frac{\sigma_i^2}{\eps^2} + \frac{1}{\eps} \right) ( \ln \delta^{-1} + \ln \ln \eps^{-1} )  \right)$ samples.
\end{theorem}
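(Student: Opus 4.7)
The plan is to analyze the iterative \emph{grouped median elimination} at the heart of $\OptArmEst$: each round $t$ (i) runs $\VarEst$ on every surviving arm in $S_t$; (ii) partitions the survivors into buckets $G_g^{(t)}$ according to the logarithmic level $g$ of their estimated variance; (iii) runs $\MeanEst$ within each bucket at a precision matched to the bucket level and to $\eps$; (iv) discards the empirical lower half of each bucket. I would define a clean event $\calE$ stipulating that every call to $\VarEst$ and $\MeanEst$ across all rounds satisfies the high-probability behaviour of Lemma~\ref{LEMMA:VARIANCE-EST} and Lemma~\ref{LEMMA:MEAN-EST-COROL}, including the refined tail bounds. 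Choosing each per-call confidence parameter as $\delta' = \delta / \polylog(|S|, \eps^{-1})$ and union-bounding across rounds and arms yields $\Pr[\calE] \ge 1 - \delta$.

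For correctness conditioned on $\calE$, I would prove by induction on $t$ that $S_t$ always contains some arm whose mean is within $(1-2^{-t})\eps$ of $\theta_{S_{[1]}}$. The $\MeanEst$ precision used in round $t$ is small enough that, within the bucket containing this near-optimal arm, its empirical mean exceeds the bucket's median-rank empirical mean, so it survives the elimination step (or some other near-optimal arm in the same bucket does). Chaining the per-round slack across the $O(\log |S|)$ rounds gives the $\eps$ guarantee stated in the theorem.

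For the sample complexity I would track the potential $\Phi_t := \sum_{i \in S_t} \sigma_i^2$. On $\calE$, an arm $i$ with true variance $\sigma_i^2$ lands in a bucket of level $\Theta(\sigma_i^2)$, so $\MeanEst$ draws $O((\sigma_i^2/\eps^2 + 1/\eps)\ln(\delta')^{-1})$ samples from it in that round (the $\VarEst$ contribution is dominated by Lemma~\ref{LEMMA:VARIANCE-EST}(b)). Because bucket-median elimination halves every bucket, both $|S_t|$ and $\Phi_t$ drop by a constant factor per round, so summing over $t$ gives a geometric series totalling $O(\sum_{i \in S}(\sigma_i^2/\eps^2 + 1/\eps)\ln(\delta')^{-1})$. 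Substituting $\ln(\delta')^{-1} = O(\ln\delta^{-1} + \ln\ln\eps^{-1})$ produces the advertised bound.

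The principal obstacle is controlling \emph{mis-placed} arms. If $\VarEst$ overestimates $\sigma_i^2$ by $j$ logarithmic levels, arm $i$ enters a too-large bucket and is over-sampled; if it underestimates by $j$ levels, the arm is under-sampled and its noisy empirical mean could corrupt its bucket's median. Both pathologies are dispatched by the refined tails of Lemma~\ref{LEMMA:VARIANCE-EST}(b)--(c) and Lemma~\ref{LEMMA:MEAN-EST-COROL}(b), each of the form $\Pr[\text{mis-placement severity} \ge j] \le \delta' \cdot 2^{-20j}$. On the cost side, the excess samples contributed by a $j$-level mis-placement are $2^{O(j)}$ times the ideal, so summing $2^{O(j)} \cdot 2^{-20j}$ keeps the expected total mis-placement cost absorbable into the target bound; on the correctness side, the same doubly-exponential decay caps, with high probability, the number of mis-placed arms of any given severity, which is enough to guarantee that a near-optimal arm is not knocked out by an inflated bucket median in any round.
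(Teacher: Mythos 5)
Your high-level picture of the inner machinery (variance buckets, median elimination within buckets, a shrinking potential, and refined tails to control mis-placed arms) matches the paper's $\HE$/$\ME$ analysis, but there are two genuine gaps that break the argument as written. First, your clean event $\calE$ cannot be established the way you propose: if each call to $\VarEst$/$\MeanEst$ fails with probability $\delta' = \delta/\polylog(|S|,\eps^{-1})$ and you union bound \emph{over arms}, the total failure probability is of order $|S|\cdot\delta/\polylog(|S|,\eps^{-1})\gg\delta$; the only way to rescue the union bound is to take $\delta'=\delta/(|S|\cdot\mathrm{rounds})$, which puts a $\ln|S|$ back into every call and yields exactly the na\"ive bound of Theorem~\ref{THM:NAIVE-OPT-ARM-ID} rather than the theorem you are proving. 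The paper deliberately avoids any per-arm union bound inside $\HE$: correctness of the median step only needs a constant fraction of the per-bucket $\MeanEst$ calls to succeed (the Median-Elimination argument of \cite{even2002pac}, per bucket with confidence $\delta/(9N)$, $N=O(\ln\eps^{-1})$), the number of arms landing $j$ levels away from their true bucket is controlled \emph{in aggregate} by Markov's inequality on $\E|B_i\cap\hat B_j|$ (Lemma~\ref{LEMMA:HE-EVENT}), and the total sample count is likewise controlled by a Markov argument on the sum of the per-arm overshoots (Lemma~\ref{lemma:n-mean-est-sample}). Your later remarks about the $2^{-20j}$ tails point in this direction, but they are incompatible with the global per-call clean event you start from.

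Second, your proposal treats the iterative grouped elimination as if it runs down to a single arm. It cannot: singleton buckets admit no median step and must be set aside into a recycle bin $R$ of size $O(\ln|S|\,\ln\eps^{-1})$ accumulated over $O(\ln|S|)$ rounds, and the loop stops at a constant-size active set, so $\ME$ returns a set of size $O((\ln|S|)^2\ln\eps^{-1})$, not an arm. The remaining work in the paper's proof of Theorem~\ref{thm:OptArmEst} is precisely the composition: a final $\NaiveOptArmEst$ pass over this small set costs an extra $\ln|S_2|$ factor, and absorbing it requires the conditional second $\ME$ call together with the case split on $\eps^{-1}\lessgtr\ln|S|$ (when $\eps^{-1}\leq\ln|S|$ the extra term is swallowed by $|S|/\eps$; otherwise a second pass makes $\ln|S_2|=O(\ln\ln\eps^{-1})$). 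None of this appears in your proposal. A smaller but real issue: the potential $\Phi_t=\sum_{i\in S_t}\sigma_i^2$ does \emph{not} drop by a constant factor per round --- in the lowest bucket the variances range from $0$ up to about $\eps$, so the surviving half can carry essentially all of that bucket's variance; the paper's potential is $\sum_{i\in S_t}(\sigma_i^2+\eps)$, with a separate argument (Lemma~\ref{LEMMA:HE-SITUA2}) for the last three buckets, and the per-round accuracy schedule must be calibrated so that $\eps_r^2$ decays more slowly than the potential (the choice of $\beta$ in $\ME$).
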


\begin{algorithm}  
\DontPrintSemicolon
\caption{Best Arm Estimation, $\OptArmEst(S, \eps, \delta)$} \label{alg:OptArmEst}
\textbf{Input:} A set of arms $S$, accuracy $\eps$, and confidence level $\delta$\\
 $S_1 \leftarrow \ME(S, \eps / 3, \delta / 3)$ \label{line:OptArmEst-2} \\ 
\leIf { $\eps^{-1} \leq \ln |S| $ } { $S_2 \leftarrow S_1$} { $S_2 \leftarrow \ME(S_1, \eps / 3, \delta / 3)$}  \label{line:OptArmEst-6}
$a \leftarrow \NaiveOptArmEst(S_2, \eps / 3, \delta / 3)$ \label{line:OptArmEst-7} \\ 
\textbf{Output:} Arm $a$ 
\end{algorithm}

$\OptArmEst$ can be viewed as an extension of the Median Elimination algorithm. The number of samples used by neither of them depend on the reward gaps. However, our $\OptArmEst$ algorithm explores the variance information and adapts its strategy accordingly. This procedure is the most technical part of our main algorithm. It employs two subroutines $\ME$ and $\HE$ described in Algorithms~\ref{alg:ME} and \ref{alg:HE}.

\begin{algorithm}  
\caption{Iterative Elimination, $\ME(S, \eps, \delta)$}  \label{alg:ME}
 \textbf{Input:} Arm set $S$, accuracy $\eps$, and confidence level $\delta$\\
 Let $\beta \leftarrow \sqrt{255} / 16 \cdot e^{.001} $, $\eps_r \leftarrow \beta^r(1 - \beta) \eps$, and $\delta_r \leftarrow e^{-.1r} (1 - e^{-.1}) \delta$ for $r \geq 0$ \\
 $T_0 \leftarrow S$, $R_0 \leftarrow \emptyset$, $r \leftarrow 0$ \\
\While {$|T_r| > 10$}
     {
      $\langle T_{r + 1}, R^{r+1} \rangle \leftarrow \HE( T_{r}, \eps_r, \delta_r )$\\
      $R_{r+1} \leftarrow R_r \cup R^{r+1}$\\
      $r \leftarrow r + 1$}
 \textbf{Output:} $T \leftarrow T_{r} \cup R_r$
\end{algorithm}

\begin{algorithm} 
\DontPrintSemicolon
\caption{Grouped Median Elimination, $\HE(S, \eps, \delta)$} \label{alg:HE}
 \textbf{Input:} Arm set $S$, accuracy $\eps$, and confidence level $\delta$ \\
 Let $N \leftarrow \lceil \log_2 (2/\eps) \rceil$ be the number of buckets \\
\lFor {$i \in S$} {
 $\hat{\sigma}_i^2 \leftarrow \VarEst(i, \delta / (2 N^2 ) , \eps) $ \label{alg:HE-line4}  
}
 Define bucket $\hat{B}_j \leftarrow \{ i \in S | 2^{- j } < \hat{\sigma}_i^2 \leq 2^{-j + 1} \}$ for $j = [N] $, and let $T \leftarrow \emptyset$\\
\For {$j \leftarrow 1 \textbf{~to~} N$} {
\eIf {$|\hat{B}_j| \geq 2$} {
 Let $\hat{\theta}_i \leftarrow \MeanEst(i, \eps / 2, \delta / (9N))$ for all $ i \in \hat{B}_j $ \\
 Let $\hat{m}_j$ be the median of the empirical means of the arms in $\hat{B}_j$ \label{alg:HE-line9} \\
 $T_j \leftarrow \hat{B}_j \backslash \{ i \in \hat{B}_j | \hat{\theta}_i < \hat{m}_j  \}$\\
 $T \leftarrow T \cup T_j$
}
{
 Put arm in $ \hat{B}_j $ into the recycle bin $R$
}
}
 \textbf{Output:} $T$ and $R$
\end{algorithm}

Comparing our algorithm with the Median Elimination algorithm in \cite{even2002pac}, we note that the major difference is that we use the \emph{grouped median elimination} ($\HE$) instead. If, in each iteration, we simply eliminate a constant fraction of the arms according to their empirical means, we cannot guarantee that the samples needed in each iteration reduces at an exponential rate and the total work converges, which is the case in Median Elimination. This is because in our algorithm, the sample complexity relates to the total reward variances of the active arms, rather than the number of active arms. This non-uniformity among the arms may admit the scenario where the eliminated arms have small reward variances and the elimination process does not reduce the total variances by a constant fraction after each iteration.

To solve this problem, our $\HE$ procedure partitions the arms into buckets according to their empirical reward variances, so that the arms in the same bucket have similar variances of rewards (up to a multiplicative constant factor). If the partition is perfect (i.e., the empirical estimation matches with the true variances and every arm is assigned to the correct bucket), performing median elimination within each group would successfully reduce the total variances by a constant fraction. 

To deal with variance estimation noise and imperfect partition, we make considerable effort to upper bound the fraction of arms put in wrong buckets, where the bound is very refined and depends on the distance between the desired and empirical buckets. Another consequence of the noise is that, besides the active arm set $T$ returned by $\HE$, we have to introduce a recycle set $R$ of arms. The arms in $R$ do not participate in future rounds of elimination in $\ME$. However, they appear as the returned arms of $\ME$. Indeed, the procedure $\ME$ returns a small set of arms instead of the optimal arm. Finally, we use $\OptArmEst$ to examine this small set again to identify the best arm.

We start the sketch of the analysis of our algorithms by presenting the following statement for $\HE$.

\begin{theorem} \label{thm:HE}
With probability at least $1 - \delta$,  $\HE(S, \eps, \delta)$ outputs two sets $T$ and $R$ of arms and has the following four guarantees:
\begin{thmlist}
\item $|R| = O( \ln \eps^{-1})$; \label{thm:HE:a}
\item  $ \sum_{a \in T} (\sigma_a^2 + \eps) \leq \frac{255}{256} \cdot  \sum_{a \in S} ( \sigma_a^2 + \eps) $; \label{thm:HE:b}
 \item $ | \theta_{(T \cup R)_{[1]} } - \theta_{S_{[1]} } | \leq \eps$; \label{thm:HE:c}
\item uses $O \left( \sum_{i \in S} \left( \frac{\sigma_i^2}{\eps^2} + \frac{1}{\eps} \right) ( \ln \delta^{-1} + \ln \ln \eps^{-1} )  \right)$ samples. \label{thm:HE:d}
\end{thmlist}
\end{theorem}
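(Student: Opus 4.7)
My plan is to establish the four conclusions in order of increasing difficulty. Parts (a) and (d) are essentially bookkeeping. For (a), $R$ only receives arms from singleton buckets, and the algorithm creates $N = \lceil\log_2(2/\eps)\rceil = O(\ln\eps^{-1})$ buckets, so $|R|\leq N$. For (d), Lemma~\ref{LEMMA:VARIANCE-EST:a} gives a deterministic per-arm bound of $O(\eps^{-1}(\ln\delta^{-1}+\ln N))$ samples for each $\VarEst$ call, and Lemma~\ref{LEMMA:MEAN-EST} gives a high-probability per-arm bound of $O((\sigma_i^2/\eps^2+1/\eps)\ln(N/\delta))$ samples for each $\MeanEst$ call. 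Since $\ln N = O(\ln\ln\eps^{-1})$ and $1/\eps \leq \sigma_i^2/\eps^2+1/\eps$, summing over $i\in S$ yields the claimed complexity.

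For (c), let $a^\star = S_{[1]}$ and let $\hat B_{j^\star}$ be its bucket. If $|\hat B_{j^\star}| = 1$, then $a^\star\in R$ and there is nothing to show. Otherwise, I would condition on the event (holding with probability at least $1-\delta/3$ by a union bound over the $\MeanEst$ calls) that every empirical mean used in the elimination is accurate up to $\eps/2$. Either $a^\star$ survives in $T_{j^\star}$, or else at least one arm $a$ with $\hat\theta_a > \hat\theta_{a^\star}$ survives; in the latter case $\theta_a \geq \hat\theta_a - \eps/2 > \hat\theta_{a^\star} - \eps/2 \geq \theta_{a^\star} - \eps$. Either way, some arm within $\eps$ of $a^\star$ lies in $T\cup R$, and the bound follows since $\theta_{(T\cup R)_{[1]}}\leq \theta_{a^\star}$.

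The heart of the proof is (b). I would define a good variance-estimation event $\calE_V$: every arm's output from $\VarEst$ satisfies the conclusion of Lemma~\ref{LEMMA:VARIANCE-EST:b} (when $\sigma_i^2>\eps$) or Lemma~\ref{LEMMA:VARIANCE-EST:c} (when $\sigma_i^2\leq\eps$). On $\calE_V$, every $i\in\hat B_j$ has $\sigma_i^2\in(2^{-j-1},2^{-j+3}]$, so the weights $\sigma_i^2+\eps$ within a single bucket differ by at most a factor of $16$. Since median elimination retains at most $\lceil n_j/2\rceil$ arms, even adversarial alignment of true weights with empirical-mean ranks gives
\begin{equation*}
\frac{\sum_{i\in T_j}(\sigma_i^2+\eps)}{\sum_{i\in\hat B_j}(\sigma_i^2+\eps)} \leq \frac{16\lceil n_j/2\rceil}{16\lceil n_j/2\rceil + \lfloor n_j/2\rfloor} \leq \frac{32}{33}.
\end{equation*}
Singleton buckets contribute nothing to $T$, so summing over all buckets gives $\sum_{a\in T}(\sigma_a^2+\eps) \leq (32/33)\sum_{a\in S}(\sigma_a^2+\eps) \leq (255/256)\sum_{a\in S}(\sigma_a^2+\eps)$, with slack to spare.

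The main obstacle is establishing that $\calE_V$ holds with sufficient probability. A naive union bound over $|S|$ arms gives failure probability $|S|\delta/(2N^2)$, which need not be small. To sharpen this, I would instead use the refined tail bounds of Lemma~\ref{LEMMA:VARIANCE-EST}(b)(c), which attach probability only $\delta'\cdot 2^{-20 r_m}$ to a mis-placement of logarithmic severity $r_m$. Classifying the mis-placed arms by severity $r$ and applying a Markov/Chernoff-style bound on each class yields a geometric series whose total weight contribution is a vanishing fraction of $\sum_{a\in S}(\sigma_a^2+\eps)$, comfortably absorbable into the slack between $32/33$ and $255/256$. Coordinating this multi-scale error budget with the mean-estimation failure event used in (c) so that all four conclusions hold simultaneously with probability at least $1-\delta$ is the most delicate task of the proof.
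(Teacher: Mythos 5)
Your overall architecture matches the paper's, and part (a) is the same one-line observation. You have also correctly located the crux of part (b): a plain union bound over $|S|$ variance estimates is unaffordable, so severity-dependent tail bounds are needed. But three of the four parts have concrete gaps. For (b), bounding the \emph{total weight} of the misplaced arms by a vanishing fraction of $\sum_{a\in S}(\sigma_a^2+\eps)$ is not sufficient. Your per-bucket ratio $16\lceil n_j/2\rceil/(16\lceil n_j/2\rceil+\lfloor n_j/2\rfloor)$ presupposes that every arm in $\hat{B}_j$ has weight within a factor of $16$ of the others, which is exactly what fails for the misplaced arms. The refined event only controls $|B_i\cap\hat{B}_j|$ relative to $|B_i|$, so a very large $B_i$ can still flood a small $\hat{B}_j$ with arms of negligible weight; median elimination may then discard only those contaminants and retain essentially all of the bucket's weight, and if this happens in every bucket no multiplicative progress is made even though the contaminants' total weight is tiny. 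The paper needs a second, per-$\hat{B}_j$ accounting: it declares $\hat{B}_j$ \emph{polluted} when $|B_i\cap\hat{B}_j|>|\hat{B}_j|\cdot 2^{-5|i-j|}$, shows (Lemmas~\ref{LEMMA:HE_SITUA3} and~\ref{LEMMA:HE_SITUA4}) that each polluted bucket carries at most a $\frac{1}{256N}$ fraction of the total weight and can be written off entirely, and runs your kind of ratio argument only on unpolluted buckets, where the contaminant fraction of $|\hat{B}_j|$ is at most $\sum_{|i-j|\geq 3}2^{-5|i-j|}\leq 1/1024$. Your sketch contains the first accounting but not the second, and the claim that the error is ``comfortably absorbable into the slack'' hides precisely this step.

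For (c) and (d), the confidence fed to each $\MeanEst$ call is $\delta/(9N)$, independent of $|S|$ --- this is the whole point of avoiding the $\ln|S|$ factor --- so your ``union bound over the $\MeanEst$ calls'' yields failure probability $|S|\delta/(9N)$, not $\delta/3$, and likewise ``summing over $i\in S$'' the per-arm high-probability sample bounds has no valid probability attached. The paper proves (c) by applying the median-elimination lemma of \cite{even2002pac} within each bucket (condition on the best arm's estimate being accurate, bound the \emph{expected} number of much-worse arms that empirically beat it, then Markov), incurring only $\delta/(3N)$ per bucket and union-bounding over the $N$ buckets; and it proves (d) via Lemma~\ref{lemma:n-mean-est-sample}, which combines the $\delta\cdot 2^{-20j}$ tails of Lemma~\ref{LEMMA:MEAN-EST-COROL} with Markov's inequality applied to the \emph{sum} of the sample counts. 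Both steps require an argument of this Markov-on-aggregates type; per-arm union bounds cannot close either one with the stated confidence parameters.
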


The proof of Theorem~\ref{thm:HE} is split into three subsections. The first claim is easy to verify and shown in the form of the short Lemma~\ref{LEMMA:HE-CARD-R}. In Section~\ref{sec:ub-E-event}, we define an event $\calE$ (Equation~\eqref{eq:main-alg-main-event}) concerning about the fraction of the arms put in wrong buckets, and use Lemma~\ref{LEMMA:HE-EVENT} to show that $\calE$ holds with high probability $1- \delta/3$. In Section~\ref{sec:mult-reduction-variances}, we prove Lemma~\ref{LEMMA:HE-VAR-CUT}, i.e., $\calE$ implies the second claim of the theorem. In Appendix~\ref{sec:sample-complexity-HE}, we prove Lemmas~\ref{lemma:HE-correctness} and \ref{lemma:HE-sample-comp}, showing that both the probabilities that the third and the fourth claims of the theorem hold are at least $1 - \delta /3$. Finally the theorem is proved by a straightforward union bound.

The following theorem shows the guarantee of $\ME$, and will be proved in Appendix~\ref{sec:proof-thm-ME}.

\begin{theorem} \label{thm:ME}
With probability at least $1 - \delta$, $\ME(S, \eps, \delta)$ outputs an arm set $T$ and has the following three guarantees,
\begin{thmlist}
\item $|T| = O( (\ln |S|)^2 \ln \eps^{-1}) $; \label{thm:ME:a}
\item $| \theta_{ T_{[1]} } - \theta_{ S_{[1]} } | \leq \eps$; \label{thm:ME:b}
\item uses  $O \left( \sum_{i \in S} \left( \frac{\sigma_i^2}{\eps^2} + \frac{1}{\eps} \right) ( \ln \delta^{-1} + \ln \ln \eps^{-1} )  \right)$ samples. \label{thm:ME:c}
\end{thmlist}
\end{theorem}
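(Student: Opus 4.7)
The plan is to analyze $\ME$ round-by-round, invoking Theorem~\ref{thm:HE} at each iteration and combining the per-iteration guarantees via a union bound. Let $\calE$ be the event that every call $\HE(T_r,\eps_r,\delta_r)$ inside $\ME$ succeeds (so all four conclusions of Theorem~\ref{thm:HE} hold for that call). Since the $\delta_r$ form a geometric series summing to $\delta$, a union bound gives $\Pr[\calE]\geq 1-\delta$, and I will verify all three claims conditional on $\calE$.

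For the output-size claim (a), I first show that $|T_{r+1}|\leq \frac{3}{4}|T_r|$ holds deterministically. If $s_r$ arms of $T_r$ are placed in singleton buckets by $\HE$, the remaining $|T_r|-s_r$ arms sit in at most $(|T_r|-s_r)/2$ non-singleton buckets; median elimination leaves at most $\lceil k/2\rceil$ survivors in a bucket of size $k$, and summing gives $|T_{r+1}|\leq \frac{3(|T_r|-s_r)}{4}\leq \frac{3}{4}|T_r|$. Hence $\ME$ halts after $K=O(\ln|S|)$ rounds with $|T_K|\leq 10$. Theorem~\ref{thm:HE:a} provides $|R^{r+1}|=O(\ln\eps_r^{-1})=O(r+\ln\eps^{-1})$; summing from $r=0$ to $K-1$ yields $|R_K|=O(K^2+K\ln\eps^{-1})$, which meets the bound on $|T|=|T_K\cup R_K|$ stated in claim (a).

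For correctness (claim (b)), let $a_r^*$ denote a best arm in $T_r\cup R_r$. I prove by induction that $\theta_{a_r^*}\geq \theta_{S_{[1]}}-\sum_{s<r}\eps_s$. The base case $r=0$ is immediate since $T_0\cup R_0=S$. For the inductive step, if $a_r^*\in R_r\subseteq R_{r+1}$ the arm is still available at the next round; otherwise $a_r^*=(T_r)_{[1]}$ and Theorem~\ref{thm:HE:c} applied to $\HE(T_r,\eps_r,\delta_r)$ produces an arm of $T_{r+1}\cup R^{r+1}\subseteq T_{r+1}\cup R_{r+1}$ with mean at least $\theta_{a_r^*}-\eps_r$. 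Since $\sum_{r\geq 0}\eps_r=(1-\beta)\eps\sum_{r\geq 0}\beta^r=\eps$, the telescoped bound yields $\theta_{T_{[1]}}\geq \theta_{S_{[1]}}-\eps$.

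The main obstacle is the sample-complexity claim (c). Define $W_r=\sum_{i\in T_r}(\sigma_i^2+\eps_r)$. Theorem~\ref{thm:HE:b} together with $\eps_{r+1}\leq\eps_r$ yields $W_{r+1}\leq (255/256)W_r$, so $W_r\leq(255/256)^r W_0$ with $W_0\leq\sum_{i\in S}(\sigma_i^2+\eps)$. By Theorem~\ref{thm:HE:d}, round $r$ uses $O((W_r/\eps_r^2)(\ln\delta_r^{-1}+\ln\ln\eps_r^{-1}))$ samples. The delicate point is that $1/\eps_r^2$ grows geometrically as $\beta^{-2r}$, which must be dominated by the decay of $W_r$; this is precisely the role of the $e^{0.001}$ factor hidden in $\beta$, giving $\beta^2=(255/256)e^{0.002}$ and hence $W_r/\eps_r^2\leq (e^{-0.002})^r\cdot W_0/((1-\beta)^2\eps^2)$. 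Combining this with $\ln\delta_r^{-1}=O(r+\ln\delta^{-1})$ and $\ln\ln\eps_r^{-1}=O(\ln r+\ln\ln\eps^{-1})$, the sum $\sum_r e^{-0.002r}(\ln\delta_r^{-1}+\ln\ln\eps_r^{-1})$ converges to $O(\ln\delta^{-1}+\ln\ln\eps^{-1})$. Multiplying by $W_0/\eps^2$ and rewriting $(\sigma_i^2+\eps)/\eps^2=\sigma_i^2/\eps^2+1/\eps$ produces the stated bound.
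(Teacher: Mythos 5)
Your proposal is correct and follows essentially the same route as the paper's proof: a union bound over the geometrically decaying $\delta_r$ to condition on every $\HE$ call satisfying Theorem~\ref{thm:HE}, a constant-factor cardinality reduction per round to bound the number of rounds and hence $|T_{r_0}\cup R_{r_0}|$, a telescoping argument with $\sum_r \eps_r = \eps$ for correctness, and the observation that $(255/256)/\beta^2 = e^{-0.002} < 1$ makes the per-round costs a convergent geometric series. The only cosmetic differences are your $3/4$ shrinkage factor (the paper gets $2/3$ directly from Lemma~\ref{LEMMA:HE-CARD-CUT}) and your case analysis on whether the running best arm lies in $R_r$ or $T_r$, where the paper compares the maxima of $T_r\cup R_r$ and $T_{r+1}\cup R_{r+1}$ directly.
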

Finally, with the help of Theorems~\ref{thm:HE} and \ref{thm:ME}, we prove the main theorem on $\OptArmEst$ in Section~\ref{sec:proof-thm-optarmest}.

\subsection{Upper Bounds on Fraction of Arms in Wrong Buckets}\label{sec:ub-E-event}

For notational convenience, for each $\hB_j$ ($j = 1, 2, \dots, N$), we set $l(\hB_j) = 2^{-j}$ and $u(\hB_j) = 2^{-j + 1}$ as the lower and upper bounds on the estimated reward variances of the arms in $\hB_j$. We also introduce the ``ideal'' partition $B_j = \{ i \in S \mid 2^{- j } < \sigma_i^2 \leq 2^{-j + 1} \}$ for $j = 1, 2, \dots, N - 1$ and $B_N = \{ i \in S \mid 0 \leq \sigma_i^2 \leq 2^{- N + 1} \}$. Similarly, we set $l(B_j) = 2^{-j}$ for $j = 1, 2, \dots N - 1$ and $u(B_j) = 2^{-j + 1}$ for $j = 1, 2, \dots, N$, with the exception that $l(B_N) = 0$.

Now we list the following simple facts about the procedure $\HE$.
\begin{lemma} \label{LEMMA:PARTITION-OF-S}
$\{\hat{B}_1, \hat{B}_2, \dots, \hat{B}_N\}$ is a partition of $S$.
\end{lemma}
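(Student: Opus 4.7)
I need to verify the two defining properties of a partition: (i) the buckets $\hat{B}_1, \dots, \hat{B}_N$ are pairwise disjoint, and (ii) every arm in $S$ lies in some bucket.

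Disjointness will be immediate from the construction. Each bucket $\hat{B}_j$ collects arms whose estimated variance $\hat{\sigma}_i^2$ lies in the half-open interval $(2^{-j}, 2^{-j+1}]$, and these intervals are pairwise disjoint as $j$ ranges over $[N]$, so any single value of $\hat{\sigma}_i^2$ determines at most one bucket.

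The interesting direction is coverage, which I plan to obtain by exploiting two structural features of $\VarEst(i, \delta/(2N^2), \eps)$ as invoked on line~\ref{alg:HE-line4} of $\HE$. First, inspection of Algorithm~\ref{alg:variance-est} shows that every possible returned value is of the form $\hat{\sigma}_i^2 = 2^{-r}$ for some integer $r \geq 1$, since $\tau_r$ is only ever set to $1/2^r$ for $r \geq 1$. Second, Lemma~\ref{LEMMA:VARIANCE-EST:a} guarantees (deterministically) that the returned value satisfies $\hat{\sigma}_i^2 > \ell/2$, where the precision parameter passed by $\HE$ is $\ell = \eps$. Combining $2^{-r} > \eps/2$ with the definition $N = \lceil \log_2(2/\eps) \rceil$ yields $r < \log_2(2/\eps) \leq N$, forcing $r \in \{1, \dots, N-1\}$. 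Then the identity $\hat{\sigma}_i^2 = 2^{-r} \in (2^{-(r+1)}, 2^{-r}]$ places arm $i$ in $\hat{B}_{r+1}$ with a bucket index in $\{2, \dots, N\} \subseteq [N]$.

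The only subtlety worth flagging is the boundary convention: since $\VarEst$'s output sits exactly on a dyadic grid point, one must be careful to assign it to the bucket whose upper (rather than lower) endpoint coincides with that point, which is why arms land in $\hat{B}_{r+1}$ rather than $\hat{B}_r$. Once this is straightened out, the argument is bookkeeping and I do not anticipate any real obstacle.
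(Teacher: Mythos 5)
Your proof is correct and follows essentially the same route as the paper's: both arguments reduce to the deterministic guarantee $\hat{\sigma}_i^2 > \eps/2$ from Lemma~\ref{LEMMA:VARIANCE-EST:a} together with $2^{-N} \leq \eps/2$ (the paper phrases this as $l(\hat{B}_N) \in (\eps/4,\eps/2]$, while you additionally note that the output lies on the dyadic grid, which pins down the exact bucket $\hat{B}_{r+1}$). The boundary convention you flag is handled correctly, and disjointness is trivial in both versions.
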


\begin{lemma} \label{LEMMA:HE-CARD-CUT}
If $| \hat{B}_j | \geq 2$, there is $|T_j| \leq \frac{2}{3} | \hat{B}_j |$.
\end{lemma}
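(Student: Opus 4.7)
The plan is to recognize that Lemma~\ref{LEMMA:HE-CARD-CUT} is a purely combinatorial fact about medians and does not involve any probabilistic reasoning. Let $k = |\hat B_j|$. By construction, the set $T_j$ consists exactly of those arms in $\hat B_j$ whose empirical mean is at least the median $\hat m_j$, so $|T_j|$ equals $k$ minus the number of arms strictly below $\hat m_j$. Using the standard convention that the median is taken to be the $\lceil k/2 \rceil$-th order statistic (or that ties at the median value are broken consistently so that exactly $\lfloor k/2 \rfloor$ arms lie strictly below it), I obtain $|T_j| \le \lceil k/2 \rceil$. This bound is what makes the argument self-contained regardless of whether empirical means happen to collide, e.g.\ for Bernoulli arms — any reasonable fixed tie-breaking rule in the algorithm guarantees the same upper bound.

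Given $|T_j| \le \lceil k/2 \rceil$, the remaining work is a one-line case analysis showing $\lceil k/2 \rceil \le \tfrac{2}{3} k$ whenever $k \ge 2$. If $k = 2m$ is even, then $\lceil k/2 \rceil = m$ and $\tfrac{2}{3}k = \tfrac{4m}{3} \ge m$ for every $m \ge 1$. If $k = 2m+1$ is odd with $m \ge 1$, then $\lceil k/2 \rceil = m+1$ and $\tfrac{2}{3}k = \tfrac{4m+2}{3} \ge m+1 \iff 4m+2 \ge 3m+3 \iff m \ge 1$. The boundary case $k = 2$ is handled directly: $|T_j| \le 1 \le 4/3 = \tfrac{2}{3}|\hat B_j|$. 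Combining gives the claim.

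There is essentially no obstacle here; the lemma serves as a simple structural ingredient that will later be iterated (over the buckets $\hat B_1, \dots, \hat B_N$) to produce the multiplicative variance-reduction statement in Lemma~\ref{LEMMA:HE-VAR-CUT}. The only thing to be careful about is that the bound $\lceil k/2 \rceil \le \tfrac{2}{3}k$ is tight at $k = 3$, so the constant $\tfrac{2}{3}$ in the statement cannot be improved without stronger hypotheses on $|\hat B_j|$; this is why the $\ME$ procedure will ultimately only guarantee a constant-factor shrinkage per iteration rather than a larger one, and why the subsequent analysis needs to track variances rather than counts.
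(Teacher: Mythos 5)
Your proof is correct and follows essentially the same route as the paper's: at least (roughly) half of the bucket falls strictly below the median, and the remaining $\lceil k/2 \rceil$ arms satisfy $\lceil k/2 \rceil \le \tfrac{2}{3}k$ for all $k \ge 2$, with tightness at $k=3$. The only difference is that you explicitly flag the tie-breaking issue for coinciding empirical means, which the paper's proof silently glosses over --- a point of added care rather than a gap.
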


\begin{lemma} \label{LEMMA:HE-CARD-R}
$|R| = O( \ln \eps^{-1})$. 
\end{lemma}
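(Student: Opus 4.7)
The plan is to observe that the recycle bin $R$ is populated exclusively inside the else-branch of the loop in Algorithm~\ref{alg:HE}, which triggers only when $|\hat{B}_j| < 2$, i.e., when the bucket $\hat{B}_j$ contains at most one arm. Therefore each bucket index $j \in [N]$ contributes at most one arm to $R$.

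Next, I would bound the number of buckets. By construction, $N = \lceil \log_2(2/\eps) \rceil$, which is $O(\ln \eps^{-1})$. Putting the two observations together yields
\begin{equation*}
|R| \;\leq\; \sum_{j = 1}^{N} \mathds{1}[|\hat{B}_j| < 2] \;\leq\; N \;=\; \lceil \log_2(2/\eps) \rceil \;=\; O(\ln \eps^{-1}),
\end{equation*}
which is the claimed bound.

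There is no real obstacle here: the statement is a purely combinatorial, deterministic consequence of the bucketing rule and the fixed number of buckets, so no concentration argument or high-probability event is needed. The only minor item worth remarking on is that the bound holds for every execution of $\HE$, independent of the accuracy of the variance estimates from line~\ref{alg:HE-line4}; the quantity $|R|$ is determined by the partition structure alone.
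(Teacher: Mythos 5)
Your proof is correct and matches the paper's own (one-line) argument: each bucket contributes at most one arm to $R$, and there are only $N = \lceil \log_2(2/\eps) \rceil = O(\ln \eps^{-1})$ buckets. Your added remark that the bound is deterministic and independent of the variance-estimation accuracy is accurate and consistent with how Theorem~\ref{thm:HE:a} is used in the paper.
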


We define $\mathcal{E}$ to be the event
\begin{multline} \label{eq:main-alg-main-event}
 \Big\{ | B_i \cap \hat{B}_j | \\ < |B_i| \cdot 2^{- 10|i - j|} \cdot N^{-1} \textrm{~for~} \forall |i - j | \geq 3 \Big\}.
\end{multline}
In words, it means that the fraction of the arms that are empirically put in a wrong bucket becomes exponentially small as the error distance increases. We now show such an event happens with high probability, which is the main statement of this subsection.

\begin{lemma} \label{LEMMA:HE-EVENT}
$\Pr[ \mathcal{E} ] \geq 1 - \delta / 3$.
\end{lemma}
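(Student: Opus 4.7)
The plan is to bound, for each ordered pair $(i,j) \in [N]^2$ with $|i-j| \geq 3$, the probability that $\hat B_j$ contains too many arms from $B_i$, and then take a union bound over the $O(N^2)$ such pairs. The key ingredient is the sharp tail estimate for $\VarEst$ in Lemma~\ref{LEMMA:VARIANCE-EST}.

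First I would bound the probability that a single arm $a \in B_i$ lands in $\hat B_j$. Fix $i < N$ so that $\sigma_a^2 \in (2^{-i}, 2^{-i+1}]$. If $j \leq i - 3$, the event $a \in \hat B_j$ forces $\hat\sigma_a^2 > 2^{-j} \geq 2^{-i+3} \geq 4\sigma_a^2$, so Lemma~\ref{LEMMA:VARIANCE-EST:b} (invoked, as in line~\ref{alg:HE-line4}, with confidence $\delta/(2N^2)$) yields $\Pr[a \in \hat B_j] \leq \frac{\delta}{2N^2} \cdot 2^{-20(|i-j|-1)}$. Symmetrically, if $j \geq i + 3$, then $a \in \hat B_j$ forces $\hat\sigma_a^2 \leq 2^{-j+1} < \sigma_a^2/4$, and the lower-tail half of Lemma~\ref{LEMMA:VARIANCE-EST:b} delivers the same estimate. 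The remaining case $a \in B_N$, where $\sigma_a^2$ can be smaller than the precision parameter $\ell = \eps$ supplied to $\VarEst$, is handled analogously via Lemma~\ref{LEMMA:VARIANCE-EST:c}, exploiting that $\ell$ and $2^{-N+1}$ are on the same order.

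Summing the single-arm bound over the (conditionally independent) arms in $B_i$ gives
\begin{equation*}
\E\bigl[|B_i \cap \hat B_j|\bigr] \leq |B_i| \cdot \frac{C\delta}{N^2} \cdot 2^{-20|i-j|},
\end{equation*}
for an absolute constant $C$ absorbing the $2^{20}$ factor. Markov's inequality with threshold $|B_i| \cdot 2^{-10|i-j|}/N$ then bounds the per-pair failure probability by $\frac{C\delta}{N} \cdot 2^{-10|i-j|}$. A union bound over all pairs $(i,j)$ with $|i-j| \geq 3$ gives
\begin{equation*}
\sum_{i=1}^N \sum_{j:\,|i-j|\geq 3} \frac{C\delta}{N} \cdot 2^{-10|i-j|} \leq 2C\delta \sum_{k \geq 3} 2^{-10k} \leq \delta/3,
\end{equation*}
as desired.

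The main obstacle is ensuring the single-arm tail is strong enough to survive both the factor $|B_i|$ (at most $|S|$) introduced by Markov's inequality and the $N^2$ pairs in the union bound. This is exactly what the $2^{-20 r_m}$ decay of Lemma~\ref{LEMMA:VARIANCE-EST} and the choice of confidence parameter $\delta/(2N^2)$ inside $\HE$ are designed for: the exponent $20$ comfortably dominates the exponent $10$ appearing in the definition of $\mathcal E$, leaving headroom for the polynomial-in-$N$ losses. A minor technical point is that the discrete output grid of $\VarEst$ must be reconciled with the threshold $x$ used in the tail bound; rounding $x$ to the nearest grid point changes $r_m$ by at most one, which is absorbed into the constant $C$.
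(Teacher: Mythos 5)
Your proof follows essentially the same route as the paper's: a per-arm tail bound from Lemma~\ref{LEMMA:VARIANCE-EST} with confidence $\delta/(2N^2)$, Markov's inequality applied to $|B_i \cap \hat{B}_j| = \sum_{a \in B_i} Z_{ij}^a$ against the threshold $|B_i| \cdot 2^{-10|i-j|} \cdot N^{-1}$, and a union bound over the $O(N^2)$ pairs, with the exponent-$20$ tail versus the exponent-$10$ threshold providing exactly the slack both arguments exploit (independence of the $Z_{ij}^a$ is never needed, only linearity of expectation). The one nit is your case split: arms in $B_{N-1}$, not just $B_N$, can have $\sigma_a^2 \leq \ell = \eps$, so Lemma~\ref{LEMMA:VARIANCE-EST:b} should be invoked only for $i \leq N-2$ and Lemma~\ref{LEMMA:VARIANCE-EST:c} for $i \geq N-1$ --- which is how the paper splits the cases, and which your treatment of $B_N$ already covers verbatim since $\sigma_a^2 \leq 2\ell$ throughout $B_{N-1} \cup B_N$.
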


\subsection{Procedure $\HE$: Multiplicative Reduction of the Total Variances}\label{sec:mult-reduction-variances}

We say that $B_i$ \textit{pollutes} $\hB_j$  (or  $\hB_j$ is \textit{polluted} by $B_i$) if and only if $| B_i \cap \hat{B}_j | > | \hat{B}_j | \cdot 2^{-5|i - j|}$. Intuitively, this means that too many arms (those are supposed to be in $B_i$) are incorrectly put in $\hB_j$. Note that the definition of ``too many'' is in terms of the fraction compared to $|\hB_j|$  rather than $|B_i|$ as defined in the event $\calE$. If $\hat{B}_j$ is polluted by some $B_i$ where $|i - j| \geq 3$, we say that $\hat{B}_j$ is \emph{bad}. Otherwise, we say that $\hat{B}_j$ is \emph{good}.

The following lemma shows that for a good bucket $\hB_j$, as long as it is not the last three buckets, the arms discarded from the bucket aggregate a constant fraction of variances.

\begin{lemma} \label{LEMMA:HE-SITUA1}
Given that $j \leq N - 3$, if $|\hat{B}_j| \geq 2$ and $\hat{B}_j$ is good, there is $\sum_{a \in T_j} \sigma_a^2 \leq \frac{127}{128} \cdot \sum_{a \in \hat{B}_j} \sigma_a^2$.
\end{lemma}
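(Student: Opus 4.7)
The plan is to analyze the discarded set $D_j = \hat{B}_j \setminus T_j$, which by Lemma~\ref{LEMMA:HE-CARD-CUT} satisfies $|D_j| \geq |\hat{B}_j|/3$. Since $\sum_{a \in T_j}\sigma_a^2 = \sum_{a \in \hat{B}_j}\sigma_a^2 - \sum_{a \in D_j}\sigma_a^2$, it suffices to show $\sum_{a \in D_j}\sigma_a^2 \geq (1/128)\sum_{a \in \hat{B}_j}\sigma_a^2$. My approach is to classify every arm $a \in \hat{B}_j$ according to its true bucket $B_i$: call it \emph{close} if $|i-j| \leq 2$ and \emph{far} if $|i-j| \geq 3$. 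The goodness assumption $|B_i \cap \hat{B}_j| \leq |\hat{B}_j|\cdot 2^{-5|i-j|}$ for $|i-j| \geq 3$, summed as a geometric series, bounds the total count of far arms in $\hat{B}_j$ by at most $|\hat{B}_j|/2^{13}$. The hypothesis $j \leq N-3$ is precisely what guarantees that the exceptional bucket $B_N$ (whose arms may have variance arbitrarily close to $0$) is always classified as far, so that every close arm admits a genuine lower bound on its variance.

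I would then bound each side of the target inequality separately. For the upper bound, each close arm satisfies $\sigma_a^2 \leq 2^{-i+1} \leq 2^{-j+3} = 8 \cdot 2^{-j}$ (worst case $i = j-2$), and a weighted geometric sum over the far buckets (weighting $|B_i\cap\hat{B}_j|$ by $2^{-i+1}$) shows that they contribute only $O(|\hat{B}_j|\cdot 2^{-j-11})$ in aggregate; hence $\sum_{a \in \hat{B}_j}\sigma_a^2 \leq 8|\hat{B}_j|\cdot 2^{-j}$ up to negligible terms. For the lower bound, since there are at most $|\hat{B}_j|/2^{13}$ far arms anywhere in $\hat{B}_j$, at least $|\hat{B}_j|/3 - |\hat{B}_j|/2^{13}$ of the arms in $D_j$ must be close; each such arm satisfies $\sigma_a^2 > 2^{-(j+2)} = 2^{-j}/4$ (worst case $i = j+2$, which is in $B_{j+2}$ and hence legitimately bounded below since $j+2 \leq N-1$). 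This yields $\sum_{a \in D_j}\sigma_a^2 \geq (|\hat{B}_j|/3 - |\hat{B}_j|/2^{13})\cdot 2^{-j-2}$.

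Dividing, the ratio $\sum_{a \in D_j}\sigma_a^2 / \sum_{a \in \hat{B}_j}\sigma_a^2$ is at least $1/(3 \cdot 4 \cdot 8) = 1/96$ once the lower-order $2^{-13}$ and $2^{-j-11}$ terms are absorbed, which is comfortably greater than the required $1/128$. Thus $\sum_{a \in T_j}\sigma_a^2 \leq (1 - 1/96)\sum_{a \in \hat{B}_j}\sigma_a^2 \leq (127/128)\sum_{a \in \hat{B}_j}\sigma_a^2$. The main obstacle is purely arithmetic bookkeeping: the target constant is essentially forced by the combination of the $1/3$ fraction from median elimination, the factor-$8$ range between the largest and smallest $\sigma_a^2$ among close arms (both within $\hat{B}_j$ and within $D_j$), so both geometric series over $|i-j|$ must be carried out carefully enough to absorb the far-arm pollution into the stated slack. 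The structural content is simply that goodness makes far arms an exponentially small perturbation of both the numerator and the denominator.
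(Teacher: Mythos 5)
Your proposal is correct and follows essentially the same route as the paper's proof: the close/far split by true bucket is exactly the paper's decomposition of $\hat{B}_j$ into $n(\hat{B}_j)$ and $o(\hat{B}_j)$, the geometric-series bounds from goodness, the $1/3$ discard fraction from Lemma~\ref{LEMMA:HE-CARD-CUT}, the factor-$2^5$ variance spread among close arms, and the use of $j \leq N-3$ to guarantee a genuine lower bound $l(B_{j+2}) = 2^{-j-2}$ all match. The only difference is cosmetic bookkeeping (absolute per-arm bounds giving $1/96$ versus the paper's relative-sum argument giving $2^{-7}$), and both comfortably meet the stated $\frac{127}{128}$ constant.
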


\begin{corollary} \label{COROL:HE-SITUA1}
Given that $j \leq N - 3$, if $|\hat{B}_j| \geq 2$ and $\hat{B}_j$ is good, there is $\sum_{a \in T_j} ( \sigma_a^2 + \eps ) \leq \frac{127}{128} \cdot \sum_{a \in \hat{B}_j} ( \sigma_a^2 + \eps )$.
\end{corollary}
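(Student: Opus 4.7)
The plan is to derive the corollary as a direct consequence of Lemma~\ref{LEMMA:HE-SITUA1} combined with the cardinality bound of Lemma~\ref{LEMMA:HE-CARD-CUT}. Writing $\sum_{a \in T_j} (\sigma_a^2 + \eps) = \sum_{a \in T_j} \sigma_a^2 + |T_j|\,\eps$, the variance part is already controlled by Lemma~\ref{LEMMA:HE-SITUA1}, which under the hypotheses $j \leq N-3$, $|\hat{B}_j| \geq 2$, and $\hat{B}_j$ good gives $\sum_{a \in T_j} \sigma_a^2 \leq \tfrac{127}{128} \sum_{a \in \hat{B}_j} \sigma_a^2$. What remains is to verify that the additive $\eps$ term, summed over the retained arms, is also dominated by the same factor of the corresponding sum over $\hat{B}_j$.

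For this I would invoke Lemma~\ref{LEMMA:HE-CARD-CUT}, which under the hypothesis $|\hat{B}_j| \geq 2$ yields $|T_j| \leq \tfrac{2}{3} |\hat{B}_j|$. Since $\tfrac{2}{3} \leq \tfrac{127}{128}$, this immediately implies $|T_j|\,\eps \leq \tfrac{127}{128} |\hat{B}_j|\,\eps$. Adding the two inequalities term-by-term gives
\[
\sum_{a \in T_j} (\sigma_a^2 + \eps) \;\leq\; \tfrac{127}{128}\sum_{a \in \hat{B}_j} \sigma_a^2 + \tfrac{127}{128}|\hat{B}_j|\,\eps \;=\; \tfrac{127}{128}\sum_{a \in \hat{B}_j} (\sigma_a^2 + \eps),
\]
which is exactly the claim. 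There is no real obstacle here: the corollary is a clean additive extension of the lemma, and its only purpose is to package the variance-reduction statement into a form involving $(\sigma_a^2 + \eps)$, which is the quantity that appears in Theorem~\ref{thm:HE:b} and will be needed when summing over all buckets in the proof of that theorem.
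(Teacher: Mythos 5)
Your proof is correct and uses exactly the same ingredients as the paper's: Lemma~\ref{LEMMA:HE-SITUA1} for the variance part and Lemma~\ref{LEMMA:HE-CARD-CUT} for the cardinality part. The paper packages the combination via the mediant bound $\frac{a+b}{c+d}\leq\max\{\frac{a}{c},\frac{b}{d}\}$ rather than adding the two inequalities term-by-term, but this is the same argument.
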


We now prove a similar statement as Corollary~\ref{COROL:HE-SITUA1}, but for the last three buckets.

\begin{lemma}  \label{LEMMA:HE-SITUA2}
Given that $j \geq N - 2$, if $|\hat{B}_j| \geq 2$ and $\hat{B}_j$ is good, there is $\sum_{a \in T_j} ( \sigma_a^2 + \eps ) \leq \frac{127}{128} \cdot \sum_{a \in \hat{B}_j} ( \sigma_a^2 + \eps ) $.
\end{lemma}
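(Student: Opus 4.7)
The plan is to exploit the fact that for $j\ge N-2$ the true reward variance of every arm in $\hat B_j$ is forced to be $O(\eps)$, so the additive $\eps$ in $\sigma_a^2+\eps$ dominates. Unlike Lemma~\ref{LEMMA:HE-SITUA1}, where the reduction comes from median elimination stripping off the high-variance half of $\hat B_j$, here I would argue that $\sum_{a\in\hat B_j}\sigma_a^2$ is already bounded by a constant multiple of $\eps|\hat B_j|$, and then use only the cardinality bound $|T_j|\le(2/3)|\hat B_j|$ from Lemma~\ref{LEMMA:HE-CARD-CUT} applied to the dominating $\eps$ term.

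First I would observe that $N=\lceil\log_2(2/\eps)\rceil\ge\log_2(2/\eps)$ implies $2^{-j+1}\le 2^{3-N}\le 4\eps$ whenever $j\ge N-2$. Decomposing $\hat B_j$ by the true bucket $B_i$ each arm belongs to, I would split the arms into \emph{well-placed} ($|i-j|\le 2$) and \emph{polluting} ($|i-j|\ge 3$) parts. A well-placed arm satisfies $\sigma_a^2\le 2^{-i+1}\le 2^{3-j}\le 16\eps$, so these contribute at most $16\eps|\hat B_j|$ to $\sum_{a\in\hat B_j}\sigma_a^2$. The goodness hypothesis yields $|B_i\cap\hat B_j|\le|\hat B_j|\cdot 2^{-5|i-j|}$ for $|i-j|\ge 3$; bounding each such variance by $2^{-i+1}$ and summing over $k=|i-j|\ge 3$ in both directions gives
\begin{align*}
\sum_{|i-j|\ge 3}|B_i\cap\hat B_j|\cdot 2^{-i+1}\le |\hat B_j|\cdot 2^{-j+1}\sum_{k\ge 3}(2^{-4k}+2^{-6k}),
\end{align*}
which is $O(2^{-11}\eps|\hat B_j|)$ and therefore negligible. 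Combining, $X:=\sum_{a\in\hat B_j}\sigma_a^2\le 17\eps|\hat B_j|$.

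Setting $Y:=\eps|\hat B_j|$ and using $|T_j|\le(2/3)|\hat B_j|$, the desired ratio satisfies
\begin{align*}
\frac{\sum_{a\in T_j}(\sigma_a^2+\eps)}{\sum_{a\in\hat B_j}(\sigma_a^2+\eps)}\le\frac{X+(2/3)Y}{X+Y}.
\end{align*}
The right-hand side is monotonically increasing in $X$ (its derivative with respect to $X$ equals $(Y/3)/(X+Y)^2>0$), so substituting $X\le 17Y$ yields $(17+2/3)/18=53/54$, which is strictly smaller than $127/128$, as required.

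The main obstacle is the corner cases with $j\le 2$ (equivalently, $N$ very small), where $B_{j-2}$ may not be defined and the well-placed bound $2^{3-j}$ is vacuous. In these cases, however, $\eps\ge 2^{1-N}$ is bounded below by a constant, so the trivial estimate $\sigma_a^2\le 1$ is itself $O(\eps)$, and the same chain of inequalities still goes through with slightly worse but harmless constants.
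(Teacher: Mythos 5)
Your proof is correct and follows essentially the same route as the paper's: both decompose $\hat B_j$ into well-placed arms ($|i-j|\le 2$, each with variance at most $16\eps$) and polluting arms (controlled via the goodness bound $|B_i\cap\hat B_j|\le|\hat B_j|\cdot 2^{-5|i-j|}$) to establish $\sum_{a\in\hat B_j}\sigma_a^2\le 17\eps|\hat B_j|$, and then conclude with the cardinality bound $|T_j|\le\tfrac{2}{3}|\hat B_j|$ through the same monotone-ratio substitution, arriving at $53/54\le 127/128$.
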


The following two lemmas control the total reward variances of the arms in a polluted bucket.

\begin{lemma} \label{LEMMA:HE_SITUA3}
Conditioning on $\mathcal{E}$, if $ \hat{B}_j $ is polluted by some $B_i$ where $i \leq N - 1$ and $|i - j| \geq 3$, we have $\sum_{ a \in \hat{B}_j } \sigma_a^2 \leq N^{-1} \cdot \sum_{ a \in S } \sigma_a^2 \cdot \frac{1}{256}$.
\end{lemma}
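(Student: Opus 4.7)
The plan is to extract, from the pollution condition together with the event $\mathcal{E}$, a strong upper bound on $|\hat{B}_j|$ itself, and then decompose $\sum_{a \in \hat{B}_j}\sigma_a^2$ according to the \emph{true} variance bucket each arm belongs to, bounding each piece either by that smallness of $|\hat{B}_j|$ (for buckets close to index $j$) or directly by $\mathcal{E}$ (for buckets far from $j$).

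Concretely, the pollution assumption gives $|B_i \cap \hat{B}_j| > |\hat{B}_j| \cdot 2^{-5|i-j|}$, while $\mathcal{E}$ gives the upper bound $|B_i \cap \hat{B}_j| < |B_i| \cdot 2^{-10|i-j|} \cdot N^{-1}$. Combining these yields $|\hat{B}_j| < |B_i| \cdot 2^{-5|i-j|} \cdot N^{-1}$. The hypothesis $i \leq N-1$ is used here in a crucial way: every arm $a \in B_i$ satisfies $\sigma_a^2 > 2^{-i}$, so $|B_i| \leq 2^i \sum_{a \in B_i} \sigma_a^2 \leq 2^i \sum_{a \in S} \sigma_a^2$, which refines the above to $|\hat{B}_j| < 2^{i - 5|i-j|} \cdot N^{-1} \sum_{a \in S}\sigma_a^2$.

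Next, I would write $\sum_{a \in \hat{B}_j} \sigma_a^2 \leq \sum_{k=1}^{N} |\hat{B}_j \cap B_k| \cdot u(B_k)$ and split the sum according to $k$: (a) for $|k-j| < 3$, I use $u(B_k) \leq 2^{-j+3}$ together with $\sum_{k : |k-j|<3} |\hat{B}_j \cap B_k| \leq |\hat{B}_j|$ and plug in the bound on $|\hat{B}_j|$; checking the exponent $i-j+3-5|i-j|$ in both cases $i\geq j$ and $i<j$ shows it is at most $-9$; (b) for $|k-j| \geq 3$ and $k \leq N-1$, I apply $\mathcal{E}$ directly and use $|B_k| \cdot 2^{-k+1} \leq 2\sum_{a \in B_k}\sigma_a^2$; summing over the disjoint $B_k$'s yields a contribution bounded by $2^{-29} N^{-1}\sum_{a\in S}\sigma_a^2$; (c) for the remaining case $k = N$, I fall back on $|\hat{B}_j \cap B_N| \leq |\hat{B}_j|$ together with $u(B_N) = 2^{-N+1}$ and the smallness bound on $|\hat{B}_j|$. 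Adding the three contributions gives $(2^{-9}+2^{-29}+2^{-15}) N^{-1}\sum_{a\in S}\sigma_a^2 < \tfrac{1}{256}\,N^{-1}\sum_{a\in S}\sigma_a^2$, as desired.

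The main obstacle I anticipate is the handling of the last bucket $B_N$: arms in $B_N$ may have variance as small as $0$, so the bound $|B_N| \cdot 2^{-N+1} \lesssim \sum_{a \in B_N}\sigma_a^2$ is not available, and naively $|B_N|\cdot 2^{-N+1}$ could be comparable to $|S|\cdot\epsilon$, which is not controlled by $\sum_{a\in S}\sigma_a^2$. The assumption $i \leq N-1$ is exactly what lets us route this case through the smallness of $|\hat{B}_j|$ rather than through $\mathcal{E}$ applied to $B_N$; the tight exponent $i - N + 1 - 5|i-j| \leq -15$ then absorbs the $2^{-N+1}$ factor. The remaining checks are routine arithmetic on exponents and keeping the three case-exponents carefully bookkept so that their sum is strictly below $2^{-8}$.
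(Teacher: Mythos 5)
Your proof is correct and follows essentially the same strategy as the paper's: combine the pollution lower bound $|B_i \cap \hat{B}_j| > |\hat{B}_j|\,2^{-5|i-j|}$ with the event $\mathcal{E}$ to get $|\hat{B}_j| < |B_i|\,2^{-5|i-j|}N^{-1}$, convert $|B_i|$ into $2^i\sum_{a\in B_i}\sigma_a^2$ using $i \le N-1$, bound the near buckets ($|k-j|<3$, the paper's $n(\hat{B}_j)$) through the smallness of $|\hat{B}_j|$, and bound the far buckets through $\mathcal{E}$ directly. Your exponent bookkeeping ($-9$ for the near part, $-29$ for the far part) matches the paper's. The one place where you genuinely depart from the paper is the last bucket: the paper bounds the far contribution by $\max_{i,|i-j|\ge 3} \frac{|B_i\cap\hat{B}_j|\,u(B_i)}{|B_i|\,l(B_i)}$, a ratio that is vacuous for $i=N$ since $l(B_N)=0$ and arms in $B_N$ may have arbitrarily small variance; you correctly identified this as the obstacle and routed the $k=N$ term through $|\hat{B}_j\cap B_N|\le|\hat{B}_j|$ and $u(B_N)=2^{-N+1}$ instead, using $i\le N-1$ to get the exponent $i-N+1-5|i-j|\le -15$. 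This is a cleaner and strictly more careful treatment than the paper's, at the cost of a third case in the final sum $2^{-9}+2^{-29}+2^{-15}<2^{-8}$, which still clears the target constant.
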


\begin{lemma} \label{LEMMA:HE_SITUA4}
Conditioning on $\mathcal{E}$, if $ \hat{B}_j $ is only polluted by $B_N$ where $|N - j| \geq 3$, we have $\sum_{ a \in \hat{B}_j } \sigma_a^2 \leq N^{-1} \cdot \sum_{ a \in S }  (\sigma_a^2 + \eps)  \cdot \frac{1}{1024}$.
\end{lemma}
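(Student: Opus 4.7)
The plan is to use the pollution hypothesis and event $\mathcal{E}$ to bound $|\hat{B}_j|$ itself, and then decompose $\sum_{a \in \hat{B}_j}\sigma_a^2$ according to the true source bucket $B_i$.

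Writing $k = |N-j| \geq 3$, the hypothesis that $B_N$ pollutes $\hat{B}_j$ gives $|B_N \cap \hat{B}_j| > |\hat{B}_j| \cdot 2^{-5k}$, while $\mathcal{E}$ supplies $|B_N \cap \hat{B}_j| < |B_N| \cdot 2^{-10k}/N$. Chaining these produces the key size bound
\[
|\hat{B}_j| < |B_N| \cdot 2^{-5k}/N.
\]

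I then split $\sum_{a \in \hat{B}_j}\sigma_a^2 = \sum_{i=1}^{N} \sum_{a \in B_i \cap \hat{B}_j}\sigma_a^2$ into three groups according to $i$. For nearby indices $|i-j|<3$ with $i<N$, the definition of $B_i$ gives $\sigma_a^2 \leq 2^{-i+1} \leq 2^{-j+3}$, so the contribution is at most $|\hat{B}_j| \cdot 2^{-j+3}$; substituting the size bound and $j = N-k$ turns this into $|B_N| \cdot 2^{-N-4k+3}/N$, and using $2^{-N+1} \leq \eps$ (from $N = \lceil \log_2(2/\eps) \rceil$) this is at most $|B_N|\eps \cdot 2^{-4(k-3)}/(1024 N)$. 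For distant non-last indices ($|i-j| \geq 3$, $i<N$), the \emph{only}-pollutant hypothesis gives $|B_i \cap \hat{B}_j| \leq |\hat{B}_j| \cdot 2^{-5|i-j|}$, and a geometric-tail estimate on $\sum_i 2^{-5|i-j|-i+1}$ yields an exponentially smaller contribution. For $i = N$, $\mathcal{E}$ bounds $|B_N \cap \hat{B}_j| < |B_N| \cdot 2^{-10k}/N$ while $\sigma_a^2 \leq 2^{-N+1} \leq \eps$, giving another very small contribution. Summing and invoking $\sum_{a \in S}(\sigma_a^2 + \eps) \geq |S|\eps \geq |B_N|\eps$ closes the bound.

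The main obstacle is sharp constant bookkeeping: at the critical case $k = 3$ the nearby-group bound essentially saturates the factor $1/1024$, so one has to verify that the tails from the other two groups absorb into the constant. When $|S| > |B_N|$ the inequality $\sum(\sigma_a^2 + \eps) \geq |S|\eps$ supplies additional slack; in the extreme case $|S| = |B_N|$, all true variances are at most $\eps$ and every contribution becomes trivially much smaller than $|B_N|\eps/(1024 N)$. The ceiling in $N = \lceil \log_2(2/\eps) \rceil$ plays a crucial role, providing the tight $2^{-N+1} \leq \eps$ that is needed for the final constant to close.
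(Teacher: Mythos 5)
Your setup is the paper's: you chain the pollution hypothesis with $\mathcal{E}$ to get the same key size bound $|\hat{B}_j| < |B_N| \cdot 2^{-5|N-j|} \cdot N^{-1}$, you decompose $\hat{B}_j$ by true bucket, and your estimate for the nearby group (via $u(B_{j-2}) = 2^{-j+3}$ and $2^{-N+1} \leq \eps$) is exactly the paper's bound on $n(\hat{B}_j)$. The gap is in how you combine the pieces. As you yourself note, at $k = |N-j| = 3$ your nearby-group bound is exactly $|B_N|\eps/(1024N)$, i.e.\ it already exhausts the entire budget; adding the (nonzero) contributions of the other two groups then forces you to show that $\sum_{a \in S}(\sigma_a^2 + \eps)$ exceeds $|B_N|\eps$ by a \emph{fixed relative} amount (about $2^{-14}$, the relative size of your middle group's bound). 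Neither of your escape routes supplies this. When $|S| = |B_N|$ the first two groups are empty and all is well, but when $|S| > |B_N|$ the slack $|S|\eps \geq (|B_N|+1)\eps$ is only a factor $1 + 1/|B_N|$, which vanishes as $|B_N|$ grows; and the extra variance mass from arms outside $B_N$ (each at least $2^{-N+1} > \eps/2$) adds only $O(\eps)$ per such arm to the denominator, not the required $\Omega(|B_N|\eps)$. So the additive bookkeeping genuinely does not close.

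The paper avoids this by never adding the two constants: it bounds the entire far part $o(\hat{B}_j)$ against $\sum_{a \in S}\sigma_a^2$ (reusing the computation of Lemma~\ref{LEMMA:HE_SITUA3}, which compares $|B_i \cap \hat{B}_j|\cdot u(B_i)$ to $|B_i|\cdot l(B_i)$ via $\mathcal{E}$ and gets ratio at most $2^{-29}N^{-1}$), bounds the near part against $|S|\eps$ as you do, and then applies the mediant inequality $\frac{a+b}{c+d} \leq \max\{a/c,\, b/d\}$, so the final constant is a maximum rather than a sum. To repair your argument, pair your middle group with the $\sum_{a \in S}\sigma_a^2$ portion of the denominator — using $\mathcal{E}$ directly so the comparison is to $|B_i|\, l(B_i) \leq \sum_{a \in B_i}\sigma_a^2$, rather than the non-pollution hypothesis which only relates things back to $|\hat{B}_j|$ — pair the nearby and $i = N$ groups with the $|S|\eps$ portion, and take the maximum of the resulting ratios. (Your separate treatment of the $i = N$ slice is actually a sensible refinement, since the Lemma~\ref{LEMMA:HE_SITUA3} ratio argument degenerates there because $l(B_N) = 0$; but it must still be absorbed by a max, not a sum.)
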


Now, we show that with high probability the total reward variances of the active arms reduce by a constant fraction after the procedure $\HE$. In particular, we prove the following lemma.

\begin{lemma} \label{LEMMA:HE-VAR-CUT}
Conditioning on event $\mathcal{E}$, we have $ \sum_{a \in T} (\sigma_a^2 + \eps) \leq \frac{255}{256} \sum_{a \in S} ( \sigma_a^2 + \eps) $.
\end{lemma}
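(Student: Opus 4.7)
My plan is to decompose $T=\bigcup_{j:|\hat{B}_j|\geq 2} T_j$ according to the partition $\{\hat{B}_j\}$ of $S$ (Lemma~\ref{LEMMA:PARTITION-OF-S}) and analyze each bucket by its quality: call $\hat{B}_j$ \emph{good} if it is not polluted by any $B_i$ with $|i-j|\geq 3$, and \emph{bad} otherwise. Buckets with $|\hat{B}_j|<2$ contribute nothing to $T$, so they can be ignored. The target is then: on each good bucket, push through a multiplicative reduction of the form $\sum_{a\in T_j}(\sigma_a^2+\eps)\leq \frac{127}{128}\sum_{a\in\hat{B}_j}(\sigma_a^2+\eps)$; on each bad bucket, show that $\sum_{a\in\hat{B}_j}(\sigma_a^2+\eps)$ is only a tiny fraction of $\sum_{a\in S}(\sigma_a^2+\eps)$.

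The good-bucket step is essentially immediate: Corollary~\ref{COROL:HE-SITUA1} covers $j\leq N-3$ and Lemma~\ref{LEMMA:HE-SITUA2} covers $j\geq N-2$, each delivering the $\frac{127}{128}$ factor. Summing and using that the good buckets sit inside the partition of $S$ then bounds the good contribution by $\frac{127}{128}\sum_{a\in S}(\sigma_a^2+\eps)$. The bad-bucket step splits into two pieces. For the variance mass, Lemmas~\ref{LEMMA:HE_SITUA3} and \ref{LEMMA:HE_SITUA4} together give $\sum_{a\in\hat{B}_j}\sigma_a^2\leq\frac{1}{256N}\sum_{a\in S}(\sigma_a^2+\eps)$ for each bad bucket, so summing over the at most $N$ bad buckets yields a $\frac{1}{256}$-fraction bound. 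For the $\eps$-mass I would derive a cardinality bound by combining the pollution definition $|B_i\cap\hat{B}_j|>|\hat{B}_j|\cdot 2^{-5|i-j|}$ with the event-$\calE$ inequality $|B_i\cap\hat{B}_j|<|B_i|\cdot 2^{-10|i-j|}/N$; this forces $|\hat{B}_j|<|B_i|/(2^{5|i-j|}N)\leq|S|/(2^{15}N)$ for any polluting $B_i$ with $|i-j|\geq 3$. Summing over at most $N$ bad buckets gives $\sum_{j\text{ bad}}|\hat{B}_j|\eps\leq 2^{-15}\sum_{a\in S}\eps$, which is negligible relative to the $\frac{1}{256}$ already in hand.

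Finally I would combine the two contributions. Let $X_G=\sum_{j\text{ good}}\sum_{a\in\hat{B}_j}(\sigma_a^2+\eps)$ and $X_B=\sum_{j\text{ bad}}\sum_{a\in\hat{B}_j}(\sigma_a^2+\eps)$, so that $X_G+X_B\leq\sum_{a\in S}(\sigma_a^2+\eps)$. The decomposition gives $\sum_{a\in T}(\sigma_a^2+\eps)\leq \frac{127}{128}X_G+X_B\leq \frac{127}{128}\sum_{a\in S}(\sigma_a^2+\eps)+\frac{1}{128}X_B$, and plugging in the bound $X_B\leq(\frac{1}{256}+2^{-15})\sum_{a\in S}(\sigma_a^2+\eps)$ yields $\sum_{a\in T}(\sigma_a^2+\eps)\leq \frac{255}{256}\sum_{a\in S}(\sigma_a^2+\eps)$ with slack to spare, since $\frac{127}{128}+\frac{1}{256}=\frac{255}{256}$ already. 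The main obstacle I anticipate is the $\eps$-mass inside bad buckets, because Lemmas~\ref{LEMMA:HE_SITUA3}--\ref{LEMMA:HE_SITUA4} control only variance; this is exactly what the cardinality inequality extracted from $\calE$ is designed to address.
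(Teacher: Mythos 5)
Your proposal is correct, and it combines the supporting lemmas in a genuinely different (and in fact tighter) way than the paper. Both arguments rest on the same good/bad dichotomy and the same four ingredients (Corollary~\ref{COROL:HE-SITUA1} and Lemma~\ref{LEMMA:HE-SITUA2} for the $\frac{127}{128}$ reduction on good buckets, Lemmas~\ref{LEMMA:HE_SITUA3} and~\ref{LEMMA:HE_SITUA4} for the smallness of bad buckets). The paper, however, works with the \emph{relative discarded mass}: it lower-bounds $\bigl(\sum_{a \in S}(\sigma_a^2+\eps)-\sum_{a \in T}(\sigma_a^2+\eps)\bigr)/\sum_{a \in S}(\sigma_a^2+\eps)$ by the ratio of discarded variance to total variance over good buckets, via the intermediate claim that the good buckets alone carry a $\frac{255}{256}$-fraction of $\sum_{a\in S}(\sigma_a^2+\eps)$ in pure variance. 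That claim requires controlling the $\eps$-mass of bad buckets, which Lemmas~\ref{LEMMA:HE_SITUA3}--\ref{LEMMA:HE_SITUA4} do not do on their own, and the paper's write-up does not supply the missing cardinality estimate; likewise its use of a pure-variance reduction ratio $\frac{1}{128}$ for good buckets with $j \geq N-2$ is only available in the $(\sigma_a^2+\eps)$ form. Your route sidesteps both issues: you decompose $\sum_{a\in T}(\sigma_a^2+\eps)$ directly as $\frac{127}{128}X_G + X_B \leq \frac{127}{128}\sum_{a\in S}(\sigma_a^2+\eps) + \frac{1}{128}X_B$, and you control the $\eps$-mass of bad buckets explicitly by combining the pollution condition $|B_i \cap \hat{B}_j| > |\hat{B}_j| \cdot 2^{-5|i-j|}$ with the event-$\calE$ bound to get $|\hat{B}_j| < |S| \cdot 2^{-15} N^{-1}$, exactly the estimate the paper's combination step would need. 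The arithmetic checks out with ample slack ($\frac{127}{128} + \frac{1}{128}(\frac{1}{256}+2^{-15}) < \frac{255}{256}$). What your approach buys is a self-contained and verifiable combination step; what the paper's buys is a slightly shorter write-up at the cost of an unjustified intermediate inequality.
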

\begin{proof}
According to Lemma~\ref{LEMMA:HE_SITUA3} and Lemma~\ref{LEMMA:HE_SITUA4}, if $\hat{B}_j$ is polluted by some $B_i$ where $|i -j | \geq 3$, there is $\sum_{ a \in \hat{B}_j } \sigma_a^2 \leq N^{-1} \cdot \sum_{ a \in S }  (\sigma_a^2 + \eps)  \cdot \frac{1}{256}$ which implies $\sum_{j, \hat{B}_j \textrm{~is~good~} }\sum_{ a \in \hat{B}_j } \sigma_a^2 \geq \frac{255}{256} \sum_{ a \in S }  (\sigma_a^2 + \eps) $. Hence there is
\begin{align} 
& \frac{ \sum_{a \in S} ( \sigma_a^2 + \eps) - \sum_{a \in T} (\sigma_a^2 + \eps) }{ \sum_{a \in S} ( \sigma_a^2 + \eps)  } \notag \\ 
\geq {} & \frac{ \sum_{j, \hat{B}_j \textrm{~is~good~} }\sum_{ a \in \hat{B}_j \backslash T_j } \sigma_a^2 }{ \frac{256}{255} \cdot \sum_{j, \hat{B}_j \textrm{~is~good~} }\sum_{ a \in \hat{B}_j } \sigma_a^2 } \notag \\
\geq {} & \frac{255}{256} \cdot \min_{j, \hat{B}_j \textrm{~is~good~} }  \frac{ \sum_{ a \in \hat{B}_j \backslash T_j } \sigma_a^2 }{ \sum_{ a \in \hat{B}_j } \sigma_a^2 } . \label{equ:HE-var-cut-1}
\end{align}

When $|\hat{B}_j| = 1$, $T_j = \emptyset$ which implies $ \frac{ \sum_{ a \in (\hat{B}_j - T_j) } \sigma_a^2 }{ \sum_{ a \in \hat{B}_j } \sigma_a^2 } = 1$. When $\hat{B}_j$ is good and $|\hat{B}_j| \geq 2$, according to Corollary~\ref{COROL:HE-SITUA1} and Lemma~\ref{LEMMA:HE-SITUA2}, there is $ \frac{ \sum_{ a \in (\hat{B}_j - T_j) } \sigma_a^2 }{ \sum_{ a \in \hat{B}_j } \sigma_a^2 } \geq \frac{1}{128}$. Therefore, we have 
$
\eqref{equ:HE-var-cut-1} \geq \frac{255}{256} \cdot \frac{1}{128} \geq \frac{1}{256},
$
which concludes the proof of this lemma.
\end{proof}

\subsection{Analysis of the $\OptArmEst$ algorithm}\label{sec:proof-thm-optarmest}

Now we are ready to analyze the $\OptArmEst$ algorithm and prove the main theorem (Theorem~\ref{thm:OptArmEst}) of this subsection.

First, we define the following three events about the $\OptArmEst$ procedure. Let $c$ be the hidden constant in Corollary~\ref{COL:NAIVE-OPT-ARM-ID} and Theorem~\ref{thm:ME}.

\begin{itemize}
\item Let $\calE_1$ denote the event $|S_1| \leq c (\ln |S|)^2 \ln \eps^{-1} $, $| \theta_{ (S_1)_{[1]} } - \theta_{S_{[1]} } | \leq \eps / 3$, and the sample complexity of Line~2 is at most
$
c \sum_{i \in S} \left( \frac{\sigma_i^2}{\eps^2} + \frac{1}{\eps} \right) ( \ln \delta^{-1} + \ln \ln \eps^{-1} ).
$
\item Let $\calE_2$ denote the event $|S_2| = c (\ln |S_1|)^2 \ln \eps^{-1} $, $| \theta_{ (S_2)_{[1]} } - \theta_{ (S_1)_{[1]} } | \leq \eps / 3$, and the sample complexity of Line~3 is at most
$
c \sum_{i \in S_1} \left( \frac{\sigma_i^2}{\eps^2} + \frac{1}{\eps} \right) ( \ln \delta^{-1} + \ln \ln \eps^{-1} ).
$
\item Let $\calE_3$ denote the event $| \theta_{a} - \theta_{ (S_2)_{[1]} } | \leq \eps / 3$ and the sample complexity of Line~4 is at most
 $
 c \sum_{i \in S_2} \left( \frac{\sigma_i^2}{\eps^2} + \frac{1}{\eps} \right) $ $ ( \ln \delta^{-1} + \ln \ln \eps^{-1} + \ln |S_2|).
 $
\end{itemize}

\begin{myproof}{Proof of Theorem~\ref{thm:OptArmEst}}
By Theorem~\ref{thm:ME}, we have $\Pr[ \calE_1 ] \geq 1 - \delta / 3$ and $\Pr[ \calE_2 ] \geq 1 - \delta / 3$. By Corollary \ref{COL:NAIVE-OPT-ARM-ID}, we have $\Pr[ \calE_3 ] \geq 1 - \delta / 3$. Conditioning on event $\calE_1 \wedge \calE_2 \wedge \calE_3$ which happens with probability $1 - \delta$, we will show both claims of Theorem~\ref{thm:OptArmEst} hold.

The first claim is because of $| \theta_a - \theta_{S_{[1]} } | \leq | \theta_a - \theta_{ (S_2)_{[1]} } | + | \theta_{ (S_2)_{[1]} } - \theta_{ (S_1)_{[1]} } | + | \theta_{ (S_1)_{[1]} } - \theta_{ S_{[1]} } | \leq \eps$.

Now we focus on the second claim (about the sample complexity). It suffices to show that the sample complexity of Line~4 meets the desired asymptotic upper bound. We discuss the following two cases.

\paragraph{Case 1:  $\eps^{-1} \leq \ln |S|$.} Note that $S_2 = S_1$ and 
$
O \left( \sum_{i \in S_2} \left( \frac{\sigma_i^2}{\eps^2} + \frac{1}{\eps} \right) \ln |S_2|  \right) = O\left( \frac{|S_1| \ln |S_1|}{\eps^2} \right) = O\left(  \frac{\ln |S| |S_1| \ln |S_1|}{\eps} \right) = O\left( \frac{|S|}{\eps} \right),
$
where the last equality is due to $|S_1| = O( (\ln |S|)^2 \ln \eps^{-1})$. Hence, the sample complexity of Line~\ref{line:OptArmEst-7} is 
\begin{align*}
 & O \left( \sum_{i \in S_2} \left( \frac{\sigma_i^2}{\eps^2} + \frac{1}{\eps} \right) ( \ln \delta^{-1} + \ln \ln \eps^{-1} + \ln |S_2|)  \right)  \\
= {} & O \Bigg( \sum_{i \in S_2} \left( \frac{\sigma_i^2}{\eps^2} + \frac{1}{\eps} \right) ( \ln \delta^{-1} \\ 
& + \ln \ln \eps^{-1})  \Bigg) + O \left( \sum_{i \in S_2} \left( \frac{\sigma_i^2}{\eps^2} + \frac{1}{\eps} \right) \ln |S_2|  \right) \\
= {} & O \Big( \sum_{i \in S} \left( \frac{\sigma_i^2}{\eps^2} + \frac{1}{\eps} \right) ( \ln \delta^{-1} + \ln \ln \eps^{-1})  \Big) + O\left( \frac{|S|}{\eps} \right) \\ 
= {} & O \left( \sum_{i \in S} \left( \frac{\sigma_i^2}{\eps^2} + \frac{1}{\eps} \right) ( \ln \delta^{-1} + \ln \ln \eps^{-1})  \right).
\end{align*}

\paragraph{Case 2:  $\eps^{-1} > \ln |S|$.}  Note that 
$
\ln |S_2| =  O( \ln \ln |S_1| + \ln \ln \eps^{-1} ) = O( \ln \ln \ln |S| + \ln \ln \eps^{-1} ) = O( \ln \ln \eps^{-1} ),
$
where the first and second equalities are due to $|S_2| = O( (\ln |S_1|)^2 \ln \eps^{-1} )$ and $|S_1| = O( (\ln |S|)^2 \ln \eps^{-1} )$ respectively. Hence, the sample complexity of Line~\ref{line:OptArmEst-7} is 
$
O \Big( \sum_{i \in S_2} \left( \frac{\sigma_i^2}{\eps^2} + \frac{1}{\eps} \right) ( \ln \delta^{-1} + \ln \ln \eps^{-1} + \ln |S_2|)  \Big)  
= O \left( \sum_{i \in S} \left( \frac{\sigma_i^2}{\eps^2} + \frac{1}{\eps} \right) ( \ln \delta^{-1} + \ln \ln \eps^{-1})  \right).  
$

In both cases, the sample complexity of Line~\ref{line:OptArmEst-7} is $O \left( \sum_{i \in S} \left( \frac{\sigma_i^2}{\eps^2} + \frac{1}{\eps} \right) ( \ln \delta^{-1} + \ln \ln \eps^{-1})  \right)$. Therefore, the sample complexity of the whole procedure also meets the desired upper bound.
\end{myproof}

\section{The Main Variance-Dependent Algorithm} \label{SEC:N-ARM}
Now we are ready to present the main variance-dependent best arm identification algorithm $\OptArmId(n, \delta)$ with the help of $\MeanEst$ and $\OptArmEst$ developed in previous sections. All missing proofs in this section are deferred to Appendix~\ref{app:proofs-in-sec:n-arm}.

\begin{theorem} \label{thm:opt-arm-id}
With probability at least $1 - \delta$, $\OptArmId(n, \delta)$ outputs the best arm and the number of samples used is $O \left( \sum_{i=1}^n \left( \frac{\sigma_i^2}{\Delta_i^2} + \frac{1}{\Delta_i} \right) ( \ln \delta^{-1} + \ln \ln \Delta_i^{-1} ) \right) $.
\end{theorem}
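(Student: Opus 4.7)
My approach is to mimic the exponential-gap scheme of \cite{karnin2013almost}, but using $\OptArmEst$ from Section~\ref{SEC:EPS-OPTIMAL-ARM} in place of the Median Elimination reference-arm subroutine, and $\MeanEst$ from Section~\ref{sec:mean-est} for the individual mean estimates. Concretely, in round $r = 1, 2, \dots$, the active set $S_r$ is processed as follows: (i) call $\OptArmEst(S_r, \eps_r, \delta_r)$ with $\eps_r = 2^{-r}$ and $\delta_r = \delta / (c r^2)$ for a suitable constant $c$ to obtain an arm $a_r$ that is $\eps_r$-optimal in $S_r$; (ii) for every $i \in S_r$ call $\MeanEst(i, \eps_r/4, \delta_r / |S_r|)$ and an analogous estimate for $a_r$; (iii) eliminate any arm whose estimated mean is more than $\eps_r$ below that of $a_r$. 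The main modification over \cite{karnin2013almost} is the stopping condition: rather than stopping when $|S_r| = 1$, I stop when the current reference arm $a_r$ provably dominates every other active arm, which is the natural analogue once the per-arm sample counts depend on $\sigma_i^2$ and no longer on $|S_r|$ alone.

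\textbf{Correctness.} Let $\mathcal{E}_r$ be the event that the conclusions of Theorem~\ref{thm:OptArmEst} and Lemma~\ref{LEMMA:MEAN-EST} hold in round $r$. A union bound over $r$ with $\delta_r = \delta/(c r^2)$ makes $\Pr[\bigcap_r \mathcal{E}_r] \ge 1 - \delta$. On this event, the best arm is never eliminated: since $a_r$ is $\eps_r$-optimal in $S_r$, we have $\theta_{a_r} \ge \theta_1 - \eps_r$, and the $\MeanEst$ estimates of arm $1$ and $a_r$ are each within $\eps_r/4$ of truth, so the gap between the two empirical means is at most $2\eps_r/4 + \eps_r < \eps_r \cdot (1 - \frac{1}{2})$ after choosing constants, ensuring arm $1$ survives. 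Conversely, a sub-optimal arm $i$ with $\Delta_i = \theta_1 - \theta_i$ is eliminated no later than the first round $r_i$ with $\eps_{r_i} \le \Delta_i / C$ for an appropriate $C$: at that round the empirical mean gap between $a_{r_i}$ and $i$ is at least $\Delta_i - \eps_{r_i} - 2 \cdot \eps_{r_i}/4 > \eps_{r_i}$. Thus $r_i = \lceil \log_2 (C/\Delta_i) \rceil$.

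\textbf{Sample complexity.} In round $r$, Theorem~\ref{thm:OptArmEst} gives a cost of $O(\sum_{i \in S_r}(\sigma_i^2/\eps_r^2 + 1/\eps_r)(\ln \delta_r^{-1} + \ln \ln \eps_r^{-1}))$ for step (i), and Lemma~\ref{LEMMA:MEAN-EST} contributes a matching $O(\sum_{i \in S_r}(\sigma_i^2/\eps_r^2 + 1/\eps_r)(\ln \delta_r^{-1} + \ln |S_r|))$ for step (ii); since $\ln\delta_r^{-1} = O(\ln \delta^{-1} + \ln r) = O(\ln \delta^{-1} + \ln \ln \eps_r^{-1})$, both costs telescope into the same form. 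An arm $i$ appears in the active set only through round $r_i$, and the geometric growth $\eps_r = 2^{-r}$ makes the per-arm cost dominated by the final round $r_i$, contributing $O((\sigma_i^2/\Delta_i^2 + 1/\Delta_i)(\ln \delta^{-1} + \ln\ln \Delta_i^{-1}))$. Summing over $i$ gives the claimed bound.

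\textbf{Main obstacle.} The delicate point is the new stopping criterion. In the classical exponential-gap argument the number of surviving arms shrinks geometrically, which directly bounds the per-round cost by the per-round number of arms; here survival is measured by total variance, and an arm with very small $\sigma_i^2$ but moderate $\Delta_i$ may persist for many rounds while contributing negligibly to the sum, whereas an arm with large $\sigma_i^2$ relative to $\Delta_i$ dominates a few rounds before being eliminated. Verifying that the per-arm charging scheme — assigning each arm $i$ to the geometric tail of rounds up to $r_i$ — actually telescopes requires a careful accounting of the doubly logarithmic and $\ln|S_r|$ terms introduced by $\MeanEst$ and by the union bound over $r$, which is why the above plan invests in $\delta_r = \delta/(cr^2)$ rather than a uniform $\delta$; convincing the reader that this choice absorbs all logarithmic slack into the $\ln\ln\Delta_i^{-1}$ factor is the main technical step.
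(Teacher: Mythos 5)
There is a genuine gap, and it sits exactly at the point this theorem is designed to overcome. In your step (ii) you call $\MeanEst(i, \eps_r/4, \delta_r/|S_r|)$, i.e.\ you take a union bound over the active arms. By Lemma~\ref{LEMMA:MEAN-EST} each such call costs $\Theta\left(\left(\frac{\sigma_i^2}{\eps_r^2}+\frac{1}{\eps_r}\right)(\ln\delta_r^{-1}+\ln|S_r|)\right)$ samples, and the $\ln|S_r|$ term can be as large as $\ln n$, which is \emph{not} $O(\ln\ln\Delta_i^{-1})$ (take $\Delta_i$ constant and $n$ large). Your claim that ``both costs telescope into the same form'' silently drops this term, and the closing hope that the $\ln|S_r|$ slack is ``absorbed into the $\ln\ln\Delta_i^{-1}$ factor'' is false. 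What your plan actually yields is Theorem~\ref{THM:NAIVE-OPT-ARM-ID} --- the warm-up bound with the extra $\ln|S|$ --- not Theorem~\ref{thm:opt-arm-id}.

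The paper's algorithm instead calls $\MeanEst(i,\eps_r/2,\delta_r/18)$ with a confidence parameter independent of $|S_r|$, which removes the $\ln n$ but creates two problems your plan does not address. First, a fixed sub-optimal arm now fails to be eliminated in its designated round with probability $\Theta(\delta_r)$ rather than $\Theta(\delta_r/|S_r|)$; the paper shows the probability of surviving $j$ extra rounds decays like $(\delta/8)^j$ while the extra cost grows like $4^j$, and combines these via Markov's inequality applied to the sum over arms (Lemmas~\ref{LEMMA:ELIM-PROB}, \ref{LEMMA:SAMP-PER-ARM} and \ref{LEMMA:SAMP-BY-SUB-OPTIMAL-ARMS}). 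Second, and more importantly, without the per-round union bound one cannot guarantee that \emph{all} stragglers are gone by round $O(\ln\Delta_2^{-1})$, so a termination rule of the form ``$|S_r|=1$'' or ``$a_r$ provably dominates every other active arm'' would either keep sampling the best arm for extra rounds at geometrically growing cost, or would itself require accurate estimates for all arms simultaneously, reintroducing the union bound. The paper's concrete fix is to compute a second reference arm $a_r^*=\OptArmEst(S_r\setminus\{a_r\},\eps_r/2,\delta_r/18)$ and stop as soon as $|\hat\theta_{a_r}^r-\hat\theta_{a_r^*}^r|>2\eps_r$; this test involves only $O(1)$ estimation events per round and is provably triggered by round $\lceil\log_2\Delta_2^{-1}\rceil+1$ (Lemma~\ref{LEMMA:BEST-ARM-OUTPUT}), which is what bounds the samples spent on the best arm. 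Your stopping rule is left unspecified at exactly this point. A smaller omission: the per-round sample counts of $\MeanEst$ are themselves random (they depend on the variance estimate), so bounding the round-$r$ cost by $\sum_i O(I_i^r T_i^r)$ needs its own concentration step, which the paper supplies as event $\calM_3$ and Lemma~\ref{LEMMA:OPT-ARM-ID-ROUND-SAMP}.
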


\begin{algorithm} 
\DontPrintSemicolon
\caption{Variance-Dependent Best Arm Identification,  $\OptArmId(n, \delta)$} \label{alg:opt-arm-id}
\textbf{Input:} Arm set $S=[n]$ and confidence level $\delta$ \\
 $S_1 \leftarrow S$, $r \leftarrow 1$ \\
\While {$|S_r| > 1$} {
 Set $\eps_r \leftarrow 1 / 2^{r+2} $ and $\delta_r \leftarrow 1 / (2r^2) \cdot \delta$ \\
\lFor {$i \in S_r$} {
$\hat{\theta}_i^r \!\leftarrow \MeanEst(i, \frac{\eps_r}{2}, \frac{\delta_r}{18} )$ \label{line-7:opt-arm-id}
}
$a_r \leftarrow \OptArmEst( S_r, \frac{\eps_r}{2}, \frac{\delta_r}{18} ) $ \label{line-8:opt-arm-id} \\ 
$a_r^* \leftarrow \OptArmEst( S_r \backslash \{ a_r \}, \frac{\eps_r}{2}, \frac{\delta_r}{18} ) $ \label{line-9:opt-arm-id} \\
\lIf {$| \hat{\theta}_{a_r}^r - \hat{\theta}_{a_r^*}^r | > 2 \eps_r$} {  
\textbf{Output:} $a_r$ 
} \label{line-10:opt-arm-id}
$S_{r + 1} \leftarrow S_r \backslash \{ i \in S_r | \hat{\theta}_i^r < \hat{\theta}_{a_r}^r - \eps_r \}$ \\
$r \leftarrow r + 1$
}
\textbf{Output:} The remaining arm in $S_r$
\end{algorithm}

We present the details of $\OptArmId(n, \delta)$ in Algorithm~\ref{alg:opt-arm-id}. It has a similar structure to that of the Exponential Gap Elimination algorithm in \cite{karnin2013almost} as our algorithm also keeps a confidence interval $\eps_r$ which halves after each round. Within a round, we estimate the mean reward of each arm up to confidence interval $\eps_r$ and an arm will be discarded if its estimation is $\eps_r$ below that of the best arm. However, due to non-uniformity of the reward variances of the arms, we cannot repeat this process until there is only one arm left (as is done in the Exponential Gap Elimination algorithm), otherwise the sample complexity would not satisfy the desired upper bound. Instead, we design a new stopping condition (Line~8) which may be triggered earlier.

The proof of Theorem~\ref{thm:opt-arm-id} is split into two parts: correctness (the best arm is identified with high probability proved by Lemma~\ref{lemma:opt-arm-id-correct} in Section~\ref{sec:main-correct}) and sample complexity (proved by Lemma~\ref{lemma:opt-arm-id-samp} in Section~\ref{sec:main-sample-complexity}). We finally obtain Theorem~\ref{thm:opt-arm-id} by combining these two lemmas with a union bound.

The rest of this section is devoted to the proof of Theorem~\ref{thm:opt-arm-id}.

\subsection{Correctness} \label{sec:main-correct}

We use $\calM_1$ to denote the event $ \hat{\theta}_{ S_{[1]} }^r \geq \hat{\theta}_{a_r}^r - \eps_r$ for every round  $r$, and use $\calM_2 $ to denote the event that $\OptArmId(n, \delta)$ terminates with $r = O( \ln \Delta_2^{-1} )$ and returns the best arm. We have the following two lemmas. 

\begin{lemma} \label{LEMMA:BEST-ARM-REMAINS}
$\Pr[ \calM_1 ] \geq 1 - \delta / 9$.
\end{lemma}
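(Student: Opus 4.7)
The plan is to prove the lemma by induction on the round index $r$, using the per-round guarantees of $\MeanEst$ together with a union bound whose probabilities decay as $\delta_r = \delta/(2r^2)$.

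First, I would argue inductively that the best arm $S_{[1]}$ remains in $S_r$ throughout the execution. At round $r$, the crucial observation is that $a_r$ is the output of $\OptArmEst(S_r, \eps_r/2, \delta_r/18)$ and, by inspection of $\OptArmEst$ (which only ever returns an arm from its input set, since $\ME$ and $\NaiveOptArmEst$ both preserve this property), we have $a_r \in S_r$. Because $S_{[1]}$ is the best arm of $[n]$, this immediately gives $\theta_{a_r} \le \theta_{S_{[1]}}$, without needing to invoke the accuracy guarantee of $\OptArmEst$ at all.

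Next I would apply Lemma~\ref{LEMMA:MEAN-EST} twice---once to $\hat{\theta}_{S_{[1]}}^r$ and once to $\hat{\theta}_{a_r}^r$ (both computed in Line~\ref{line-7:opt-arm-id} of $\OptArmId$)---so that with probability at least $1 - 2(\delta_r/18) = 1 - \delta_r/9$, both empirical means are within $\eps_r/2$ of their true means. Combining these bounds with $\theta_{a_r} \le \theta_{S_{[1]}}$ yields
\[
\hat{\theta}_{S_{[1]}}^r - \hat{\theta}_{a_r}^r \;\ge\; (\theta_{S_{[1]}} - \eps_r/2) - (\theta_{a_r} + \eps_r/2) \;\ge\; -\eps_r,
\]
which is exactly the inequality defining $\calM_1$ at round $r$. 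By the elimination rule $S_{r+1} = S_r \setminus \{i : \hat{\theta}_i^r < \hat{\theta}_{a_r}^r - \eps_r\}$, this same inequality also guarantees $S_{[1]} \in S_{r+1}$, so the inductive hypothesis is preserved.

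Finally, I would take a union bound over all rounds executed by the algorithm, giving a total failure probability of at most
\[
\sum_{r \ge 1} \frac{\delta_r}{9} \;=\; \sum_{r \ge 1} \frac{\delta}{18\, r^2} \;=\; \frac{\pi^2 \delta}{108} \;<\; \frac{\delta}{9},
\]
which yields $\Pr[\calM_1] \ge 1 - \delta/9$. There is no substantive obstacle here; the only subtlety worth flagging is the one noted above---using $a_r \in S_r$ to get $\theta_{a_r} \le \theta_{S_{[1]}}$ for free, rather than going through the $\OptArmEst$ accuracy bound, which would (i) be unnecessary and (ii) enlarge the per-round budget and force a slightly less clean union bound. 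Everything else reduces to a routine application of the $\MeanEst$ concentration lemma together with the summable geometric schedule for $\delta_r$.
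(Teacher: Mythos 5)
Your proposal is correct and follows essentially the same route as the paper's proof: an inductive argument that the best arm survives, two applications of Lemma~\ref{LEMMA:MEAN-EST} (to $\hat{\theta}_{S_{[1]}}^r$ and $\hat{\theta}_{a_r}^r$) with a per-round failure budget of $\delta_r/9$, the trivial inequality $\theta_{a_r} \le \theta_{S_{[1]}}$, and a union bound over rounds using $\sum_r 1/(2r^2) \le 1$. The paper likewise avoids invoking the accuracy guarantee of $\OptArmEst$ here, so your flagged ``subtlety'' matches what the paper actually does.
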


\begin{lemma} \label{LEMMA:BEST-ARM-OUTPUT}
$
\displaystyle{\Pr\left[\calM_2  |\calM_1 \right]} \geq 1 - 2\delta / 9 $.
\end{lemma}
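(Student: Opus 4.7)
The plan is to identify a round $r^{**} = O(\ln \Delta_2^{-1})$ at which the stopping test in Line~\ref{line-10:opt-arm-id} must fire with the best arm as output, and then control the joint failure probability of the relevant subroutine calls over the first $r^{**}$ rounds.

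I set $r^{**}$ to be the smallest integer with $\eps_{r^{**}} < \Delta_2/3$; since $\eps_r = 2^{-(r+2)}$, this gives $r^{**} = O(\ln \Delta_2^{-1})$. For each round $r \le r^{**}$, define the good event $\mathcal{G}_r$ as the intersection of four events: (i) the call in Line~\ref{line-8:opt-arm-id} returns an $\eps_r/2$-optimal arm $a_r$ in $S_r$; (ii) the call in Line~\ref{line-9:opt-arm-id} returns an $\eps_r/2$-optimal arm $a_r^*$ in $S_r\setminus\{a_r\}$; (iii) $|\hat\theta_{a_r}^r - \theta_{a_r}| \le \eps_r/2$; and (iv) $|\hat\theta_{a_r^*}^r - \theta_{a_r^*}| \le \eps_r/2$. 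Because each $\MeanEst$ call draws samples independently of the $\OptArmEst$ calls, (iii) and (iv) each hold with probability at least $1 - \delta_r/18$ even though $a_r, a_r^*$ are random (condition on their realization and invoke the $\MeanEst$ guarantee for the realized identity). Hence $\Pr[\lnot\mathcal{G}_r] \le 4\delta_r/18$, and setting $\mathcal{G} = \bigcap_{r=1}^{r^{**}} \mathcal{G}_r$, a union bound with $\sum_{r\ge 1}\delta_r = \delta\pi^2/12$ yields $\Pr[\lnot\mathcal{G}] \le \delta\pi^2/54$.

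I would then show $\calM_1 \cap \mathcal{G} \subseteq \calM_2$ via two observations. (a) If the test in Line~\ref{line-10:opt-arm-id} ever fires at some round $r \le r^{**}$, the output must equal $S_{[1]}$: if instead $a_r \ne S_{[1]}$, then $\calM_1$ forces $S_{[1]} \in S_r \setminus \{a_r\}$ (since $\calM_1$ blocks elimination of $S_{[1]}$ in every round), so by (i) and (ii) both $\theta_{a_r}$ and $\theta_{a_r^*}$ lie in $[\theta_1 - \eps_r/2, \theta_1]$, and combining with (iii), (iv) gives $|\hat\theta_{a_r}^r - \hat\theta_{a_r^*}^r| \le 3\eps_r/2 < 2\eps_r$, contradicting the trigger. (b) The test must fire by round $r^{**}$: since $\eps_{r^{**}}/2 < \Delta_2/6 < \Delta_2$, no arm $i \ge 2$ can be $\eps_{r^{**}}/2$-optimal relative to $\theta_1$, so $a_{r^{**}} = S_{[1]}$; then $a_{r^{**}}^* \ge 2$ forces $\theta_{a_{r^{**}}^*} \le \theta_1 - \Delta_2$, and (iii), (iv) give $\hat\theta_{a_{r^{**}}}^{r^{**}} - \hat\theta_{a_{r^{**}}^*}^{r^{**}} \ge \Delta_2 - \eps_{r^{**}} > 2\eps_{r^{**}}$. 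If the while-loop instead exits earlier with $|S_r|=1$, by $\calM_1$ the survivor is $S_{[1]}$, which is also a correct termination within $O(\ln \Delta_2^{-1})$ rounds.

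Combining, $\Pr[\calM_2 \mid \calM_1] \ge \Pr[\mathcal{G} \mid \calM_1] \ge 1 - \Pr[\lnot\mathcal{G}]/\Pr[\calM_1]$; plugging in $\Pr[\calM_1] \ge 1 - \delta/9$ from Lemma~\ref{LEMMA:BEST-ARM-REMAINS} and the bound $\Pr[\lnot\mathcal{G}] \le \delta\pi^2/54$ gives $\Pr[\calM_2 \mid \calM_1] \ge 1 - 2\delta/9$ for the range of $\delta$ considered. The main obstacle is observation (a): ruling out a premature wrong-arm trigger requires coordinating four simultaneous guarantees---two $\OptArmEst$ outputs and two $\MeanEst$ estimates---while using $\calM_1$ to place $S_{[1]}$ in $S_r \setminus \{a_r\}$ whenever $a_r$ is not the best arm. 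A secondary technicality is keeping the probability budget tight: the schedule $\delta_r = \delta/(2r^2)$ is chosen precisely so the telescoping union bound still fits inside $(2\delta/9)\cdot\Pr[\calM_1]$ after the division.
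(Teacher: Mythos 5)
Your proposal is correct and follows essentially the same route as the paper's proof: a per-round good event combining the two $\OptArmEst$ accuracy guarantees with the two $\MeanEst$ estimates (each charged $\delta_r/18$, union-bounded to $2\delta/9$), then showing that any firing of the stopping test must output $S_{[1]}$ (using $\calM_1$ to place $S_{[1]}$ in $S_r\setminus\{a_r\}$ and a triangle inequality contradiction) and that the test must fire once $\eps_r$ drops below a constant fraction of $\Delta_2$. The only differences are cosmetic (threshold $\Delta_2/3$ versus the paper's $\Delta_2/8$, and your slightly more careful treatment of the conditioning on $\calM_1$ and of the $|S_r|=1$ exit).
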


We now show the correctness lemma as follows.

\begin{lemma} \label{lemma:opt-arm-id-correct}
With probability at least $1 - \delta / 3$,  $\OptArmId(n, \delta)$ terminates with $r = O( \ln \Delta_2^{-1} )$ and returns the best arm.
\end{lemma}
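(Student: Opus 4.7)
The plan is to derive Lemma~\ref{lemma:opt-arm-id-correct} as an essentially immediate consequence of the two preceding lemmas, Lemma~\ref{LEMMA:BEST-ARM-REMAINS} and Lemma~\ref{LEMMA:BEST-ARM-OUTPUT}, via a union bound on the failure events. The event whose probability we need to lower-bound is precisely $\calM_2$, that $\OptArmId(n,\delta)$ terminates within $O(\ln \Delta_2^{-1})$ rounds and outputs the best arm. Lemma~\ref{LEMMA:BEST-ARM-OUTPUT} controls exactly this, but only conditionally on $\calM_1$ (the best arm surviving every round's elimination), so we need to absorb the failure probability of $\calM_1$ as well.

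Concretely, I would write
\[
\Pr[\neg \calM_2]
\;\leq\; \Pr[\neg \calM_1] \;+\; \Pr[\neg \calM_2 \,\wedge\, \calM_1]
\;\leq\; \Pr[\neg \calM_1] \;+\; \Pr[\neg \calM_2 \mid \calM_1],
\]
and then plug in $\Pr[\neg \calM_1] \leq \delta/9$ from Lemma~\ref{LEMMA:BEST-ARM-REMAINS} and $\Pr[\neg \calM_2 \mid \calM_1] \leq 2\delta/9$ from Lemma~\ref{LEMMA:BEST-ARM-OUTPUT} to conclude $\Pr[\neg \calM_2] \leq \delta/3$, i.e., $\Pr[\calM_2] \geq 1 - \delta/3$. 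Equivalently, one can use $\Pr[\calM_2] \geq \Pr[\calM_2 \mid \calM_1] \Pr[\calM_1] \geq (1-2\delta/9)(1-\delta/9) \geq 1 - \delta/3$, with the last inequality verified by expansion since $2\delta/9 + \delta/9 = \delta/3$ and the cross term is nonnegative.

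There is no real obstacle here: the content of the lemma has already been split into the two parts that Lemmas~\ref{LEMMA:BEST-ARM-REMAINS} and \ref{LEMMA:BEST-ARM-OUTPUT} handle, and the only thing left is to check that the failure budgets $\delta/9$ and $2\delta/9$ sum to at most $\delta/3$, which they do with equality. The substantive work (showing that the best arm is never eliminated, and that the stopping rule in Line~\ref{line-10:opt-arm-id} triggers after $O(\ln \Delta_2^{-1})$ rounds and yields the best arm whenever it triggers) is already done in those two lemmas, so this step is purely a bookkeeping union bound.
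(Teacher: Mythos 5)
Your proposal is correct and matches the paper's own proof essentially verbatim: the paper likewise writes $\Pr[\calM_2] \geq \Pr[\calM_2 \mid \calM_1]\,\Pr[\calM_1] \geq 1 - \delta/3$, invoking Lemmas~\ref{LEMMA:BEST-ARM-REMAINS} and \ref{LEMMA:BEST-ARM-OUTPUT}. Nothing further is needed.
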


\begin{proof}
It suffices to prove $\Pr[\calM_2] \geq 1 - \delta/3$. By Lemma~\ref{LEMMA:BEST-ARM-REMAINS} and \ref{LEMMA:BEST-ARM-OUTPUT}, we have $\Pr[ \calM_2 ] \geq \Pr[ \calM_2 | \calM_1 ] \Pr[ \calM_1 ] \geq 1- \delta / 3$.
\end{proof}

\subsection{Sample Complexity}\label{sec:main-sample-complexity}

For each $1  \leq s \leq \lceil \log_2(1/\Delta) + 1 \rceil$, we define the set 
$
A_s = \{ i \in S \mid 2^{-s} < \Delta_i \leq 2^{-s+1} \},
$
and let $n_s = |A_s|$. Also, we denote the set of arms from $A_s$ surviving after round $r$ by $S_{r, s} = S_r \cap A_s$.

We will show that from round $s$ onwards, every sub-optimal arm in $A_s$ is eliminated with high probability. Specifically, we show the following lemma. 

\begin{lemma} \label{LEMMA:ELIM-PROB}
Conditioning on $\calM_1$, with probability at least $1 - \delta_r / 4$, we have $\hat{\theta}_i^r < \hat{\theta}_{a_r}^r - \eps_r$ for any arm $i \in S_{r-1,s}$ and round $ r \geq s$.
\end{lemma}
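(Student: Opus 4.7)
The plan is to reduce the claim to a triangle-inequality argument on three estimation outputs produced in round $r$, and bound the combined failure probability with a union bound that fits comfortably below $\delta_r/4$.

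First, I would observe that conditioning on $\calM_1$ ensures the best arm $S_{[1]}$ is never discarded by the elimination rule in Line~\ref{line-10:opt-arm-id}, so $S_{[1]} \in S_r$ and in particular $\theta_{(S_r)_{[1]}} = \theta_1$; this is essential because the guarantee of $\OptArmEst$ is stated relative to the best arm of its input. For a fixed arm $i \in S_{r-1,s}$, I would then define three success events for round $r$: event $E_i$, that the call $\MeanEst(i,\eps_r/2,\delta_r/18)$ from Line~\ref{line-7:opt-arm-id} returns $\hat\theta_i^r$ with $|\hat\theta_i^r - \theta_i| \leq \eps_r/2$; event $E_a$, that the call $\OptArmEst(S_r, \eps_r/2, \delta_r/18)$ in Line~\ref{line-8:opt-arm-id} returns $a_r$ with $\theta_{a_r} \geq \theta_1 - \eps_r/2$; and event $E_{ar}$, that the Line~\ref{line-7:opt-arm-id} call $\MeanEst(a_r,\eps_r/2,\delta_r/18)$ (which is made for every member of $S_r$, including whichever arm turns out to be $a_r$) returns $\hat\theta_{a_r}^r$ with $|\hat\theta_{a_r}^r - \theta_{a_r}| \leq \eps_r/2$. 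By Lemma~\ref{LEMMA:MEAN-EST} and Theorem~\ref{thm:OptArmEst}, each of these events fails with probability at most $\delta_r/18$, so a union bound yields $\Pr[E_i \cap E_a \cap E_{ar}] \geq 1 - \delta_r/6$.

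On the joint event, two applications of the triangle inequality give $\hat\theta_{a_r}^r \geq \theta_{a_r} - \eps_r/2 \geq \theta_1 - \eps_r$ and $\hat\theta_i^r \leq \theta_i + \eps_r/2 = \theta_1 - \Delta_i + \eps_r/2$, whence $\hat\theta_{a_r}^r - \hat\theta_i^r \geq \Delta_i - 3\eps_r/2$. Since $i \in A_s$ implies $\Delta_i > 2^{-s}$ and $r \geq s$ implies $\eps_r = 2^{-(r+2)} \leq 2^{-(s+2)}$, we obtain $\Delta_i > 4\eps_r$ and hence $\hat\theta_{a_r}^r - \hat\theta_i^r > 5\eps_r/2 > \eps_r$, which is the claimed strict inequality. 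Finally, passing from the unconditional bound $1 - \delta_r/6$ to the conditional bound $1 - \delta_r/4$ is straightforward from $\Pr[\cdot^c \mid \calM_1] \leq \Pr[\cdot^c]/\Pr[\calM_1]$ combined with $\Pr[\calM_1] \geq 1 - \delta/9$ from Lemma~\ref{LEMMA:BEST-ARM-REMAINS}.

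The only subtle point is that the $\OptArmEst$ accuracy guarantee is with respect to the best arm of its input set $S_r$ rather than of the original $[n]$ — this is precisely why conditioning on $\calM_1$, which keeps the globally best arm inside $S_r$, is indispensable. The rest of the argument is bookkeeping around a triangle-inequality plus union-bound computation and a direct comparison of $\Delta_i$ to the round's confidence radius $\eps_r$.
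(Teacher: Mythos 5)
Your proof is correct and follows essentially the same route as the paper's: the same three events (accuracy of $\MeanEst$ on arm $i$, accuracy of $\MeanEst$ on $a_r$, and the $\eps_r/2$-optimality of $a_r$ from Theorem~\ref{thm:OptArmEst}), the same union bound giving $1-\delta_r/6$, and the same comparison $\Delta_i > 4\eps_r$ closing the argument. If anything, you are slightly more careful than the paper, which silently absorbs the conditioning on $\calM_1$ into the stated $1-\delta_r/4$ bound, whereas you justify it explicitly via $\Pr[\cdot^c \mid \calM_1] \leq \Pr[\cdot^c]/\Pr[\calM_1]$.
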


Let $I_{i}^{r}$ denote the random variable $\mathds{1} \{ i \in S_r \}$. We also define  
$
T_i^r = \left( \frac{\sigma_i^2}{\eps_r^2} + \frac{1}{\eps_r} \right) ( \ln \delta_r^{-1} + \ln \ln \eps_r^{-1} ) .
$

In the desired event (which is explicitly defined by event $\calM_3$ and analyzed in Lemma~\ref{LEMMA:OPT-ARM-ID-ROUND-SAMP} soon afterwards), we may bound the number of pulls to arm $i$ in round $r$ by $I_i^r T_i^r$. In light of this, the following two lemmas help to upper-bound the number of pulls to the sub-optimal arms where $c$ is a constant.

\begin{lemma} \label{LEMMA:SAMP-PER-ARM}
Conditioning on $\calM_1$, we have that with probability at least $1 - \left(\frac{\delta}{8}\right)^j$, $\sum_{r = 1}^{+\infty} I_i^r T_i^r \leq c  4^j \left( \frac{\sigma_i^2}{\Delta_i^2} + \frac{1}{\Delta_i} \right) ( \ln \delta^{-1} + \ln \ln \Delta_i^{-1} )$ for $i \neq S_{[1]} $.
\end{lemma}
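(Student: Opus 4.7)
The plan is to decompose the sum $\sum_r I_i^r T_i^r$ by tracking how many rounds arm $i$ survives past the first opportunity when Lemma~\ref{LEMMA:ELIM-PROB} applies to it. Let $s_i$ be the bracket index of arm $i$, i.e., the integer satisfying $2^{-s_i} < \Delta_i \leq 2^{-s_i + 1}$, so that $i \in A_{s_i}$ and $\eps_{s_i} = 2^{-(s_i+2)} = \Theta(\Delta_i)$. The key idea is that for any $j \geq 1$, the event ``arm $i$ is still active at round $s_i + j$'' has probability at most $(\delta/8)^j$, and on its complement the sum is controlled by the last term $T_i^{s_i + j - 1}$, which in turn is $O(4^j)$ times the base quantity.

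\textbf{Probability bound.} First I would prove $\Pr[i \in S_{s_i + j} \mid \calM_1] \leq (\delta/8)^j$ for every $j \geq 1$. By Lemma~\ref{LEMMA:ELIM-PROB}, at each round $r \geq s_i$ the failure event ``some arm in $S_{r-1,s_i}$ is not eliminated at round $r$'' has probability at most $\delta_r/4 \leq \delta/8$, using $\delta_r = \delta/(2r^2) \leq \delta/2$. Because the samples drawn by $\MeanEst$ and $\OptArmEst$ in round $r$ are fresh and independent of the history up to round $r-1$, this bound holds conditionally on the history. Multiplying out over the $j$ consecutive rounds $r = s_i, s_i+1, \dots, s_i + j - 1$ that arm $i$ must survive to remain in $S_{s_i+j}$ gives $(\delta/8)^j$.

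\textbf{Sum bound on the good event.} On $\{i \notin S_{s_i+j}\}$ we have $I_i^r = 0$ for all $r \geq s_i + j$, so the infinite sum reduces to $\sum_{r=1}^{s_i+j-1} T_i^r$. Because $1/\eps_r^2 = 4^{r+2}$ and $1/\eps_r = 2^{r+2}$ grow geometrically in $r$ while $\ln \delta_r^{-1} + \ln \ln \eps_r^{-1}$ grows only polylogarithmically, the tail term dominates: $\sum_{r=1}^{s_i+j-1} T_i^r = O(T_i^{s_i+j-1})$. Substituting $\eps_{s_i+j-1} = \Theta(\Delta_i / 2^{j-1})$ and collecting the log factors gives
\begin{equation*}
T_i^{s_i+j-1} = O\!\left( 4^j \left( \frac{\sigma_i^2}{\Delta_i^2} + \frac{1}{\Delta_i} \right) \bigl(\ln \delta^{-1} + \ln(s_i + j) + \ln \ln(s_i + j)\bigr) \right).
\end{equation*}

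\textbf{Absorbing the logarithms.} The last step is to rewrite $\ln(s_i + j)$ and $\ln \ln(s_i + j)$ in terms of $\ln \ln \Delta_i^{-1}$ at the cost of an absolute constant. Since $s_i = O(\ln \Delta_i^{-1})$, we have $\ln(s_i + j) \leq \ln s_i + \ln(j+1) = O(\ln \ln \Delta_i^{-1}) + O(\ln j)$, and $\ln j \leq 4^j$ for $j \geq 1$, so $\ln j$ is absorbed by the geometric prefactor $4^j$. Combining the probability bound on $\{i \in S_{s_i+j}\}$ with the deterministic sum bound on its complement yields the lemma with a suitable absolute constant $c$.

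The main obstacle is the careful bookkeeping of the doubly logarithmic factors: the quantities $\ln \delta_r^{-1} = \ln \delta^{-1} + 2 \ln r + O(1)$ and $\ln \ln \eps_r^{-1} = O(\ln r)$ inside $T_i^r$ depend on $r = s_i + j - 1$, which itself grows with $j$, and one has to split the resulting $\ln(s_i+j)$ into a piece absorbed by $\ln \ln \Delta_i^{-1}$ and a piece absorbed by the $4^j$ factor. The independence of the per-round elimination events, supplied by the freshness of the samples drawn at each round, is the other technical point that lets the per-round failure probability $\delta/8$ compound cleanly into $(\delta/8)^j$.
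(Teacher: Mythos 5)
Your proof follows essentially the same route as the paper's: compound the per-round elimination probability $\delta_r/4 \leq \delta/8$ from Lemma~\ref{LEMMA:ELIM-PROB} over the $j$ rounds after round $s_i$ to get the $(\delta/8)^j$ failure probability, then bound the surviving sum by its geometrically dominant last term $T_i^{s_i+j-1}$ and use $\eps_{s_i} = \Theta(\Delta_i)$ to extract the $4^j\left(\frac{\sigma_i^2}{\Delta_i^2}+\frac{1}{\Delta_i}\right)$ factor. Your one imprecise step --- asserting that $\ln j \leq 4^j$ lets the prefactor $4^j$ absorb the residual $\ln(s_i+j)$, which taken literally only yields $16^j$ rather than $O(4^j)$ --- is exactly the leap the paper's own proof makes silently in its final displayed equality (justified only by ``$\Delta_i = \Theta(\eps_s)$''), and it is repairable either by noting that the true survival probability carries extra $\prod_t 1/t^2$ decay or by weakening the prefactor, which still suffices for Lemma~\ref{LEMMA:SAMP-BY-SUB-OPTIMAL-ARMS}.
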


\begin{lemma} \label{LEMMA:SAMP-BY-SUB-OPTIMAL-ARMS}
Conditioning on $\calM_1$, we have that with probability at least $1 - \frac{\delta}{18}$, $\sum_{i \neq S_{[1]} } \sum_{r = 1}^{+\infty} I_i^r T_i^r \leq
O \left( \sum_{i \neq S_{[1]} } \left( \frac{\sigma_i^2}{\Delta_i^2} + \frac{1}{\Delta_i} \right) ( \ln \delta^{-1} + \ln \ln \Delta_i^{-1} ) \right) $.
\end{lemma}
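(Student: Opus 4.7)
The plan is to aggregate the per-arm tail bound of Lemma~\ref{LEMMA:SAMP-PER-ARM} into the required bound on the sum over sub-optimal arms. Write $B_i := (\sigma_i^2/\Delta_i^2 + 1/\Delta_i)(\ln \delta^{-1} + \ln \ln \Delta_i^{-1})$ and $X_i := \sum_{r \geq 1} I_i^r T_i^r$; the target is $\sum_{i \neq S_{[1]}} X_i = O(\sum_{i \neq S_{[1]}} B_i)$ with conditional probability at least $1 - \delta/18$ given $\calM_1$. A naive union bound over arms already fails: using Lemma~\ref{LEMMA:SAMP-PER-ARM} with $j_i = 1$ for every arm yields failure $n\delta/8$, and any uniform larger $j$ that collapses the union bound to $\delta/18$ injects a factor $n^{\Theta(1/\log(1/\delta))}$ into the bound. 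So I would \emph{not} apply Lemma~\ref{LEMMA:SAMP-PER-ARM} in a black-box per-arm manner, and instead trace the common sources of randomness.

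The key observation is that the randomness driving the indicators $I_i^r$ is shared across arms within each round $r$: an arm $i \in A_s$ can fail to be eliminated only when (i) the elimination event of Lemma~\ref{LEMMA:ELIM-PROB} itself fails in some round $r \geq s$, or (ii) some call to $\MeanEst$ or $\OptArmEst$ inside round $r$ exceeds its nominal budget, a deviation controlled by the sharp $2^{-20j}$ tail of Lemma~\ref{LEMMA:MEAN-EST-COROL:b}. Both sources are indexed by rounds, not by arms, so a round-wise union bound avoids the $n$-factor. Concretely, I would define a single global good event $\calM_3$ requiring (a) Lemma~\ref{LEMMA:ELIM-PROB}'s elimination condition to hold in every round $r \geq 1$, and (b) every invocation of $\MeanEst$ and $\OptArmEst$ to use at most a constant multiple of $T_i^r$ samples. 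Summing $O(\delta_r)$ over $r$, with $\delta_r = \delta/(2r^2)$ and $\sum_{r \geq 1} 1/r^2 = \pi^2/6$, and absorbing the doubly-exponential $2^{-20j}$ factor from Lemma~\ref{LEMMA:MEAN-EST-COROL:b}, gives $\Pr[\bar{\calM}_3 \mid \calM_1] \leq \delta/18$ after tuning constants.

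On $\calM_3$, every arm $i \in A_s$ satisfies $I_i^r = 0$ for all $r > s$, and the actual number of samples in round $r$ is within a constant factor of $T_i^r$. Hence $X_i \leq \sum_{r=1}^{s} T_i^r$. Because $\eps_r = 2^{-(r+2)}$ is halved each round, $T_i^r \propto \sigma_i^2/\eps_r^2 + 1/\eps_r$ is geometrically increasing in $r$ so the sum is dominated by the last term: $\sum_{r=1}^{s} T_i^r = O(T_i^s)$. Plugging in $s = \Theta(\log \Delta_i^{-1})$, $\delta_s = \delta/(2s^2)$, and $\eps_s = \Theta(\Delta_i)$, the logarithmic factor collapses as $\ln\delta_s^{-1} + \ln\ln \eps_s^{-1} = O(\ln\delta^{-1} + \ln\ln \Delta_i^{-1})$, yielding $T_i^s = O(B_i)$. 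Summing over $i \neq S_{[1]}$ then gives $\sum_i X_i = O(\sum_i B_i)$ on $\calM_3$, completing the proof.

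The main obstacle is the constant bookkeeping in the union bound defining $\calM_3$: the exact coefficients $\delta_r/4$ in Lemma~\ref{LEMMA:ELIM-PROB} together with the $\sum 1/r^2 = \pi^2/6$ factor and the contribution from Lemma~\ref{LEMMA:MEAN-EST-COROL:b}'s higher-$j$ tails must combine to at most $\delta/18$ without losing any $n$- or $\delta$-dependent factor in the final sample bound. The doubly-exponential $2^{-20j}$ factor in Lemma~\ref{LEMMA:MEAN-EST-COROL:b} is exactly what makes the sample-overrun contribution essentially free, so that the only non-trivial budget is the round-summable $O(\delta_r)$ from the elimination events---precisely what the design $\delta_r = \delta/(2r^2)$ was engineered to absorb.
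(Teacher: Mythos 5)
You correctly identify that a naive per-arm union bound over Lemma~\ref{LEMMA:SAMP-PER-ARM} fails, but your proposed fix rests on a false premise: the elimination events are \emph{not} shared across arms within a round. In round $r$, arm $i$ fails to be eliminated when its own estimate $\hat{\theta}_i^r$ from $\MeanEst(i,\eps_r/2,\delta_r/18)$ (Line~\ref{line-7:opt-arm-id}) deviates by more than $\eps_r/2$; this is arm-specific randomness with failure probability $\delta_r/18$ \emph{per arm} --- the confidence parameter is deliberately not divided by $|S_r|$, since doing so is exactly what injects the $\ln n$ term in $\NaiveOptArmId$. Only the events concerning $a_r$, $a_r^*$ and the $\OptArmEst$ calls are round-shared. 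Hence the event ``every arm ripe for elimination is eliminated in round $r$'' has failure probability up to $\Theta(|S_r|\,\delta_r)$, and your global good event fails with probability $\Omega(n\delta)$, not $\delta/18$. (Even setting this aside, requiring Lemma~\ref{LEMMA:ELIM-PROB} in every round already costs $\sum_r \delta_r/4 = (\delta/8)\sum_r r^{-2} \approx 0.21\delta > \delta/18$, and these constants are fixed by the algorithm, not tunable inside this proof.) You have also conflated this lemma with the separate event $\calM_3$ of Lemma~\ref{LEMMA:OPT-ARM-ID-ROUND-SAMP}: here $T_i^r$ is a deterministic quantity and the only randomness is in the survival indicators $I_i^r$, so sample-count overruns of $\MeanEst$ are irrelevant to this statement.

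The idea you are missing is that one does not need every arm to be eliminated on schedule --- one only needs the \emph{total} excess work from late arms to be small, and this is controlled in expectation. The paper sets $Z_i = \sum_r I_i^r T_i^r$ and $T_i = (\sigma_i^2/\Delta_i^2 + 1/\Delta_i)(\ln\delta^{-1}+\ln\ln\Delta_i^{-1})$, uses the geometric tail of Lemma~\ref{LEMMA:SAMP-PER-ARM} (namely $\Pr[Z_i > 4^j c T_i] \leq (\delta/8)^j$) to bound the expected overshoot $\E\bigl[(Z_i - 4cT_i)\,\mathds{1}\{Z_i \geq 4cT_i\}\bigr] \leq \delta c T_i$, and then applies Markov's inequality to the \emph{sum} of the overshoots, giving $\Pr\bigl[\sum_i Z_i \geq 22c\sum_i T_i\bigr] \leq \delta c \sum_i T_i / (18 c \sum_i T_i) = \delta/18$. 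This truncation-plus-Markov device converts geometrically decaying per-arm tails into a high-probability bound on the sum with no union bound over arms and hence no $n$-dependence; without it (or an equivalent aggregation argument), your proposal cannot be repaired into the stated bound.
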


The following lemma helps to upper-bound the number of the pulls to the best arm.

\begin{lemma} \label{LEMMA:SAMP-BY-OPTIMAL-ARMS}
When $\calM_2$ happens, we have 
$
 \sum_{r = 1}^{+\infty} I_{S_{[1]} }^r T_{S_{[1]} }^r = O \left( \left( \frac{\sigma_1^2}{\Delta_1^2} + \frac{1}{\Delta_1} \right) ( \ln \delta^{-1} + \ln \ln \Delta_1^{-1} )  \right).
 $
\end{lemma}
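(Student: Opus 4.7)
The plan is to exploit that, under the event $\calM_2$, the algorithm terminates within $R = O(\ln \Delta_2^{-1}) = O(\ln \Delta_1^{-1})$ outer iterations (recall $\Delta_1 = \Delta_2$) and returns the best arm. Since a correct termination also implies that the best arm was never dropped, we have $I_{S_{[1]}}^r = 1$ for $r = 1, \dots, R$ and $I_{S_{[1]}}^r = 0$ for $r > R$. Hence the infinite sum is really a finite sum $\sum_{r=1}^R T_{S_{[1]}}^r$, and it suffices to show this geometric-type sum is dominated by its last term.

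Next I would substitute the concrete schedule $\eps_r = 2^{-(r+2)}$ and $\delta_r = \delta/(2r^2)$ into the definition of $T_i^r$, giving
\[
T_{S_{[1]}}^r \;=\; \Bigl(\sigma_1^2 \cdot 4^{r+2} + 2^{r+2}\Bigr)\Bigl(\ln(2r^2/\delta) + \ln(r+2)\Bigr) \;=\; O\Bigl((\sigma_1^2\, 4^r + 2^r)(\ln\delta^{-1} + \ln r)\Bigr).
\]
Summing this over $r = 1, \dots, R$, the powers $4^r$ and $2^r$ form geometric series whose totals are $O(4^R)$ and $O(2^R)$ respectively, and the log-factor can be pulled out by replacing $\ln r$ with $\ln R$. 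This yields
\[
\sum_{r=1}^R T_{S_{[1]}}^r \;=\; O\Bigl((\sigma_1^2\, 4^R + 2^R)(\ln\delta^{-1} + \ln R)\Bigr).
\]

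Finally I would plug in the bound $R = O(\ln \Delta_1^{-1})$ from $\calM_2$, which gives $4^R = O(\Delta_1^{-2})$, $2^R = O(\Delta_1^{-1})$, and $\ln R = O(\ln\ln\Delta_1^{-1})$. Substituting produces exactly
\[
\sum_{r=1}^{\infty} I_{S_{[1]}}^r\, T_{S_{[1]}}^r \;=\; O\!\left(\Bigl(\tfrac{\sigma_1^2}{\Delta_1^2} + \tfrac{1}{\Delta_1}\Bigr)\bigl(\ln\delta^{-1} + \ln\ln\Delta_1^{-1}\bigr)\right),
\]
as required. There is no essential obstacle here: the argument is a direct deterministic calculation once $\calM_2$ gives both the termination horizon $R$ and the guarantee that the best arm survives every round. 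The only minor care is to absorb the $r$-dependent confidence parameter $\delta_r = \delta/(2r^2)$ into the doubly logarithmic factor via $\ln\delta_r^{-1} = \ln\delta^{-1} + 2\ln r$, and to note that the geometric series is dominated by its last term so no extra logarithmic factor in $R$ creeps in outside the final $(\ln\delta^{-1} + \ln\ln\Delta_1^{-1})$ bracket.
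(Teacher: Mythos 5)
Your proposal is correct and follows essentially the same route as the paper: under $\calM_2$ the sum truncates at the termination round $R$ with $\eps_R = \Theta(\Delta_1)$, the geometric series in $r$ is dominated by its last term, and $\ln R = O(\ln\ln\Delta_1^{-1})$ absorbs the $r$-dependent factors from $\delta_r$ and $\eps_r$. The only cosmetic caveat is that deducing $4^R = O(\Delta_1^{-2})$ really uses the explicit bound $R \leq \lceil\log_2\Delta_2^{-1}\rceil + 1$ from the proof of Lemma~\ref{LEMMA:BEST-ARM-OUTPUT} rather than a generic $R = O(\ln\Delta_1^{-1})$, a point the paper glosses over in the same way.
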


We use $\calM_3$ to denote the event that, for each $r$, the  number of samples used in round $r$ is $\sum_{i=1}^n O( I_i^r T_i^r )$. The following lemma shows that $\calM_3$ happens with high probability.

\begin{lemma} \label{LEMMA:OPT-ARM-ID-ROUND-SAMP}
$\Pr[ \calM_3 ] \geq 1 - \delta / 6$ .
\end{lemma}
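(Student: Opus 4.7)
The plan is to bound the samples used in each round separately and then union-bound across all rounds. Within a single round $r$, three separate places consume samples: (i) the $|S_r|$ calls to $\MeanEst(i, \eps_r/2, \delta_r/18)$ on Line~\ref{line-7:opt-arm-id}, (ii) the call $\OptArmEst(S_r, \eps_r/2, \delta_r/18)$ on Line~\ref{line-8:opt-arm-id}, and (iii) the call $\OptArmEst(S_r \setminus \{a_r\}, \eps_r/2, \delta_r/18)$ on Line~\ref{line-9:opt-arm-id}. I will establish that each source individually consumes at most $O(\sum_{i\in S_r} T_i^r) = \sum_{i=1}^n O(I_i^r T_i^r)$ samples with high probability.

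For (ii) and (iii), I would directly invoke Theorem~\ref{thm:OptArmEst}: each call uses $O\bigl(\sum_{i\in S_r}(\sigma_i^2/\eps_r^2 + 1/\eps_r)(\ln\delta_r^{-1} + \ln\ln\eps_r^{-1})\bigr) = O(\sum_{i\in S_r} T_i^r)$ samples with failure probability at most $\delta_r/18$ each, for a combined failure of at most $\delta_r/9$ by a simple union bound.

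For (i), I would apply Lemma~\ref{LEMMA:MEAN-EST-COROL:b} with $j = 3$ to each arm $i \in S_r$: the call $\MeanEst(i, \eps_r/2, \delta_r/18)$ consumes at most $c(3\sigma_i^2/\eps_r^2 + 1/\eps_r)\ln(18/\delta_r) = O(T_i^r)$ samples with failure probability at most $(\delta_r/18)\cdot 2^{-60}$. A union bound over the (at most $n$) arms in $S_r$ gives a cumulative failure of at most $n\delta_r/(18\cdot 2^{60})$; the exponentially small $2^{-60}$ factor from the refined tail bound is precisely what enables this union bound to remain small without inflating the per-arm sample bound, and it is dominated by $\delta_r/18$ comfortably. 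Summing the per-arm bounds on the joint good event yields $O(\sum_{i\in S_r} T_i^r)$ total samples from (i).

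Combining (i)--(iii) by a union bound, the per-round sample bound holds with failure at most $\delta_r/18 + \delta_r/18 + n\delta_r/(18 \cdot 2^{60}) \leq \delta_r/6$. A final union bound across rounds, using $\sum_{r \geq 1} \delta_r = \delta\sum_{r\geq 1} 1/(2r^2) = \pi^2\delta/12$, yields total failure at most $\pi^2\delta/72 < \delta/6$, as required. The main subtlety is step (i): had we only relied on the basic $(1-\delta)$ guarantee of Lemma~\ref{LEMMA:MEAN-EST}, the naive union bound over $n$ arms would have lost a $\log n$ factor in the per-arm sample count; the refined tail in Lemma~\ref{LEMMA:MEAN-EST-COROL:b} is precisely designed to avoid this loss.
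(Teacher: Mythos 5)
Your overall decomposition matches the paper's: three sample-consuming sources per round, with Lines~\ref{line-8:opt-arm-id} and \ref{line-9:opt-arm-id} handled directly by Theorem~\ref{thm:OptArmEst} and a union bound over rounds using $\sum_r \delta_r \le \delta$. The gap is in your step (i). Applying Lemma~\ref{LEMMA:MEAN-EST-COROL:b} with a \emph{fixed} $j=3$ to each arm and then union-bounding over the arms of $S_r$ gives a cumulative failure probability of order $n\,\delta_r\,2^{-60}$, which is \emph{not} dominated by $\delta_r/18$ uniformly in $n$: the $2^{-60}$ is a constant, so the bound degrades linearly in the number of arms and your claim holds only for $n \le 2^{60}$. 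The refined tail of Lemma~\ref{LEMMA:MEAN-EST-COROL:b} decays in $j$, not in the number of arms, so it cannot rescue a per-arm union bound at fixed $j$; and if you instead let $j$ grow like $\log n$ so that the union bound closes, the per-arm sample count acquires a $\Theta(\log n)\cdot\sigma_i^2/\eps_r^2$ term, which is exactly the $\ln n$ loss the whole construction is designed to avoid. So step (i) as written neither proves the statement for general $n$ nor admits an obvious patch along the same lines.

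The paper closes this step by a different mechanism, its Lemma~\ref{lemma:n-mean-est-sample}: writing $Z_i$ for the samples used by $\MeanEst(i,\eps_r/2,\delta_r/18)$, it defines the truncated excess $\widetilde{Z}_i = (Z_i - 3cT_i)\cdot\mathds{1}\{Z_i \ge 3cT_i\}$, uses the tail of Lemma~\ref{LEMMA:MEAN-EST-COROL:b} to bound $\E[\widetilde{Z}_i] \le \delta c T_i$, and then applies Markov's inequality to the \emph{sum} $\sum_{i}\widetilde{Z}_i$. This yields $\sum_{i\in S_r} Z_i = O\bigl(\sum_{i\in S_r} T_i^r\bigr)$ with failure probability $\delta_r/54$, independent of $|S_r|$. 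In other words, the refined tail is exploited to control an expectation that is then aggregated across arms, not to support a union bound over arms. Replacing your step (i) with this aggregated Markov argument (or simply invoking Lemma~\ref{lemma:n-mean-est-sample} with confidence parameter $\delta_r/18$), the remainder of your proof goes through as written.
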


We are now ready to prove the following lemma on the sample complexity of $\OptArmId$.

\begin{lemma} \label{lemma:opt-arm-id-samp}
With probability at least $1 - 2 \delta / 3$,  the sample complexity of  $\OptArmId(n, \delta)$ is 
\begin{align*}
O \left( \sum_{i = 1}^n \left( \frac{\sigma_i^2}{\Delta_i^2} + \frac{1}{\Delta_i} \right) ( \ln \delta^{-1} + \ln \ln \Delta_i^{-1} ) \right).
\end{align*}
\end{lemma}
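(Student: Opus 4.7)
The plan is to prove Lemma~\ref{lemma:opt-arm-id-samp} as a straightforward consequence of the lemmas already established in this subsection, combined via a union bound. The key observation is that Lemma~\ref{LEMMA:OPT-ARM-ID-ROUND-SAMP} (event $\calM_3$) lets us bound the per-round sample cost by $\sum_i I_i^r T_i^r$; Lemma~\ref{LEMMA:BEST-ARM-OUTPUT} (event $\calM_2$) lets us control the number of pulls on the best arm; and Lemma~\ref{LEMMA:SAMP-BY-SUB-OPTIMAL-ARMS} controls the cumulative pulls on all sub-optimal arms. Summing over rounds and arms then produces the claimed bound.

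Concretely, I will first collect the three ``good'' events: $\calM_2$ (termination at round $r = O(\ln \Delta_2^{-1})$ with correct output), $\calM_3$ (per-round sample complexity bounded by $\sum_i O(I_i^r T_i^r)$), and the event $\calE_4$ from Lemma~\ref{LEMMA:SAMP-BY-SUB-OPTIMAL-ARMS}. Using Lemmas~\ref{LEMMA:BEST-ARM-REMAINS}, \ref{LEMMA:BEST-ARM-OUTPUT}, \ref{LEMMA:OPT-ARM-ID-ROUND-SAMP}, and the definition of $\calE_4$, I will check the probabilities: $\Pr[\neg \calM_2] \leq \Pr[\neg \calM_1] + \Pr[\neg \calM_2 \mid \calM_1] \leq \delta/9 + 2\delta/9 = \delta/3$; $\Pr[\neg \calM_3] \leq \delta/6$; and $\Pr[\neg \calE_4] \leq \Pr[\neg \calM_1] + \Pr[\neg \calE_4 \mid \calM_1] \leq \delta/9 + \delta/18 = \delta/6$. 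A union bound then gives $\Pr[\calM_2 \wedge \calM_3 \wedge \calE_4] \geq 1 - 2\delta/3$.

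Conditioning on this intersection, I would bound the total sample complexity as follows. Under $\calM_2$, the algorithm runs for finitely many rounds (at most $O(\ln \Delta_2^{-1})$), so all random variables are finite; under $\calM_3$, the total samples used equal $\sum_{r \geq 1} \sum_{i=1}^n O(I_i^r T_i^r) = O\bigl(\sum_{i=1}^n \sum_{r \geq 1} I_i^r T_i^r\bigr)$. I then split the outer sum into the best arm $i = S_{[1]}$ and the sub-optimal arms. For the best arm, Lemma~\ref{LEMMA:SAMP-BY-OPTIMAL-ARMS} (which requires only $\calM_2$) gives the contribution $O\bigl((\sigma_1^2/\Delta_1^2 + 1/\Delta_1)(\ln \delta^{-1} + \ln \ln \Delta_1^{-1})\bigr)$. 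For the sub-optimal arms, Lemma~\ref{LEMMA:SAMP-BY-SUB-OPTIMAL-ARMS} (which is exactly $\calE_4$) gives $O\bigl(\sum_{i \neq S_{[1]}} (\sigma_i^2/\Delta_i^2 + 1/\Delta_i)(\ln \delta^{-1} + \ln \ln \Delta_i^{-1})\bigr)$. Adding the two contributions yields the claimed bound.

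Since all nontrivial technical work has been isolated into Lemmas~\ref{LEMMA:SAMP-PER-ARM}--\ref{LEMMA:OPT-ARM-ID-ROUND-SAMP}, the main obstacle in this particular proof is purely bookkeeping: tracking the conditional probabilities carefully (in particular noting that $\calE_4$ and $\calM_2$ are both stated conditionally on $\calM_1$, so one must absorb the $\delta/9$ cost of $\calM_1$ into both when un-conditioning) so that the failure probabilities sum to at most $2\delta/3$ while the sample budgets sum to the desired expression. No new probabilistic estimate is needed.
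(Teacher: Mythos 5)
Your proposal is correct and follows essentially the same route as the paper's proof: both combine the events $\calM_2$, $\calM_3$, and the sub-optimal-arm bound of Lemma~\ref{LEMMA:SAMP-BY-SUB-OPTIMAL-ARMS} via a union bound (absorbing the $\delta/9$ cost of $\calM_1$ where needed), arriving at the same failure probability $\delta/3 + \delta/6 + \delta/6 = 2\delta/3$ and then splitting the sample count between the best arm (Lemma~\ref{LEMMA:SAMP-BY-OPTIMAL-ARMS}) and the sub-optimal arms. Your explicit remark that $\calM_2$ guarantees finitely many rounds, so the infinite sums are actually finite, is a small piece of bookkeeping the paper leaves implicit.
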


\begin{proof}
Note that $\Pr[ \calM_1 ] \geq 1 - \delta / 9$ by Lemma~\ref{LEMMA:BEST-ARM-REMAINS}. Further by Lemma~\ref{LEMMA:SAMP-BY-SUB-OPTIMAL-ARMS}, with probability at least $(1 - \delta / 9)(1 - \delta / 18) \geq 1 - \delta / 6$, we have
$\sum_{i \neq S_{[1]} } \sum_{r = 1}^{+\infty} I_i^r T_i^r = 
O \left( \sum_{i \neq S_{[1]} } \left( \frac{\sigma_i^2}{\Delta_i^2} + \frac{1}{\Delta_i} \right) ( \ln \delta^{-1} + \ln \ln \Delta_i^{-1} )  \right) $.
Note that $\Pr[ \calM_2 ] \geq 1 - \delta / 3$ by Lemma~\ref{lemma:opt-arm-id-correct}. Further by Lemma~\ref{LEMMA:SAMP-BY-OPTIMAL-ARMS}, with probability at least $1 - \delta / 3$, it holds that
$\sum_{r = 1}^{+\infty} I_{ S_{[1]} }^r T_{ S_{[1]} }^r = 
O \left( \left( \frac{\sigma_1^2}{\Delta_1^2} + \frac{1}{\Delta_1} \right) ( \ln \delta^{-1} + \ln \ln \Delta_1^{-1} )  \right) $.
Via a union bound, with probability at least $1 - \delta / 2$, 
\begin{multline} \label{equ-3:opt-arm-id-samp}
\sum_{i=1}^n \sum_{r = 1}^{+\infty} I_i^r T_i^r \\ = 
O \left( \sum_{i=1}^n \left( \frac{\sigma_i^2}{\Delta_i^2} + \frac{1}{\Delta_i} \right) ( \ln \delta^{-1} + \ln \ln \Delta_i^{-1} )  \right).
\end{multline}

Note that $\Pr[ \calM_3 ] \geq \delta / 6$ by Lemma~\ref{LEMMA:OPT-ARM-ID-ROUND-SAMP}. Conditioning on \eqref{equ-3:opt-arm-id-samp} and event $\calM_3$ which happens  with probability at least $1 - 2 \delta / 3$ (via a union bound), the sample complexity of algorithm $\OptArmId(n, \delta)$ is 
$
 \sum_{r = 1}^{+\infty} \sum_{i = 1}^n O( I_i^r T_i^r ) = O\left( \sum_{i=1}^n \sum_{r = 1}^{+\infty} I_i^r T_i^r \right) = 
O \left( \sum_{i=1}^n \left( \frac{\sigma_i^2}{\Delta_i^2} + \frac{1}{\Delta_i} \right) ( \ln \delta^{-1} + \ln \ln \Delta_i^{-1} )  \right).
$
\end{proof}

\section{Conclusion and Future Works}\label{sec:future}
In this paper, we present a variance-dependent best arm identification algorithm and the nearly matching sample complexity lower bound.

While our algorithm almost achieves theoretical optimality, its empirical performance suffers from the large constant factors introduced by multiple subroutines. It is worthwhile to design algorithms with better empirical performance and the same sample complexity bound. The UCB-style algorithms (e.g.\  lil'UCB in \cite{jamieson2014lil}) are a very promising direction towards this end.

On the theoretical side, we believe that it is promising to combine our approach with the ideas in \cite{chen2017towards} and improve the doubly-logarithmic terms in our sample complexity bound. It is very interesting to investigate the ultimate sample complexity of the problem.

\begin{acknowledgements}
We want to thank Yuan Zhou for providing valuable ideas and many helpful discussions. Pinyan Lu is supported by Science and Technology Innovation 2030 –“New Generation of Artificial Intelligence” Major Project No.(2018AAA0100903), NSFC grant 61922052 and 61932002, Innovation Program of Shanghai Municipal Education Commission, Program for Innovative Research Team of Shanghai University of Finance and Economics, and the Fundamental Research Funds for the Central Universities. Chao Tao is supported in part by NSF IIS-1633215, NSF CCF-1844234, and NSF  CCF-2006591.
\end{acknowledgements}

\onecolumn
\appendix

\title{Variance-Dependent Best Arm Identification (Supplementary Material)}



\setcounter{theorem}{0}
\renewcommand{\thetheorem}{\Alph{section}.\arabic{theorem}}

\section{Concentration Inequalities}

\begin{proposition}[Multiplicative Chernoff Bound]
\label{prop:chernoffbound}
Let $X_i$ ($1 \leq i \leq n$) be i.i.d. random variables supported on $[0, 1]$. Let $X = \frac{1}{n} \sum_{i = 1}^n X_i$ and $\E X_1 = \mu$. We have that
    \[ 
    \Pr[ X < (1 - \eps) \mu ] <  \left( \frac{ e^{ - \eps} }{(1 - \eps)^{ (1 - \eps) }} \right)^{n\mu},  \forall \eps \in (0, 1), \text{~and} 
    \]
    \[ 
    \Pr[ X > (1 + \eps) \mu ] <  \left( \frac{ e^{\eps} }{(1 + \eps)^{ (1+\eps) }} \right)^{n\mu}, \forall \eps > 0. 
    \]
\end{proposition}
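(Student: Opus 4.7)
The plan is to apply the standard Chernoff--Cram\'er method: upper-bound the tail probability by the moment generating function (MGF) via Markov's inequality, exploit independence to factor the MGF over the $X_i$, and then optimize the free parameter in the exponent. Concretely, for the upper tail I would fix any $t > 0$ and write
\[
\Pr[X > (1+\eps)\mu] = \Pr[e^{tnX} > e^{tn(1+\eps)\mu}] \leq e^{-tn(1+\eps)\mu} \, \E[e^{tnX}] = e^{-tn(1+\eps)\mu} \prod_{i=1}^n \E[e^{tX_i}],
\]
where the last equality uses the i.i.d.\ assumption. The lower tail will be handled by an entirely symmetric argument at the end.

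The central technical step is to bound each factor $\E[e^{tX_i}]$. Since $X_i \in [0,1]$, convexity of the map $x \mapsto e^{tx}$ on $[0,1]$ gives the pointwise linear upper bound $e^{tx} \leq 1 + x(e^t - 1)$, so $\E[e^{tX_i}] \leq 1 + \mu(e^t - 1) \leq \exp(\mu(e^t - 1))$, where the last inequality uses $1 + z \leq e^z$. Plugging this in collapses the bound to the single-parameter form $\exp\!\left(n\mu(e^t - 1) - tn(1+\eps)\mu\right)$. Differentiating the exponent in $t$ and setting the derivative to zero picks out the minimizer $t = \ln(1+\eps)$, which is strictly positive for $\eps > 0$ and hence lies in the valid range; substituting and simplifying produces exactly $\left( e^\eps / (1+\eps)^{1+\eps} \right)^{n\mu}$.

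For the lower-tail statement, I would run the mirror argument with a negative exponent: for $t > 0$, Markov's inequality applied to $e^{-tnX}$ yields $\Pr[X < (1-\eps)\mu] \leq e^{tn(1-\eps)\mu} \prod_i \E[e^{-tX_i}]$, the same convexity bound gives $\E[e^{-tX_i}] \leq \exp(\mu(e^{-t} - 1))$, and optimizing at $t = -\ln(1-\eps) > 0$ (well-defined precisely because the statement restricts to $\eps \in (0,1)$) produces the claimed expression. Since this is a classical textbook inequality, there is no genuine obstacle; the only mild care needed is checking that each optimizing $t$ lies in the appropriate domain, that the convexity bound on $\E[e^{\pm tX_i}]$ is valid (which relies crucially on $X_i \in [0,1]$), and that the $\eps \in (0,1)$ restriction in the lower tail is used only to keep $\ln(1-\eps)$ finite.
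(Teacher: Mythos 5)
Your proof is correct and is the standard Chernoff--Cram\'er argument; the paper itself states Proposition~\ref{prop:chernoffbound} as a classical fact in the appendix without giving any proof, so there is nothing to compare against. All the steps you identify as needing care (the secant bound $e^{tx}\le 1+x(e^t-1)$ valid because $X_i\in[0,1]$, the positivity of the optimizing $t=\ln(1+\eps)$ and $t=-\ln(1-\eps)$, and the role of $\eps\in(0,1)$ in the lower tail) are exactly the right ones. The only cosmetic point is that the proposition asserts strict inequalities while Markov plus $1+z\le e^z$ gives only $\le$; strictness follows by observing that $1+z<e^z$ whenever $z\ne 0$ (which holds here since $\mu(e^{\pm t}-1)\ne 0$ when $\mu>0$ and $t>0$), and the case $\mu=0$ is trivial since the left-hand probabilities vanish while the right-hand sides equal $1$.
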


\begin{proposition}[Bernstein Inequality] \label{prop:bernstein}
Let $X_i$ ($1 \leq i \leq n$) be \textit{i.i.d.}\ random variables. Suppose $|X_i| \leq M$ holds almost surely, for any $i$. Let $X = \frac{1}{n} \sum_{i = 1}^n X_i$, $\E [X_1] = \mu$ and $ \Var X_1 = \sigma^2 $. Then, for all positive $t$, it holds that

\[
\Pr[ | X - \mu | > t ] \leq 2\exp \left( - \frac{n t^2 / 2}{ \sigma^2 + \frac{1}{3} M t } \right).
\]

\end{proposition}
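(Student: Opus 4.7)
The plan is to derive the bound via the standard Chernoff-style moment generating function (MGF) argument that underlies all Bernstein-type inequalities. By applying the same argument to $-X_i$ in place of $X_i$ and taking a union bound over the two tails, it suffices to establish the one-sided inequality $\Pr[X - \mu > t] \le \exp(-nt^2/(2(\sigma^2 + Mt/3)))$ and double it at the end. Setting $Y_i = X_i - \mu$ so that $\E[Y_i] = 0$ and $\Var(Y_i) = \sigma^2$, Markov's inequality applied to $\exp(\lambda \sum_i Y_i)$ gives, for any $\lambda > 0$,
\[
\Pr\left[\sum_{i=1}^n Y_i > nt\right] \le e^{-\lambda n t} \prod_{i=1}^n \E\bigl[e^{\lambda Y_i}\bigr],
\]
reducing the problem to bounding the MGF of each $Y_i$.

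The central technical step is to show that $\E[e^{\lambda Y_i}] \le \exp\bigl(\lambda^2 \sigma^2/(2(1 - \lambda M/3))\bigr)$ for $\lambda \in [0, 3/M)$. I would establish this by expanding $e^{\lambda Y_i} = 1 + \lambda Y_i + \sum_{k \ge 2}(\lambda Y_i)^k/k!$, taking expectations, and invoking the moment bound $|\E[Y_i^k]| \le M^{k-2}\sigma^2$ for $k \ge 2$, which follows from combining the almost-sure bound $|Y_i| \le M$ (with any constant overhead from centering absorbed into $M$) with $\E[Y_i^2] = \sigma^2$. Summing the resulting series, using the identity $\sum_{k \ge 2} x^k/k! = e^x - 1 - x$, and then applying the elementary analytic inequality $e^x - 1 - x \le x^2/(2(1 - x/3))$ for $x \in [0, 3)$ together with $1 + u \le e^u$, produces the desired MGF bound.

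Substituting into the Chernoff inequality yields
\[
\Pr[X - \mu > t] \le \exp\left(\frac{n\lambda^2 \sigma^2/2}{1 - \lambda M/3} - \lambda n t\right).
\]
I would then optimize the exponent over $\lambda \in (0, 3/M)$. Setting $\lambda^\star = t/(\sigma^2 + Mt/3)$ and simplifying algebraically recovers the target exponent $-nt^2/(2(\sigma^2 + Mt/3))$; doubling for the two-sided version then completes the argument.

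The main obstacle, and really the only non-routine step, is the analytic inequality $e^x - 1 - x \le x^2/(2(1 - x/3))$ on $[0, 3)$. I would verify it by term-by-term comparison of Taylor series: expanding $x^2/(2(1 - x/3)) = \sum_{k \ge 2} x^k/(2 \cdot 3^{k-2})$, it suffices to check that $1/k! \le 1/(2 \cdot 3^{k-2})$ for all $k \ge 2$, i.e., $2 \cdot 3^{k-2} \le k!$, which follows by a short induction. Everything else — the Chernoff bound, the power-series identity, and the optimization in $\lambda$ — is routine calculation.
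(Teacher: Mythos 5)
The paper does not prove this proposition: it is stated in the appendix's ``Concentration Inequalities'' section as a classical result (alongside the multiplicative Chernoff bound) and is used as a black box, so there is no in-paper argument to compare against. Your proof is the standard Chernoff--MGF derivation of Bernstein's inequality and it is correct: the moment bound $|\E[Y_i^k]| \leq M^{k-2}\sigma^2$ follows from $|\E[Y_i^k]| \leq \E[|Y_i|^{k-2} Y_i^2] \leq M^{k-2}\sigma^2$ once $|Y_i|\leq M$, the series comparison $2\cdot 3^{k-2}\leq k!$ goes through by induction, and the choice $\lambda^\star = t/(\sigma^2 + Mt/3)$ gives $1-\lambda^\star M/3 = \sigma^2/(\sigma^2+Mt/3)$, from which the exponent $-nt^2/(2(\sigma^2+Mt/3))$ falls out exactly. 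The one point to be careful about is the centering step you wave at parenthetically: the hypothesis $|X_i|\leq M$ only gives $|X_i-\mu|\leq 2M$, and ``absorbing the overhead into $M$'' would degrade the denominator to $\sigma^2 + \tfrac{2}{3}Mt$, which is not the stated bound. This imprecision is inherited from the proposition itself (the classical statement assumes the \emph{centered} variables are bounded by $M$, or that the $X_i$ are zero-mean); in every use in this paper the rewards lie in $[0,1]$, so $|X_i-\mu|\leq 1 = M$ holds and the bound as stated is valid there. If you want your write-up to be airtight against the literal hypothesis, either note that $|X_i-\mu|\leq M$ whenever the $X_i$ are supported on an interval of length $M$ containing $0$ (the case relevant here), or state the moment condition $|\E[(X_i-\mu)^k]|\leq \tfrac{1}{2}k!\,\sigma^2 (M/3)^{k-2}$ as the actual assumption being used.
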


\section{Proof of Theorem~\ref{THM:EXPECTED-TIME-ALGO}}
\label{app:expected-time-algo}

Let $\mathbb{A}_i$ denote the algorithm $\OptArmId(n, \delta / 2^i)$. Algorithm~$\OptArmIdE(n, \delta)$ is constructed as follows. It is easy to verify that after the finish of round $r$, $\mathbb{A}_i$ makes $\lfloor r / 2^i \rfloor$ samples. Therefore, after round $r$, the total number of samples made is at most $\sum_{i = 1}^{\lfloor \log_2 r \rfloor} \lfloor r/2^i \rfloor \leq r$.

\begin{algorithm} 
\DontPrintSemicolon
\caption{$\OptArmIdE(n, \delta)$}
\label{alg:expected-time}
\textbf{Input:} Number of arms $n$ and confidence level $\delta$ \\
\For {$r \leftarrow 1 \textbf{~to~} +\infty$} {
\For {$i \leftarrow 1 \textbf{~to~} \lfloor \log_2 r \rfloor$} {
\If {$2^i | r$} {
 Run $\mathbb{A}_i$ until one of the following two conditions is satisfied: \\
 \Indp
 i) $\mathbb{A}_i$ needs to sample some arm, or \\
 ii) $\mathbb{A}_i$ terminates with an output arm $a$ \\
 \Indm
\lIf {i) is satisfied} {
 Sample the arm for one time and feed the observation to $\mathbb{A}_i$
}
\lElse{
 \textbf{Output:} Arm $a$
}
}
}
}
\end{algorithm}

Before proceeding, let us define some symbols. For simplicity, we define $\Phi \coloneqq \sum_{i = 1}^n \left( \frac{\sigma_i^2}{\Delta_i^2} + \frac{1}{\Delta_i} \right)$ and $\Psi \coloneqq \sum_{i = 1}^n \left( \frac{\sigma_i^2}{\Delta_i^2} + \frac{1}{\Delta_i} \right) \ln \ln \Delta_i^{-1}$. Let $c$ be a constant hidden in the big-O notation of Theorem~\ref{THM:EXPECTED-TIME-ALGO} and $\mathcal{G}_i$ denote the event that $\mathbb{A}_i$ outputs the best arm and the sample complexity is $c( \Phi \ln (2^i / \delta) + \Psi)$. 

We first prove the $\delta$-correctness of Algorithm~$\OptArmIdE(n, \delta)$. By Theorem~\ref{THM:EXPECTED-TIME-ALGO}, we have $\Pr[ \mathcal{G}_i ] \geq 1- \delta / 2^i$. Let $\mathcal{G}$ denote the event $\bigwedge_{i = 1}^{+\infty} \mathcal{G}_i$. Via a union bound, we have $\Pr[ \mathcal{G}] \geq 1 - \sum_{i = 1}^{+\infty} \Pr[ \mathcal{G}_i ] = 1 - \delta \sum_{i = 1}^{+\infty} 2^{-i} \geq 1 - \delta$. We now condition on the event $\mathcal{G}$ until the end of this paragraph. Note that during the first $r$ rounds of  $\OptArmIdE$, $\mathbb{A}_1$ makes at least $\lfloor r / 2 \rfloor$ samples. Hence, $\OptArmIdE$ must stop with $r \leq 2c(\Phi \ln (2 / \delta) + \Psi) + 2$.  Since every $\mathbb{A}_i$ outputs the best arm, so does $\OptArmIdE$. Therefore, the first part of  Theorem~\ref{THM:EXPECTED-TIME-ALGO} is proved. 

Next, we focus on proving the upper bound of the expected sample complexity of $\OptArmIdE$. Let $\mathcal{H}_i$ denote the event $\mathcal{G}_1^c \wedge \mathcal{G}_2^c \wedge \cdots \wedge \mathcal{G}_{i - 1}^c \wedge \mathcal{G}_i$. Since samples are independently made, we have 
\begin{equation} \label{app:expected-time:eq:1}
\Pr[ \mathcal{H}_i ] = \Pr[ \mathcal{G}_1^c] \cdots \Pr[ \mathcal{G}_{i - 1}^c ] \Pr[ \mathcal{G}_i] \leq  \prod_{j = 1}^{i - 1} (\delta / 2^j) \leq \delta^{i - 1}.
\end{equation}

We claim that the set $\{ \mathcal{H}_i \}_{i = 1}^{ +\infty }$ is almost a partition of the whole probability space $\Omega$ i.e., it satisfies the following two properties:
\begin{enumerate}[label=\roman*)]
\item $\mathcal{H}_i \cap \mathcal{H}_j = \emptyset$, for any $i < j$, and
\item $\Pr\left[ \bigcup_{i = 1}^{+\infty} \mathcal{H}_i \right] = 1$.
\end{enumerate}
The first property can be easily verified since $\mathcal{H}_i \subset \mathcal{G}_i$ and $\mathcal{H}_j \subset \mathcal{G}_i^c$. For the second property, since the partial sum $\bigcup_{i = 1}^j \mathcal{H}_i $ is equal to $\Omega \backslash ( \mathcal{G}_1^c \wedge \mathcal{G}_2^c \wedge \cdots \wedge \mathcal{G}_j^c )$. Hence $\Pr\left[ \bigcup_{i = 1}^{+\infty} \mathcal{H}_i \right] = \lim_{i \to +\infty}(1 - \Pr[ \mathcal{G}_1^c \wedge \mathcal{G}_2^c \wedge \cdots \wedge \mathcal{G}_i^c ]) \geq 1 - \lim_{i \to +\infty} \prod_{j =1}^{i} \delta / 2^j = 1$. Therefore this claim is proved.

Let $T$ and $T_i$ be the random variables representing the sample complexities of $\OptArmIdE$ and $\mathbb{A}_i$ respectively. Note that during the first $r$ rounds of $\OptArmIdE$, $\mathbb{A}_i$ makes $\lfloor r / 2^i \rfloor $ samples. Hence we have $T \leq 2^i (T_i + 1)$. Further by the law of total expectation, there is 
\begin{align*}
\E [ T ] & = \sum_{i = 1}^{+ \infty} \Pr[\mathcal{H}_i] \E [ T | \mathcal{H}_i ] \\
& \leq \sum_{i = 1}^{+ \infty} \delta^{i - 1} 2^i (\Phi \ln (2^i / \delta) + \Psi + 1) \\
& = O\left( \sum_{i = 1}^{+ \infty} i (0.2)^{i - 1}  \Phi \right) + O\left( \sum_{i = 1}^{+ \infty} (0.2)^{i - 1}  ( \Phi \ln \delta^{-1} + \Psi + 1) \right)  \\
& = O( \Phi \ln \delta^{-1} + \Psi ),
\end{align*}
where the inequality is due to \eqref{app:expected-time:eq:1} and the second equality is due to $\delta \leq 0.1$, which concludes the proof of the second part of Theorem~\ref{THM:EXPECTED-TIME-ALGO}.

\section{Calculation for Example~\ref{EXP:1}} \label{app:easy-example}

We first calculate the expected sample complexity of $\OptArmIdE(S, \delta)$ upon the input described in Example~\ref{EXP:1}. By Theorem~\ref{THM:EXPECTED-TIME-ALGO}, the number of samples used is upper bounded by
\begin{equation} \label{equ-1:example-0}
O \left( \sum_{i = 1}^n \left( \frac{\sigma_i^2}{\Delta_i^2} + \frac{1}{\Delta_i} \right) ( \ln \delta^{-1} + \ln (e + \ln \Delta_i^{-1} ) ) \right). 
\end{equation}
Note that 1) $\theta_i = 1 - \frac{i}{n}$ and for Bernoulli arms we have $\sigma_i^2 = \theta_i (1 - \theta_i) \leq 1 - \theta_i$ for all $i = 1, 2, \dots, n$; 2) $\sigma_1^2 \Delta_1^{-2} + \Delta_1^{-1} = O(\sigma_2^2 \Delta_2^{-2} + \Delta_2^{-1})$. Therefore, 
\begin{align*}
  \eqref{equ-1:example-0} & = O\left(\sum_{i=2}^n \left(\frac{i/n}{((i-1)/n)^2} + \frac{1}{(i-1)/n}\right)\left( \ln \delta^{-1} + \ln \left(e + \ln \frac{n}{i-1}\right) \right)   \right)\\
  &= O \left( \sum_{i = 2}^n \frac{n(2i - 1)}{(i - 1)^2}\left( \ln \delta^{-1} + \ln \left(e + \ln \frac{n}{i-1}\right) \right)\right) \nonumber  \\
 & = O \left( \sum_{i = 2}^n \frac{n}{i} \left( \ln \delta^{-1} + \ln \ln \frac{15 n}{i } \right) \right) .
\end{align*}
Note that $\sum_{i = 2}^n \frac{n}{i} \leq \int_{1}^{n} \frac{n}{x} d x$ and $\sum_{i = 2}^n \frac{n}{i} \ln\ln \frac{15 n}{i} \leq \int_{1}^{n} \frac{n}{x} \ln \ln \frac{15 n}{x} d x$. We further have
\begin{align}
 \eqref{equ-1:example-0}  & \leq O\left( \int_{1}^{n} \frac{n}{x} d x \cdot \ln \delta^{-1}+ \int_{1}^{{n} } \frac{n}{x} \ln \ln \frac{15 n}{x} d x \right)  
 = O \left( n \ln n ( \ln \delta^{-1} + \ln \ln n) \right) \nonumber.
\end{align}

In contrast, the expressions in the big-O notation in both \eqref{eq:gap-dependent-bound-1} and \eqref{eq:gap-dependent-bound-2} are lower bounded by 
\begin{align*}
\Omega \left(\sum_{i = 2}^{n} \Delta_i^{-2} \ln \delta^{-1} \right) = \Omega \left( \sum_{i = 2}^n \frac{n^2}{(i - 1)^2} \ln \delta^{-1} \right) = \Omega( n^2 \ln \delta^{-1}).
\end{align*}

\section{Missing Proofs in Section~\ref{SEC:VAR-ESTIMATE}} \label{app:proofs-in-sec:var-estimate}

\subsection{Proof of Lemma~\ref{LEMMA:VARIANCE-SAMPLING}}
Let $X_r$ be the random variable representing the $r$-th sample and $Y_r = \frac{(X_r - X_{r + T})^2}{2}$. Note that $\E Y_r = \sigma_i^2$ and $Y_r$'s are \textit{i.i.d.}. 

If $\sigma_i^2 \geq 2\tau$, we have 
\begin{multline*}
\Pr[ \hat{\sigma}_i^2 \leq \tau ] = \Pr \left[ \hat{\sigma}_i^2 \leq \left(1 - \frac{ \sigma_i^2 - \tau }{\sigma_i^2} \right) \sigma_i^2 \right] 
< \left( \frac{e^{ - \frac{ \sigma_i^2 - \tau }{\sigma_i^2}}}{ \left( \frac{\tau}{\sigma_i^2} \right)^{ \left( \frac{\tau}{\sigma_i^2} \right)}} \right)^{T \sigma_i^2}
\leq \delta^{\frac{ c(\sigma_i^2 - \tau) }{ \tau }} \cdot \left( \frac{\tau}{\sigma_i^2} \right)^{c \ln \delta^{-1}} \leq \delta \cdot  \left( \frac{\tau}{\sigma_i^2} \right)^c,
\end{multline*}
where the second inequality is due to Proposition \ref{prop:chernoffbound} and the last inequality is due to $\delta \leq e^{-1}$. Condition on event $\hat{\sigma}_i^2 > \tau$ which happens with probability at least $1 - \delta \cdot \left( \frac{\tau}{\sigma_i^2} \right)^c $, algorithm $\VarTest(i, \tau, \delta, c)$ outputs \true.

If $\sigma_i^2 \leq \tau / 2$, we have
\begin{multline*}
\Pr[ \hat{\sigma}_i^2 > \tau ] = \Pr \left[ \hat{\sigma}_i^2 > \left(1 + \frac{\tau - \sigma_i^2}{\sigma_i^2} \right) \sigma_i^2 \right] 
< \left( \frac{e^{\frac{\tau - \sigma_i^2}{\sigma_i^2}}}{ \left( \frac{\tau}{\sigma_i^2} \right)^{ \left( \frac{\tau}{\sigma_i^2} \right)}} \right)^{T \sigma_i^2}
\leq \delta^{\frac{c\tau}{\tau - \sigma_i^2}} \cdot \left( \frac{\sigma_i^2}{\tau} \right)^{c \ln \delta^{-1}} \leq \delta \cdot  \left( \frac{\sigma_i^2}{\tau} \right)^c,
\end{multline*}
where the second inequality is due to Proposition~\ref{prop:chernoffbound} and the last inequality is due to $\delta \leq e^{-1}$. Condition on event $\hat{\sigma}_i^2 \leq \tau$ which happens with probability at least $1 - \delta \cdot  \left( \frac{\sigma_i^2}{\tau} \right)^c $, algorithm $\VarTest(i, \tau, \delta, c)$ outputs \false.

Finally, it is straightforward to verify that the sample complexity is $\frac{2c}{\tau} \ln \delta^{-1}$. This concludes the proof of the lemma.

\subsection{Proof of Lemma~\ref{LEMMA:VARIANCE-EST}}

Suppose algorithm $\VarEst(i, \delta, \ell)$ terminates with $r = r_0$.

Consider the first claim. It is the easy to check that $\tau_{r_0} > \ell / 2$.  By Lemma~\ref{LEMMA:VARIANCE-SAMPLING},  $\VarTest(i, \tau_r, \delta / e , 80)$ uses $O( \frac{1}{\tau_r} \ln \delta^{-1} )$ samples. Hence, total samples is bounded by

\[
\sum_{r = 1}^{r_0} O\left( \frac{1}{\tau_r} \ln \delta^{-1} \right)  
= O\left(  \frac{1}{\tau_{r_0}} \ln \delta^{-1}   \right) =  O\left(  \frac{1}{\ell} \ln \delta^{-1} \right),
\]
where the last equality holds since $\tau_{r_0} > \ell / 2$ concluding the proof of the first claim.

Let $t$ be the smallest index such that $\sigma_i^2 > \tau_t$. Hence, $\sigma_i^2 \in (\tau_t, 2\tau_t]$. It is straightforward to verify the following facts: 1) for $r = 1, \dots, (t - 2)$, we have $\sigma_i^2 \leq 2\tau_t \leq \tau_r / 2$; 2) for $r \geq t + 1$, we have $\sigma_i^2 > \tau_t \geq 2\tau_r$. Let $\mathcal{E}$ denote the event when $r = 1, \dots, (t - 2)$, $\VarTest(i, \tau_r, \delta / e, 80) $ outputs \false, and when $r \geq t+1$,  $\VarTest(i, \tau_r, \delta / e, 80)$ outputs \true. By Lemma~\ref{LEMMA:VARIANCE-SAMPLING} and a union bound, we have
\[
\Pr[ \mathcal{E} ] \geq 1 - \frac{\delta}{e} \sum_{r = 1}^{t - 2} \frac{\sigma_i^2}{\tau_r} - \frac{\delta}{e} \sum_{r = t+1}^{+ \infty} \frac{\tau_r}{\sigma_i^2}   \geq 1 - \frac{\delta}{e} \sum_{r = 1}^{t - 2} \left( \frac{1}{2} \right)^r - \frac{\delta}{e} \sum_{r = t+1}^{+ \infty} \left( \frac{1}{2} \right)^{r - t}  \geq 1 - \delta.
\]

Now consider the second claim. Recall that $\sigma_i^2 \in (\tau_t, 2\tau_t]$. Given that $\sigma_i^2 \in (\ell, 1]$, we  have $\tau_t > \ell / 2$. Condition on event $\mathcal{E}$ which happens with probability at least $1 - \delta$. $\VarEst(i, \delta, \ell)$ stops with $r = t - 1, t$ or $t + 1$. Therefore, we have $\sigma_i^2 \in (\tau_t, 2\tau_t] \subset (\tau_{r_0} / 2, 4\tau_{r_0}]$, which means $\tau = \tau_{r_0} \in [\sigma_i^2 / 4, 2\sigma_i^2)$. Moreover, the sample complexity is bounded by 
\[
\sum_{r = 1}^{t + 1} O\left( \frac{1}{\tau_r} \ln \delta^{-1} \right)  
= O\left(  \frac{1}{\tau_{t+1}} \ln \delta^{-1}   \right) =  O\left(  \frac{1}{\sigma_i^2} \ln \delta^{-1} \right),
\]
where the last equality is due to $\sigma_i^2 = \Theta( \tau_{t + 1} )$.

Suppose $\VarEst(i, \delta, \ell)$ terminates with $\tau_{r_0} \geq 2 \sigma_i^2 $ which means $\VarTest(i, \tau_r, \delta / e , 80)$ outputs \true when $r = r_0$. Recall that $\sigma_i^2 \leq \tau_{r_0} / 2$. By~Lemma~\ref{LEMMA:VARIANCE-SAMPLING}, the probability that this event happens is no greater than 

\[ 
\delta \cdot \left( \frac{\sigma_i^2}{\tau_{r_0} } \right)^{80} =  \delta \cdot 2^{- 80 \log_2 \left( \frac{\tau_{r_0}}{\sigma_i^2}  \right)}\leq \delta \cdot 2^{-40r_m},
\]
where the last inequality is due to $r_m = \left\lceil \log_2 \left( \frac{\tau_{r_0}}{\sigma_i^2}  \right) \right\rceil \leq 2 \log_2 \left( \frac{\tau_{r_0}}{\sigma_i^2}  \right)$ when $\frac{\tau_{r_0}}{\sigma_i^2} \geq 2$.
On the other hand, suppose $\VarEst(i, \delta, \ell)$ terminates with $\tau_{r_0} < \sigma_i^2 /4 $ which means $\VarTest(i, \tau_r, \delta / e , 80)$ outputs \false when $r = r_0 - 1$.
Since $\sigma_i^2 > 2 \tau_{r_0 - 1}$, by Lemma~\ref{LEMMA:VARIANCE-SAMPLING}, the probability that this event happens is bounded by
\[
 \delta \cdot \left( \frac{\tau_{r_0 - 1} }{\sigma_i^2} \right)^{80} = \delta \cdot 2^{- 80 \left( \log_2 \left( \frac{\sigma_i^2}{\tau_{r_0}}  \right) - 1 \right) } \leq \delta \cdot 2^{-20r_m},
\]
where the last inequality is due to $r_m = \left\lceil \log_2 \left( \frac{\sigma_i^2}{\tau_{r_0}}  \right) \right\rceil \leq 4 \left( \log_2 \left( \frac{\sigma_i^2}{\tau_{r_0}}  \right) - 1 \right)$ when $\frac{\sigma_i^2}{\tau_{r_0}} > 4$ concluding the proof of the second claim. 

For the last claim, recall that $\tau = \tau_{r_0} \geq 2\ell > \ell$, which means $\VarTest(i, \tau_r, \delta / e , 80)$ outputs \true when $r = r_0$. Also, we have $\tau = \tau_{r_0} > 2 \sigma_i^2$. Using the same way as that in the proof of the second claim, this claim can also be proved. 

\subsection{Proof of Lemma~\ref{LEMMA:MEAN-EST}}

Let $\mathcal{E}_1$ be the event that $\hat{\sigma}_i^2 \geq  \sigma_i^2 / 4$. According to Lemma~\ref{LEMMA:VARIANCE-EST:b}, we have $\Pr[ \mathcal{E}_1 | \sigma_i^2 \in (\ell, 1] ] \geq 1 - \delta/2$. Also note that when $\sigma_i^2 \leq \ell$, there is $\hat{\sigma}_i^2 > \ell / 2 \geq \sigma_i^2 / 4$. Hence, it holds that $\Pr[ \mathcal{E}_1 ] \geq 1 - \delta / 2$. 

Condition on event $\mathcal{E}_1$ which happens with probability at least $1 - \delta / 2$. Let $\mathcal{E}_2$ be the event that $| \hat{\theta}_i - \theta_i | \leq \eps$ and the number of samples used at Line~3 of $\MeanEst(i, \eps, \delta)$ is bounded by $O
\left( \left( \frac{\sigma_i^2}{\eps^2} + \frac{1}{\eps} \right) \ln \delta^{-1} \right)$. According to event $\mathcal{E}_1$ and Proposition~\ref{prop:bernstein}, it holds that with probability at least $1 - \delta / 2$, event $\mathcal{E}_2$ happens, which means $\Pr[ \mathcal{E}_2 | \mathcal{E}_1 ] \geq 1 - \delta / 2$. Therefore,

\[
\Pr[ \mathcal{E}_2] \geq \Pr[ \mathcal{E}_1 \wedge \mathcal{E}_2 ] = \Pr[ \mathcal{E}_2 | \mathcal{E}_1 ] \cdot \Pr[ \mathcal{E}_1 ] = (1 - \delta / 2) \cdot (1 - \delta / 2) \geq 1 - \delta.
\]

Conditioning on event $ \mathcal{E}_2$ which happens with probability at least $1 - \delta$, we have $| \hat{\theta}_i - \theta_i | \leq \eps$ and the number of samples used at Line~3 of $\MeanEst(i, \eps, \delta)$ is $O
\left( \left( \frac{\sigma_i^2}{\eps^2} + \frac{1}{\eps} \right) \ln \delta^{-1} \right)$. Also note that the number of samples used at Line~2 of $\MeanEst(i, \eps, \delta)$ is always bounded by $O( \frac{1}{\eps} \ln \delta^{-1} )$ by Lemma~\ref{LEMMA:VARIANCE-EST:a}. Therefore, this lemma is proved.

\subsection{Proof of Lemma~\ref{LEMMA:MEAN-EST-COROL}}

According to Lemma~\ref{LEMMA:VARIANCE-EST:a}, Line~2 of  $\MeanEst(i, \eps, \delta)$ uses at most $O\left( \frac{1}{\eps} \ln \delta^{-1} \right)$ samples. We bound the number of samples used at Line~\ref{alg:mean-est-line3} by observing $\hat{\sigma}_i^2 \leq 1$. Hence, the first claim of the lemma is proved.

For the second claim, according to Lemma~\ref{LEMMA:VARIANCE-EST:b} and \ref{LEMMA:VARIANCE-EST:c}, there is $\Pr[ \hat{\sigma}_a^2 = 2^{-k} ] \leq \delta \cdot 2^{-20 r_m }  $ for $2^{-k} \geq \max\{ 3\sigma_i^2, 2\eps \}$. Via a union bound,  with probability at least $1 - \sum_{k > j} \delta \cdot 2^{-20k} \geq 1 - \delta \cdot 2^{-20j}$, it holds that $\hat{\sigma}_i^2 \leq j \sigma_i^2 \leq \max\{ j \sigma_i^2, 2\eps \}$ for $j \geq \max\{ 3, \frac{2\eps}{\sigma_i^2} \} $. Note that when $3 \leq j < \frac{2\eps}{\sigma_i^2}$, $\hat{\sigma}_i^2  \leq \max\{ j \sigma_i^2, 2\eps \}$ holds with probability at least $1 - \delta \cdot 2^{-20 \frac{2\eps}{\sigma_i^2} } \geq 1 - \delta \cdot 2^{-20j}$. Hence, Line~3 of  $\MeanEst(i, \eps, \delta)$ uses $O \left(  \left( \frac{ \max\{ j \sigma_i^2, 2\eps \} }{\eps^2} + \frac{1}{\eps} \right) \ln \delta^{-1} \right)  = O \left(  \left( \frac{ j\sigma_i^2 }{\eps^2} + \frac{1}{\eps} \right) \ln \delta^{-1} \right) $ samples with probability at least $1 - \delta \cdot 2^{-20j}$ for $j \geq 3 $.  Therefore, for $j \geq 3$, with probability at least $1 - \delta \cdot 2^{-20j}$, we have $Q \leq O \left(  \left( \frac{ j \sigma_i^2 }{\eps^2} + \frac{1}{\eps} \right) \ln \delta^{-1} \right) + O\left( \frac{1}{\eps} \ln \delta^{-1} \right) = O \left(  \left( \frac{j \sigma_i^2 }{\eps^2} + \frac{1}{\eps} \right) \ln \delta^{-1} \right)$ concluding the proof of the second claim of this lemma.

\section{Missing Materials in Section~3} \label{app:mis-material-in-sec:naive-way}

\subsection{Algorithm $\NaiveOptArmId$}

\begin{algorithm} 
\DontPrintSemicolon
\caption{Naive Best Arm Identification, $\NaiveOptArmId(S, \delta)$}
\label{alg:naive-opt-arm-id}
 \textbf{Input:} Arm set $S$ and confidence level $\delta$\\
 $S_1 \leftarrow S$, $r \leftarrow 1$\\
\While {$|S_r| > 1$} {
 Set $\eps_r \leftarrow 1 / 2^r$ and $\delta_r \leftarrow 1 / (2r^2) \cdot \delta$\\
\lFor {$i \in S_r$} {
 $\hat{\theta}_i^r \leftarrow \MeanEst(i, \eps_r / 2, \delta_r / |S_r| )$
}
 Let $a_r = \argmax_{i \in S_r} \hat{\theta}_i^r $\\
 $S_{r + 1} \leftarrow S_r \backslash \{ i \in S_r | \hat{\theta}_i^r < \hat{\theta}_{a_r}^r - \eps_r \}$\\
$r \leftarrow r + 1$
}
 \textbf{Output:} The remaining arm in $S_r$
\end{algorithm}

\subsection{Proof of Theorem~\ref{THM:NAIVE-OPT-ARM-ID}}

Let $\mathcal{E}_i^r$ denote the event that $|\hat{\theta}_i^r - \theta_i | \leq \eps_r / 2$ and the sample complexity of algorithm $\MeanEst(i, \eps_r / 2, \delta_r / |S_r| )$ is $O \left( \left( \frac{\sigma_i^2}{\eps_r^2} + \frac{1}{\eps_r} \right) \ln \frac{|S_r|}{\delta_r} \right)$. By Lemma~\ref{LEMMA:MEAN-EST}, we have $\Pr[ \mathcal{E}_i^r ] \geq 1 - \delta_r / |S_r|$. Let $\mathcal{E}^r $ be the event $\bigwedge_{i \in S_r} \mathcal{E}_i^r$. Via a union bound, we have $\Pr[ \mathcal{E}^r ]  \geq 1 - \delta_r$. Let $\mathcal{E}$ denote the event $\bigwedge_{r = 1}^{+\infty} \mathcal{E}^r$. Again via a union bound, we can get $\Pr[ \mathcal{E} ] \geq 1 - \sum_{r = 1}^{+\infty} \delta_r = 1 - \delta \cdot \sum_{r = 1}^{+\infty} 1 / (2r^2) \geq 1 - \delta$.

Condition on event $\mathcal{E}$ which happens with probability at least $1 - \delta$.

First, we claim that the best arm always survives i.e., $S_{[1]} \in S_r$. Suppose arm $S_{[1]}$ survives after round $r = k$. In round $r = k + 1$, we have $\hat{\theta}_{a_r}^r \leq \theta_{a_r} + \eps_r / 2 \leq \theta_{S_{[1]}} + \eps_r / 2 \leq \hat{\theta}_{S_{[1]}}^r + \eps_r$, which means arm $S_{[1]}$ is not eliminated after round $r = k + 1$. Note that $S_{[1]} \in S$. Therefore, this claim is proved.

Let $t_i$ be the smallest index such that $\Delta_i > \eps_{t_i}$. Define $\eps_0 = 1$. Hence $\Delta_i \in (\eps_{t_i}, \eps_{t_i - 1}]$. Next, we claim that arm $S_{[i]}, i \neq 1$ is eliminated before round $r = t_i + 1$ finishes. Suppose arm $S_{[i]}$ survives after round $r = t_i$ finishes. Consider round $r = t_i + 1$. According to the first claim, we know that $S_{[1]} \in S_{t_i + 1}$. Also, we can find that $\hat{\theta}_i^{t_i + 1} \leq \theta_i + \eps_{t_i + 1} / 2 < \theta_{S_{[1]}} - \eps_{t_i}  + \eps_{t_i + 1} / 2  = (\theta_{S_{[1]}} - \eps_{t_i + 1} / 2) - \eps_{t_i + 1} \leq \hat{\theta}_{S_{[1]}}^{t_i + 1} - \eps_{t_i + 1} \leq \hat{\theta}_{a_r}^{t_i + 1} - \eps_{t_i + 1}$, which means after round $r = t_i + 1$ finishes arm $S_{[i]}, i \neq 1$ must be eliminated. Therefore, this claim is also proved.

Above all, we have proved that $S_{t_2 + 1} = \{ S_{[1]} \}$ and hence the best arm in $S$ is output after round $r = t_2 + 1$ finishes. Thus, the first part of this lemma is proved.

By event $\mathcal{E}_i^r$, the sample complexity of algorithm $\MeanEst(i, \eps_r / 2, \delta_r / |S_r| )$ is $O \left( \left( \frac{\sigma_i^2}{\eps_r^2} + \frac{1}{\eps_r} \right) \ln \frac{|S_r|}{\delta_r} \right)$. Therefore, the number of samples used for arm $S_{[i]}$ is bounded by 
\begin{multline*}
\sum_{r = 1}^{t_i + 1} O \left( \left( \frac{\sigma_i^2}{\eps_r^2} + \frac{1}{\eps_r} \right) \ln \frac{|S_r|}{\delta_r} \right) 
=  \sum_{r = 1}^{t_i + 1} O \left( \left( \frac{\sigma_i^2}{ \eps_r^2 } + \frac{1}{\eps_r} \right) ( \ln \delta^{-1}  + \ln r + \ln |S_r| ) \right) \\
= O \left( \left( \frac{\sigma_i^2}{ \eps_{t_i + 1}^2 } + \frac{1}{\eps_{t_i + 1}} \right) ( \ln \delta^{-1}  + \ln (t_i + 1) + \ln |S| ) \right) \\
= O \left( \left( \frac{\sigma_i^2}{ \Delta_i^2 } + \frac{1}{\Delta_i} \right) ( \ln \delta^{-1} + \ln \ln \Delta_i^{-1} + \ln |S| ) \right),
\end{multline*}
where the last equality is due to $\eps_{t_i + 1} = \Theta( \Delta_i )$ and $t_i + 1 = \Theta( \ln \Delta_i^{-1} ) $. Finally, the total sample complexity equals to the summation of those for every arm in $S$.

\subsection{Proof of Corollary~\ref{COL:NAIVE-OPT-ARM-ID}}

The algorithm can be derived by running the \textbf{while} loop in algorithm $\NaiveOptArmId(S, \delta)$ for at most $O( \ln \eps^{-1} )$ rounds and then randomly output an arm in $S_r$. 

\section{Missing Proofs in Section~\ref{SEC:EPS-OPTIMAL-ARM}} \label{app:proofs-in-SEC:EPS-OPTIMAL-ARM}

\subsection{Proof of Lemma~\ref{LEMMA:PARTITION-OF-S}}
According to Lemma \ref{LEMMA:VARIANCE-EST:a}, we have that for any arm $i \in S$, $\hat{\sigma}_i^2 > \eps / 2$. Also note that $l(\hat{B}_N) = 2^{-N} \in (\eps/4, \eps/2]$. Therefore, every arm $i$ belongs to one of the buckets $\{\hB_j | j = 1, 2, \dots, N\}$.

\subsection{Proof of Lemma~\ref{LEMMA:HE-CARD-CUT}}
When $| \hat{B}_j |$ is even, half of $ \hat{B}_j $ is deleted. Hence, we have $|T_j| \leq \frac{1}{2} | \hat{B}_j |$. When $| \hat{B}_j |$ is odd, suppose $| \hat{B}_j | = 2k + 1$ where $k \geq 1$, $\frac{k}{2k+1} = \frac{1}{2} (1 - \frac{1}{2k+1}) \geq \frac{1}{3}$ of $\hat{B}_j$ is deleted. Hence, we have  $|T_j| \leq \frac{2}{3} | \hat{B}_j |$. 

\subsection{Proof of Lemma~\ref{LEMMA:HE-CARD-R}}
Just note that the number of non-empty buckets (i.e.\ $\hat{B}_j$'s) is no greater than $N = O( \ln \eps^{-1})$. 

\subsection{Proof of Lemma~\ref{LEMMA:HE-EVENT}}

Let $Z_{ij}^a$ be the random variable $\mathds{1} \{ a \in B_i \wedge a \in \hat{B}_j \}$. We only consider those with $|i - j| \geq 3$. 

When $i \leq N - 2$, we have $\sigma_a^2 > l( B_{N -2} ) = 2^{-N + 2} > \eps$, for any  $a \in B_i$. What's more, for any two arms $a \in B_i$ and $b \in \hat{B}_j$, it holds that $\frac{ \max \{ \sigma_a^2,  \hat{\sigma}_b^2 \} } {\min \{ \sigma_a^2,  \hat{\sigma}_b^2 \} } > 4 $. By Lemma~\ref{LEMMA:VARIANCE-EST:b}, we have 
\[
\Pr[ Z_{ij}^a = 1] \leq \frac{\delta}{2N^2} \cdot 2^{-20 r_m } \leq \frac{\delta}{N^2} \cdot 2^{-20 |i - j| } ,
\]
where the last inequality is due to $r_m = \lceil | \log_2 \frac{\sigma_a^2 }{2^{-j + 1}} | \rceil \geq |i - j| - 1$. When $i \geq N - 1$, we have $\sigma_a^2 \leq u( B_{N - 1} ) \leq 2\eps, a \in B_i$. What's more, note that $j \leq i - 3 \leq N - 3 $. Hence $\hat{\sigma}_a^2 > 4 \sigma_a^2 $ and $\hat{\sigma}_a^2 > l(\hat{B}_{N - 3}) > 2\eps $. By Lemma~\ref{LEMMA:VARIANCE-EST:c} and using the same argument, we can also get  $\Pr[ Z_{ij}^a = 1] \leq  \frac{\delta}{N^2} \cdot 2^{-20 |i - j| }$. Above all, we obtain 
\begin{equation} \label{LEMMA:HE-EVENT-equ1}
\E[ Z_{ij}^a ] \leq \Pr[ Z_{ij}^a = 1] \leq  \frac{\delta}{N^2} \cdot 2^{-20 |i - j| }.
\end{equation}

Note that $| B_i \cap \hat{B}_j | = \sum_{a \in B_i} Z_{ij}^a$. By Markov's Inequality, there is
\begin{multline*}
\Pr\left[ | B_i \cap \hat{B}_j | \geq |B_i| \cdot 2^{- 10|i - j|} \cdot N^{-1} \right ] = \Pr \left[ \sum_{a \in B_i} Z_{ij}^a \geq |B_i| \cdot 2^{- 10|i - j|} \cdot N^{-1} \right] \\
\leq \frac{\E\left[ \sum_{a \in B_i} Z_{ij}^a \right]}{|B_i| \cdot 2^{-10 |i - j|} \cdot N^{-1} } = \frac{ \sum_{a \in B_i} \E[  Z_{ij}^a]}{|B_i| \cdot 2^{-10 |i - j|} \cdot N^{-1} } \leq  \frac{\delta}{N} \cdot 2^{-10 |i - j| },
\end{multline*}
where the last inequality is due to (\ref{LEMMA:HE-EVENT-equ1}).

Therefore, via a union bound, we obtain
\[
\Pr[ \mathcal{E} ] \leq \sum_{j} \sum_{i, | i - j | \geq 3} \frac{\delta}{N} \cdot 2^{-10 |i - j| } = \sum_{j} \frac{\delta}{N}  \sum_{i, | i - j | \geq 3} 2^{-10 |i - j| } \leq \sum_j  \frac{\delta / 3}{N} \leq \delta / 3,
\]
where the last inequality is due to the number of buckets $\hat{B}_j$ is no greater than $N$.

\subsection{Proof of Lemma~\ref{LEMMA:HE-SITUA1}}

Our goal is to give a constant upper bound on $ \frac{\sum_{a \in T_j} \sigma_a^2  }{ \sum_{a \in \hat{B}_j} \sigma_a^2 } $.

Define $o( \hat{B}_j )$ to be the set $\bigcup_{i, |i-j| \geq 3} (B_i \cap \hat{B}_j)$ and $n( \hat{B}_j )$ to be the set $\bigcup_{i, |i-j| \leq 2} (B_i \cap \hat{B}_j)$. It is straightforward to verify that $n( \hat{B}_j ) \cap o( \hat{B}_j ) = \emptyset $ and $ n( \hat{B}_j ) \cup o( \hat{B}_j ) = \hat{B}_j $, which means $n( \hat{B}_j ), o( \hat{B}_j )$ is a partition of set $\hat{B}_j$. 

Note that 
\begin{equation} \label{equ:HE-situa1-0}
|o(\hat{B}_j)| = \sum_{i, |i - j| \geq 3} |B_i \cap \hat{B}_j | \leq | \hat{B}_j | \cdot  \sum_{i, |i - j| \geq 3} 2^{-5 |i - j| } \leq \frac{ | \hat{B}_j |} {1024},
\end{equation}
and $|T_j| \leq \frac{2}{3} | \hat{B}_j |$ by Lemma~\ref{LEMMA:HE-CARD-CUT}. We have that at least $\left( \frac{1}{3} - \frac{1}{1024} \right) | \hat{B}_j | $ arms in $n( \hat{B}_j )$ are discarded. Also since $\sigma_a^2 \geq \max_{a\in n( \hat{B}_j )} \sigma_a^2 \cdot 2^{-5} $ for any arm $a \in n(\hat{B}_j) $, we have
\begin{align}  \label{equ:HE-situa1-1}
\sum_{a \in \hat{B}_j} \sigma_a^2 - \sum_{a \in T_j} \sigma_a^2 \geq   \left( \frac{1}{3} - \frac{1}{1024} \right) n( \hat{B}_j ) \cdot \max_{a\in n( \hat{B}_j )} \sigma_a^2 \cdot 2^{-5} 
\geq 2^{-5} \left( \frac{1}{3} - \frac{1}{1024} \right) \cdot \sum_{a \in n(\hat{B}_j)} \sigma_a^2 .
\end{align}

Next, we would like to derive a lower bound on $\sum_{a \in n(\hat{B}_j)} \sigma_a^2$. Note that
\begin{multline*}
\sum_{a \in o( \hat{B}_j )} \sigma_a^2 = \sum_{i, |i - j| \geq 3} \sum_{a \in B_i \cap \hat{B}_j} \sigma_a^2 \leq  \sum_{i, |i - j| \geq 3} |B_i \cap \hat{B}_j | \cdot u( B_i ) \\ 
\leq \sum_{i, |i - j| \geq 3} | \hat{B}_j | \cdot 2^{-5|i - j|} \cdot u( B_i ) = | \hat{B}_j | \cdot \sum_{i, |i - j| \geq 3} 2^{-5|i-j|} \cdot 2^{-i + 1},
\end{multline*}
and 
\[
\sum_{a \in n( \hat{B}_j )} \sigma_a^2 \geq | n(\hat{B}_j) | \cdot l( \hat{B}_{j+2} ) = \frac{1023}{1024}  | \hat{B}_j| \cdot 2^{-j - 2},
\]
where the first inequality is due to (\ref{equ:HE-situa1-0}).
Therefore, we can get
\begin{equation*}
\frac{\sum_{a \in o( \hat{B}_j )} \sigma_a^2}{\sum_{a \in n( \hat{B}_j )} \sigma_a^2} \leq \frac{ | \hat{B}_j | \sum_{i, |i - j| \geq 3} 2^{-5|i-j|} \cdot 2^{-i + 1} }{ \frac{1023}{1024}  | \hat{B}_j| \cdot 2^{-j-2} } \\
\leq 8 \cdot \frac{1024}{1023} \sum_{i, |i - j| \geq 3} 2^{-4|i-j|} \leq \frac{1}{64},
\end{equation*}
which leads to $ \sum_{a \in n( \hat{B}_j )} \sigma_a^2  \geq \frac{64}{65} \sum_{a \in \hat{B}_j} \sigma_a^2$. 

Plugging this relation between $\sum_{a \in n( \hat{B}_j )} \sigma_a^2$ and $\sum_{a \in \hat{B}_j} \sigma_a^2$ into (\ref{equ:HE-situa1-1}), we have 
\[
\sum_{a \in \hat{B}_j} \sigma_a^2 - \sum_{a \in C_j} \sigma_a^2 \geq 2^{-5} \left( \frac{1}{3} - \frac{1}{1024} \right)  \frac{64}{65} \sum_{a \in \hat{B}_j} \sigma_a^2 \geq 2^{-7} \sum_{a \in \hat{B}_j} \sigma_a^2.
\]
Therefore, 
\[
\frac{\sum_{a \in T_j} \sigma_a^2  }{ \sum_{a \in \hat{B}_j} \sigma_a^2 } \leq  1 - 2^{-7}  = \frac{127}{128}.
\]

\subsection{Proof of Corollary~\ref{COROL:HE-SITUA1}}

According to Lemma~\ref{LEMMA:HE-CARD-CUT} and Lemma~\ref{LEMMA:HE-SITUA1}, we know that $\frac{ \sum_{a \in T_j} \sigma_a^2 }{ \sum_{a \in \hat{B}_j} \sigma_a^2 }  \leq \frac{127}{128} $ and $\frac{|T_j|}{|\hat{B}_j|} \leq \frac{2}{3}$. Hence,
\[
\frac{ \sum_{a \in T_j} ( \sigma_a^2 + \eps ) }{ \sum_{a \in \hat{B}_j} ( \sigma_a^2 + \eps ) } = \frac{\sum_{a \in T_j} \sigma_a^2 +  |T_j| \eps}{ \sum_{a \in \hat{B}_j} \sigma_a^2 + | \hat{B}_j | \eps } \leq \max \left\{ \frac{ \sum_{a \in T_j} \sigma_a^2 }{ \sum_{a \in \hat{B}_j} \sigma_a^2 }, \frac{|T_j|}{|\hat{B}_j|} \right\} \leq \frac{127}{128},
\]
which concludes the proof of this corollary.

\subsection{Proof of Lemma~\ref{LEMMA:HE-SITUA2}}

The key part is to prove $\sum_{a \in \hat{B}_j } \sigma_a^2 = O( | \hat{B}_j | \eps )$. Note that $\sum_{a \in \hat{B}_j } \sigma_a^2 = \sum_{a \in n( \hat{B}_j ) } \sigma_a^2 + \sum_{a \in o( \hat{B}_j ) } \sigma_a^2$. We bound $\sum_{a \in n( \hat{B}_j ) } \sigma_a^2$ and $\sum_{a \in o( \hat{B}_j ) } \sigma_a^2$ respectively.

It is easy to see
\begin{equation}  \label{LEMMA:HE-SITUA1-eq1}
\sum_{a \in n( \hat{B}_j ) } \sigma_a^2 \leq | \hat{B}_j | \cdot u( B_{j - 2} ) = | \hat{B}_j | \cdot 2^{-j + 3} \leq | \hat{B}_j | \cdot 2^{-N + 5} \leq 16 | \hat{B}_j | \eps.
\end{equation}
Also, we have 
\begin{multline}  \label{LEMMA:HE-SITUA1-eq2}
\sum_{a \in o( \hat{B}_j )} \sigma_a^2 = \sum_{i, i \leq j - 3} \sum_{a \in B_i \cap \hat{B}_j} \sigma_a^2 \leq  \sum_{i, i \leq j - 3 } |B_i \cap \hat{B}_j | \cdot u( B_i ) \\ 
\leq \sum_{i, i \leq j - 3} | \hat{B}_j | \cdot 2^{-5|i - j|} \cdot u( B_i ) = | \hat{B}_j | \cdot \sum_{i, i \leq j - 3 } 2^{-5(j -i)} \cdot 2^{-i + 1} \\
= | \hat{B}_j | \cdot 2^{-j + 1} \cdot \sum_{i, i \leq j - 3 } 2^{-4(j -i)} \leq | \hat{B}_j | \cdot 2^{- N + 3} \cdot \sum_{i, i \leq j - 3 } 2^{-4(j -i)} \leq \frac{ | \hat{B}_j | \eps } {128}.
\end{multline}
Hence, by (\ref{LEMMA:HE-SITUA1-eq1}) and (\ref{LEMMA:HE-SITUA1-eq2}), we have $\sum_{a \in \hat{B}_j } \sigma_a^2 \leq 17 | \hat{B}_j | \eps$.

Therefore,
\[
\frac{ \sum_{a \in T_j} ( \sigma_a^2 + \eps )  }{ \sum_{a \in \hat{B}_j} ( \sigma_a^2 + \eps ) } \leq \frac{ \sum_{a \in \hat{B}_j}  \sigma_a^2 + |T_j| \eps   }{ \sum_{a \in \hat{B}_j}  \sigma_a^2 + |\hat{B}_j| \eps  } \leq \frac{ 17 | \hat{B}_j | \eps + |T_j| \eps}{17 | \hat{B}_j | \eps + |\hat{B}_j| \eps } \leq \frac{127}{128},
\] 
where the second last inequality is due to $|T_j| \leq |\hat{B}_j|$ and the last inequality is due to $|T_j| \leq \frac{2}{3} |\hat{B}_j|$ by Lemma \ref{LEMMA:HE-CARD-CUT}. 

\subsection{Proof of Lemma~\ref{LEMMA:HE_SITUA3}}

Note that $\sum_{ a \in \hat{B}_j } \sigma_a^2 = \sum_{ a \in n( \hat{B}_j ) } \sigma_a^2 + \sum_{ a \in o (\hat{B}_j) } \sigma_a^2 $.

First, we give an upper bound on $\sum_{ a \in n( \hat{B}_j ) } \sigma_a^2$. Let $i^*$ be the smallest index such that $B_{i^*}$ pollutes $\hat{B}_j$. Hence $i^* \leq N - 1$. Since $\hat{B}_j$ is polluted by $B_{i^*}$ where $|i - j| \geq 3$, we have $| B_{i^*} \cap \hat{B}_j | > | \hat{B}_j | \cdot 2^{-5 |i^* - j|} $. Also by event $\mathcal{E}$, we have that $| B_{i^*} \cap \hat{B}_j | < | B_{i^*} | \cdot 2^{-10 |i^* - j|} \cdot N^{-1}$. Therefore, it holds that $| \hat{B}_j | \leq | B_{i^*} | \cdot 2^{-5 |i^* - j|} \cdot N^{-1}$. Hence,
\begin{equation} \label{equ:HE-situa3-1}
\frac{ \sum_{ a \in n( \hat{B}_j ) } \sigma_a^2}{ \sum_{a \in S} \sigma_a^2 } \leq \frac{ | \hat{B}_j | \cdot u(B_{j - 2}) } { \sum_{a \in B_{i^*}} \sigma_a^2 } \leq \frac{ | B_i | \cdot 2^{-5 |i^* - j|} \cdot N^{-1} \cdot 2^{ - j + 3} } { |B_i| \cdot l(B_{i^*}) } \leq 2^{-9} \cdot N^{-1}.
\end{equation}

Then we give an upper bound on $\sum_{ a \in o (\hat{B}_j) } \sigma_a^2 $. For any $|i - j| \geq 3$, we have 
\[
\frac{ \sum_{a \in B_i \cap \hat{B}_j } \sigma_a^2 }{ \sum_{a \in B_i} \sigma_a^2} \leq \frac{ | B_i \cap \hat{B}_j | \cdot u(B_i) }{ | B_i| \cdot l(B_i)} \leq 2^{-10 |i - j| + 1} \cdot N^{-1} \leq 2^{-29} \cdot N^{-1}.
\] 
Hence, we have
\begin{equation} \label{equ:HE-situa3-2}
\frac{ \sum_{ a \in o (\hat{B}_j) } \sigma_a^2 }{ \sum_{ a \in S }\sigma_a^2  } \leq \frac{ \sum_{i, |i - j| \geq 3} \sum_{ a \in o (\hat{B}_j) } \sigma_a^2  } { \sum_{i, |i - j| \geq 3} \sum_{ a \in B_j } \sigma_a^2  } \leq \max_{i, |i -j| \geq 3} \left\{ \frac{ \sum_{a \in B_i \cap \hat{B}_j } \sigma_a^2 }{ \sum_{a \in B_i} \sigma_a^2} \right\} \leq 2^{-29} \cdot N^{-1}.
\end{equation}

Combining (\ref{equ:HE-situa3-1}) and (\ref{equ:HE-situa3-2}), we prove this lemma.

\subsection{Proof of Lemma~\ref{LEMMA:HE_SITUA4}}

Recall that $\sum_{ a \in \hat{B}_j } \sigma_a^2 = \sum_{ a \in n( \hat{B}_j ) } \sigma_a^2 + \sum_{ a \in o (\hat{B}_j) } \sigma_a^2 $.

For $\sum_{ a \in n( \hat{B}_j ) } \sigma_a^2$, we have
\begin{multline*}
\sum_{ a \in n( \hat{B}_j ) } \sigma_a^2 \leq | \hat{B}_j| \cdot u(B_{j-2}) \leq |B_N| \cdot 2^{-5|N-j|} \cdot N^{-1} \cdot 2^{-j + 3} 
\leq |B_N| \cdot 2^{-4|N-j|} \cdot N^{-1}  \cdot 2^{-N} \cdot 8 \leq 2^{-10} \cdot N^{-1} \cdot | S | \eps.
\end{multline*}

For $\sum_{ a \in o (\hat{B}_j) } \sigma_a^2$, from the proof of Lemma \ref{LEMMA:HE_SITUA3}, it holds that $\frac{ \sum_{ a \in o (\hat{B}_j) } \sigma_a^2 }{ \sum_{ a \in S }\sigma_a^2  } \leq 2^{-29} \cdot N^{-1} $. Hence,
\[
\frac{ \sum_{ a \in \hat{B}_j } \sigma_a^2 }{ \sum_{ a \in S }  (\sigma_a^2 + \eps) } = \frac{ \sum_{ a \in o(\hat{B}_j) } \sigma_a^2 + \sum_{ a \in n(\hat{B}_j) } \sigma_a^2}{ \sum_{ a \in S } \sigma_a^2 + |S| \eps} \leq \max \left\{ \frac{\sum_{ a \in o(\hat{B}_j) } \sigma_a^2}{ \sum_{ a \in S } \sigma_a^2 }, \frac{\sum_{ a \in n(\hat{B}_j) } \sigma_a^2}{ |S| \eps } \right\} = 2^{-10} \cdot N^{-1}.
\]

\subsection{Correctness and Sample Complexity of the Procedure $\HE$}\label{sec:sample-complexity-HE}

In this subsection, we prove the second and the third items of Theorem~\ref{thm:HE}. We first introduce a helper lemma as follows.

\begin{lemma} \label{lemma:sort-bound}
Given that $|\widehat{\theta}_i - \theta_i| \leq \Delta$ for every arm $i \in S$. Let $\theta_{p_1} \geq \theta_{p_2} \geq \cdots \geq \theta_{p_{|S|}}$ be the sorted sequence of $\theta_i$'s. Also, let $\widehat{\theta}_{q_1} \geq \widehat{\theta}_{q_2} \geq \cdots \geq \widehat{\theta}_{q_{|S|}}$ be the sorted sequence of $\widehat{\theta}_i$'s. Then for every index $t \in [|S|]$, we have $|\widehat{\theta}_{q_t} - \theta_{p_t}| \leq \Delta$.
\end{lemma}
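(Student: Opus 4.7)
The plan is to prove the two one-sided inequalities $\widehat{\theta}_{q_t} \leq \theta_{p_t} + \Delta$ and $\widehat{\theta}_{q_t} \geq \theta_{p_t} - \Delta$ separately, each by a short pigeonhole argument on two overlapping index sets. The underlying intuition is that sorting cannot move an element too far from its original rank when every entry has been perturbed by at most $\Delta$.

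For the upper bound $\widehat{\theta}_{q_t} \leq \theta_{p_t} + \Delta$, I would consider the set $A = \{p_t, p_{t+1}, \dots, p_{|S|}\}$ of the $|S| - t + 1$ arms with the smallest true means (from rank $t$ downward), and the set $B = \{q_1, q_2, \dots, q_t\}$ of the $t$ arms with the largest empirical means. Since $|A| + |B| = |S| + 1 > |S|$, the sets must share some index $j$. For that $j$, we have $\theta_j \leq \theta_{p_t}$ because $j \in A$, hence $\widehat{\theta}_j \leq \theta_j + \Delta \leq \theta_{p_t} + \Delta$ by the hypothesis $|\widehat{\theta}_j - \theta_j| \leq \Delta$. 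On the other hand, $j \in B$ forces $\widehat{\theta}_{q_t} \leq \widehat{\theta}_j$ by the ordering of the empirical means. Chaining the two inequalities gives the desired bound.

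For the lower bound $\widehat{\theta}_{q_t} \geq \theta_{p_t} - \Delta$, I would run the symmetric argument: intersect $A' = \{p_1, \dots, p_t\}$ (the top-$t$ true means) with $B' = \{q_t, q_{t+1}, \dots, q_{|S|}\}$ (arms with the smallest $|S| - t + 1$ empirical means). The cardinalities again sum to $|S|+1$, so some index $j'$ lies in both. Then $\theta_{j'} \geq \theta_{p_t}$ gives $\widehat{\theta}_{j'} \geq \theta_{j'} - \Delta \geq \theta_{p_t} - \Delta$, and $j' \in B'$ gives $\widehat{\theta}_{q_t} \geq \widehat{\theta}_{j'}$, completing the inequality.

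There is no real obstacle here; the only thing to be careful about is making sure the orientations of the sets are correct (choosing the ``large-$\theta$'' side paired with the ``small-$\widehat{\theta}$'' side for one direction, and vice versa for the other), and handling possible ties in the sorting by the stipulation that ties are broken arbitrarily but consistently, which does not affect either inequality. Combining the two bounds yields $|\widehat{\theta}_{q_t} - \theta_{p_t}| \leq \Delta$ for every $t \in [|S|]$, proving the lemma.
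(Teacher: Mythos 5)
Your proof is correct and rests on the same counting idea as the paper's: both arguments exploit that at most $t-1$ arms can have true mean strictly above $\theta_{p_t}$ and at most $|S|-t$ strictly below it. The paper packages this as a proof by contradiction via a minimal violating index, whereas you state it as a direct pigeonhole on two overlapping index sets; the difference is purely presentational, and your version is if anything slightly cleaner.
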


\begin{proof}
Suppose this lemma does not hold. Let $t^*$ be the smallest index such $|\widehat{\theta}_{q_{t^*}} - \theta_{p_{t^*}}| > \Delta$. If $\widehat{\theta}_{q_{t^*}} < \theta_{p_{t^*}} - \Delta$, there are at least $t^*$ arms with means in interval $[0, \theta_{p_{t^*}})$, which is a contradiction since  there are only $t^* - 1$ arms with means in that interval.  If $\widehat{\theta}_{q_{t^*}} > \theta_{p_{t^*}} + \Delta$, there are at least $n - t^* + 1$ arms with means in interval $(\theta_{p_{t^*}}, 1]$, which is a contradiction since  there are only $n - t^*$ arms with means in that interval. Hence, the assumption is wrong, and this lemma is proved.
\end{proof}

The following lemma proves the second item of Theorem~\ref{thm:HE}.

\begin{lemma} \label{lemma:HE-correctness}
With probability at least $1 - \delta / 3$, we have $ | \theta_{(T \cup R)_{[1]}} - \theta_{S_{[1]}} | \leq \eps$.
\end{lemma}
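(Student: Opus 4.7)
The plan is to identify the bucket $\hat{B}_{j^*}$ that contains $S_{[1]}$ and argue that either $S_{[1]}$ itself survives into $T \cup R$, or some arm of nearly the same true mean does. First I would dispose of the trivial case: if $|\hat{B}_{j^*}| = 1$, then by the algorithm $S_{[1]}$ is placed in the recycle bin $R$, so the claim holds deterministically. Hence assume $|\hat{B}_{j^*}| \geq 2$ for the remainder, which means $\MeanEst(S_{[1]}, \eps/2, \delta/(9N))$ is invoked in Line~\ref{alg:HE-line9}.

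Let $F_1$ denote the event $|\hat{\theta}_{S_{[1]}} - \theta_{S_{[1]}}| \leq \eps/2$; by Lemma~\ref{LEMMA:MEAN-EST}, $\Pr[F_1] \geq 1 - \delta/(9N)$. Conditioned on $F_1$, if $S_{[1]} \in T_{j^*}$ the claim is immediate. Otherwise $\hat{\theta}_{S_{[1]}} < \hat{m}_{j^*}$, so $\hat{m}_{j^*} > \theta_{S_{[1]}} - \eps/2$. The crux is then to show that, under this condition, some arm of $T_{j^*}$ still has true mean at least $\theta_{S_{[1]}} - \eps$. To this end, let $B = \{b \in \hat{B}_{j^*} : \theta_b < \theta_{S_{[1]}} - \eps\}$ denote the ``bad'' arms, and note that any $b \in B \cap T_{j^*}$ satisfies $\hat{\theta}_b \geq \hat{m}_{j^*} > \theta_b + \eps/2$, so its $\MeanEst$ estimate must err by more than $\eps/2$. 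Let $X$ denote the number of such over-estimations among $B$. Since the $\MeanEst$ calls use independent samples across arms and each fails with probability at most $\delta/(9N)$, linearity of expectation gives $\E[X] \leq |B|\cdot\delta/(9N) \leq |\hat{B}_{j^*}|\cdot\delta/(9N)$, and Markov's inequality yields $\Pr[X \geq |\hat{B}_{j^*}|/2] \leq 2\delta/(9N)$.

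To finish, I would use the fact that since $\hat{m}_{j^*}$ is the median of the empirical means in $\hat{B}_{j^*}$, at least $\lceil |\hat{B}_{j^*}|/2 \rceil$ arms satisfy $\hat{\theta}_i \geq \hat{m}_{j^*}$, so $|T_{j^*}| \geq |\hat{B}_{j^*}|/2$. Thus $X < |\hat{B}_{j^*}|/2$ forces $|B \cap T_{j^*}| \leq X < |T_{j^*}|$, which guarantees some arm of $T_{j^*} \subseteq T$ lies outside $B$, i.e., has true mean at least $\theta_{S_{[1]}} - \eps$. Combining with the obvious upper bound $\theta_{(T \cup R)_{[1]}} \leq \theta_{S_{[1]}}$ yields the claim, and a union bound over the bad events $F_1^c$ and $\{X \geq |\hat{B}_{j^*}|/2\}$ caps the total failure probability at $3\delta/(9N) \leq \delta/3$.

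The main obstacle I expect is justifying the tight confidence parameter $\delta/(9N)$: a naive union bound over every arm of $\hat{B}_{j^*}$ would demand $|\hat{B}_{j^*}| = O(N)$, which is not guaranteed since $|S|$ can be as large as $n$ when $\HE$ is first invoked. The Markov-based counting of over-estimated bad arms above is precisely what sidesteps this requirement, trading a per-arm union bound for an expectation bound on the number of errant estimates; I expect Lemma~\ref{lemma:sort-bound} to play only a supporting role in the related sample-complexity statement rather than in this correctness argument.
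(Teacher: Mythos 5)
Your proof is correct, and at its core it is the same argument the paper uses: the paper simply cites Lemma~1 of \cite{even2002pac} to assert that median elimination inside a bucket preserves an $\eps$-optimal arm, whereas you unpack that cited lemma's proof (Markov's inequality on the count of bad arms whose $\MeanEst$ estimate over-shoots by more than $\eps/2$, compared against the $\geq |\hat{B}_{j^*}|/2$ arms that survive the median cut). The one substantive difference is economy: the paper applies the per-bucket guarantee to \emph{all} buckets with a union bound over $N$ of them (together with Lemma~\ref{lemma:sort-bound}) and then passes to the maximum, while you localize to the single bucket containing $S_{[1]}$, which suffices and in fact yields the sharper failure probability $\delta/(3N)$ rather than $\delta/3$; your handling of the singleton-bucket case via the recycle bin $R$ matches the paper's implicit use of $S = T \cup R$. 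Your closing remark is also accurate -- the Markov-based counting is exactly what makes the $\delta/(9N)$ confidence parameter (rather than $\delta/|\hat{B}_{j^*}|$) workable, and Lemma~\ref{lemma:sort-bound} is not needed for your single-bucket version of the argument.
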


\begin{proof}
According to Lemma \ref{LEMMA:MEAN-EST}, with probability at least $1 - \delta / (9N) $, it holds that $| \hat{\theta}_i - \theta_i | \leq \eps / 2$. By Lemma 1 from \cite{even2002pac}, there is $ \Pr[ \theta_{ (T_j)_{[1]}} \geq \theta_{ (\hat{B}_j)_{[1]} } - \eps ] \geq 1 - \delta/ (3N)$. Next, via a union bound and Lemma \ref{lemma:sort-bound}, with probability at least $1 - \delta / 3$, we have
\[
| \theta_{T_{[1]} } - \theta_{( \bigcup_{j, |\hat{B}_j| \geq 2} \hat{B}_j )_{[1]} } | \leq \eps.
\]
Note that $S = T \cup R$. Hence, this lemma is proved.
\end{proof}

Lemma~\ref{lemma:HE-sample-comp} proves the third item of Theorem~\ref{thm:HE}. Before proceeding to the lemma, we first introduce the following statement.

\begin{lemma} \label{lemma:n-mean-est-sample}
Let $Z_i$ denote the sample complexity for $\MeanEst(i, \eps, \delta)$. With probability at least $1 - \delta / 3$, $\sum_{i \in S} Z_i$ is bounded by $O \left( \sum_{i \in S} \left( \frac{\sigma_i^2}{\eps^2} + \frac{1}{\eps} \right) ( \ln \delta^{-1} + \ln \ln \eps^{-1} )  \right)$.
\end{lemma}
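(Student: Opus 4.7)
The plan is to bound $\sum_{i \in S} Z_i$ using the sharp tail bound from Lemma~\ref{LEMMA:MEAN-EST-COROL:b} together with a layer-cake decomposition of each $Z_i$ and a Markov-type argument on the aggregated excess. First, I will set up the decomposition: for each arm $i \in S$, define the thresholds $t_{i,j} = c(j\sigma_i^2/\eps^2 + 1/\eps)\ln\delta^{-1}$ for integer $j \geq 3$, and write
\[
Z_i \leq t_{i,3} + \frac{c\sigma_i^2\ln\delta^{-1}}{\eps^2} \sum_{j \geq 3}\mathds{1}[Z_i > t_{i,j}],
\]
since the gap between consecutive thresholds is exactly $c\sigma_i^2/\eps^2 \cdot \ln\delta^{-1}$. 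Summed over $i \in S$, the first term on the right-hand side already contributes $O(\sum_i (\sigma_i^2/\eps^2 + 1/\eps)\ln\delta^{-1})$, which matches the target bound.

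Next, I will analyse the total excess $E := \sum_{i \in S} \sigma_i^2 \sum_{j \geq 3} \mathds{1}[Z_i > t_{i,j}]$. By Lemma~\ref{LEMMA:MEAN-EST-COROL:b}, $\Pr[Z_i > t_{i,j}] \leq \delta \cdot 2^{-20j}$, so linearity of expectation and the geometric decay $\sum_{j \geq 3} 2^{-20j} = O(2^{-60})$ give
\[
\mathbb{E}[E] \leq \delta \cdot O(1) \cdot \sum_{i \in S} \sigma_i^2.
\]
Applying Markov's inequality yields $E \leq 3\mathbb{E}[E]/\delta = O(\sum_i \sigma_i^2)$ with probability at least $1 - \delta/3$; the key point is that the $\delta$ factor inside the expectation cancels against the $3/\delta$ from Markov, producing a $\delta$-independent bound on $E$. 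Substituting back and combining with the base term produces $\sum_{i \in S}Z_i = O(\sum_i(\sigma_i^2/\eps^2 + 1/\eps)\ln\delta^{-1})$ with probability at least $1 - \delta/3$.

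The main obstacle is avoiding an unwanted $\ln|S|$ multiplicative factor that would appear from a naive union bound over arms; the layer-cake plus Markov trick above circumvents this precisely by exploiting the very sharp $2^{-20j}$ tail in Lemma~\ref{LEMMA:MEAN-EST-COROL:b}, which makes the expected total excess negligible after summing over arms and levels. The $\ln\ln\eps^{-1}$ term in the stated bound arises when the lemma is instantiated inside $\HE$: line~\ref{alg:HE-line9} of Algorithm~\ref{alg:HE} passes confidence $\delta/(9N)$ with $N = \Theta(\ln\eps^{-1})$ into $\MeanEst$, so unfolding $\ln(9N/\delta) = \ln\delta^{-1} + O(\ln\ln\eps^{-1})$ inside the above bound yields the claimed additive $\ln\ln\eps^{-1}$ contribution, completing the proof.
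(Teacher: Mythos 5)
Your proof is correct and follows essentially the same route as the paper's: both truncate each $Z_i$ at the $j=3$ threshold from Lemma~\ref{LEMMA:MEAN-EST-COROL:b}, bound the expected excess using the $\delta\cdot 2^{-20j}$ tails (so the expectation carries a factor of $\delta$), and apply Markov's inequality to the aggregated excess so that this $\delta$ cancels, yielding failure probability $\delta/3$ without a union bound over $|S|$. The paper phrases the same decomposition via $\widetilde{Z}_i=(Z_i-3cT_i)\cdot\mathds{1}\{Z_i\ge 3cT_i\}$ rather than your explicit layer-cake sum of indicators, and your version in fact yields the marginally stronger bound without the $\ln\ln\eps^{-1}$ slack.
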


\begin{proof}
Define $T_i = \left( \frac{\sigma_i^2}{\eps^2} + \frac{1}{\eps} \right) ( \ln \delta^{-1} + \ln \ln \eps^{-1} )$.  According to Lemma \ref{LEMMA:MEAN-EST} and Lemma \ref{LEMMA:MEAN-EST-COROL}, we can find a constant $c > 0$ such that $\Pr[ Z_i \leq cT_i] \geq 1 - \delta$ and $\Pr[ Z_i > jcT_i] \leq \delta \cdot 2^{-20j}$ for $j \geq 3$. Define $\widetilde{Z}_i = (Z_i - 3cT_i) \cdot \mathds{1} \{ Z_i \geq 3cT_i \} $. Hence, we have $\widetilde{Z_i} \geq 0$ and 
\begin{align*}
\E[ \widetilde{Z}_i ] \leq \sum_{j \geq 1} \Pr [ \widetilde{Z}_i \in ( (j -1)cT_i, jcT_i ] ] \cdot j c T_i 
 \leq \sum_{j \geq 1} \delta \cdot 2^{-20j} \cdot j c T_i = \left( \sum_{j \geq 1} 2^{-20j} \cdot j \right) \delta c T_i \leq \delta c T_i.
\end{align*}
Note that $\widetilde{Z}_i \geq Z_i - 3cT_i$. Therefore, $\sum_{i} \widetilde{Z}_i \geq \sum_{i} Z_i - 3c \sum_{i} T_i$.
Further by applying Markov's Inequality, we can get 
\begin{align*}
\Pr \left[ \sum_{i} Z_i \geq 6c \sum_i T_i \right] \leq \Pr \left[ \sum_{i} \widetilde{Z}_i \geq 3c\sum_i T_i \right]  
\leq \frac{ \E\left[  \sum_i \widetilde{Z}_i \right]  }{ 3c \sum_i T_i } = \frac{ \sum_i \E[   \widetilde{Z}_i ]  }{ 3c\sum_i T_i } \leq \frac{\delta c \sum_i T_i }{3c\sum_i T_i} = \delta / 3,
\end{align*}
which means with probability at least $1 - \delta / 3$, $\sum_{i \in S} Z_i$ is bounded by  
\begin{equation*}
O\left( \sum_i T_i \right) = O \left( \sum_{i \in S} \left( \frac{\sigma_i^2}{\eps^2} + \frac{1}{\eps} \right) ( \ln \delta^{-1} + \ln \ln \eps^{-1} )  \right).
\end{equation*}
\end{proof}

\begin{lemma} \label{lemma:HE-sample-comp}
With probability at least $1 - \delta / 3$, the sample complexity of $\HE(S, \eps, \delta)$ is \\ $O\left( \sum_{i \in S} \left( \frac{ \sigma_i^2}{\eps^2} + \frac{1}{\eps} \right) ( \ln \delta^{-1} + \ln \ln \eps^{-1} ) \right)$.
\end{lemma}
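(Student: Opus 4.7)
The plan is to split the total cost of $\HE(S,\eps,\delta)$ into the variance-estimation cost (the calls to $\VarEst$ on line~\ref{alg:HE-line4}) and the mean-estimation cost (the calls to $\MeanEst$ on line~\ref{alg:HE-line9}), bound each piece separately, and combine. The variance part admits a deterministic bound, so all of the randomness budget $\delta/3$ can be spent on the mean-estimation part.

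For the variance-estimation cost, I would appeal to Lemma~\ref{LEMMA:VARIANCE-EST:a}, which asserts that $\VarEst(i,\delta',\eps)$ uses at most $O(\eps^{-1}\ln(\delta')^{-1})$ samples \emph{always}. Plugging in $\delta'=\delta/(2N^2)$ with $N=\lceil\log_2(2/\eps)\rceil=O(\ln\eps^{-1})$ gives $\ln(\delta')^{-1}=O(\ln\delta^{-1}+\ln\ln\eps^{-1})$. Summing over all $i\in S$ yields a deterministic bound $O\bigl(\sum_{i\in S}\eps^{-1}(\ln\delta^{-1}+\ln\ln\eps^{-1})\bigr)$, which is absorbed into the target expression.

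For the mean-estimation cost, I would apply Lemma~\ref{lemma:n-mean-est-sample} inside each bucket $\hat{B}_j$ with $|\hat{B}_j|\ge 2$, using parameters $(\eps/2,\delta/(9N))$. That lemma says that with probability at least $1-\tfrac13\cdot\delta/(9N)=1-\delta/(27N)$, the total samples used by $\MeanEst$ calls on arms in $\hat{B}_j$ is
\begin{equation*}
O\!\left(\sum_{i\in\hat{B}_j}\Bigl(\tfrac{\sigma_i^2}{(\eps/2)^2}+\tfrac{1}{\eps/2}\Bigr)\bigl(\ln(9N/\delta)+\ln\ln(2/\eps)\bigr)\right)=O\!\left(\sum_{i\in\hat{B}_j}\Bigl(\tfrac{\sigma_i^2}{\eps^2}+\tfrac{1}{\eps}\Bigr)(\ln\delta^{-1}+\ln\ln\eps^{-1})\right),
\end{equation*}
using $\ln(9N/\delta)=O(\ln\delta^{-1}+\ln\ln\eps^{-1})$. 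A union bound over the at most $N$ nonempty buckets only costs an additional factor $N$ in the failure probability, giving overall failure $\le N\cdot\delta/(27N)=\delta/27\le\delta/3$. Since the buckets partition $S$ (Lemma~\ref{LEMMA:PARTITION-OF-S}) and the arms in buckets of size $1$ contribute no mean-estimation samples, summing over $j$ reassembles the target bound on $S$.

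I do not expect a major obstacle here: the work is essentially bookkeeping. The only mildly delicate step is making sure the per-bucket failure probability $\delta/(9N)$ is small enough for a union bound over $N$ buckets to leave a budget of $\delta/3$ (it is, comfortably), and tracking the logarithmic factors so that $\ln(9N/\delta)$ and $\ln\ln(\eps/2)^{-1}$ collapse to $\ln\delta^{-1}+\ln\ln\eps^{-1}$. Adding the deterministic variance-estimation bound to the high-probability mean-estimation bound then yields the claimed sample complexity with probability at least $1-\delta/3$.
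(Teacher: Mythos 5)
Your proposal is correct and follows essentially the same route as the paper: a deterministic bound on the $\VarEst$ calls via Lemma~\ref{LEMMA:VARIANCE-EST:a}, plus a high-probability bound on the $\MeanEst$ calls via Lemma~\ref{lemma:n-mean-est-sample}, with the log factors collapsing because $N=O(\ln\eps^{-1})$. The only cosmetic difference is that you invoke Lemma~\ref{lemma:n-mean-est-sample} per bucket and union bound over the $N$ buckets, while the paper applies it once to all of $S$; both yield failure probability at most $\delta/27\le\delta/3$.
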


\begin{proof}
According to Lemma~\ref{LEMMA:VARIANCE-EST:a}, the sample complexity of $\VarEst(i, \delta / (2 N^2 ), \eps)$ is bounded by \\ $O( \frac{1}{\eps} \ln (\delta / (2N^2)) ) = O( \frac{1}{\eps} (\ln \delta^{-1} + \ln \ln \eps^{-1}))$. Hence, the number of samples used at Line~3 is 
\begin{equation} \label{equ:HE-sample-comp-1}
O\left( \frac{|S|}{\eps} (\ln \delta^{-1} + \ln \ln \eps^{-1}) \right).
\end{equation}

According to Lemma~\ref{lemma:n-mean-est-sample}, with probability at least $\delta / (27N) \geq 1 - \delta / 3$, the number of samples used at Line~8 is 
\begin{equation} \label{equ:HE-sample-comp-2}
O \left( \sum_{i \in S} \left( \frac{\sigma_i^2}{\eps^2} + \frac{1}{\eps} \right) ( \ln \delta^{-1} + \ln \ln \eps^{-1} )  \right).
\end{equation}

Combining (\ref{equ:HE-sample-comp-1}) and (\ref{equ:HE-sample-comp-2}), we prove this lemma.
\end{proof}

\subsection{Analysis of the Procedure $\ME$}\label{sec:proof-thm-ME}
In this subsection, we prove Theorem~\ref{thm:ME} as follows.
\begin{myproof}{Proof of Theorem~\ref{thm:ME}}
Let $\calF_r$ denote the event that the claim in Theorem \ref{thm:HE} holds for $\HE(T_r, \eps_r, \delta_r)$. Hence, $\Pr[ \calF_r ] \geq 1 - \delta_r$. Let $\calF = \bigwedge_{r = 1}^{+\infty} \calF_r $. Via a union bound, we can see $\Pr[ \calF ] \geq 1 - \sum_{r = 1}^{+\infty} \delta_r \geq 1 - \delta$.

Conditioning on event $\calF$ which happens with probability at least $1 - \delta$, we will show all three items in the theorem statement hold. 

For the first item, note that $|T_{r+1}| \leq \frac{2}{3} |T_r|$ by Lemma~\ref{LEMMA:HE-CARD-CUT}. Hence, there are at most $O( \ln |T_0| ) = O( \ln |S| )$ rounds. Suppose when algorithm $\ME(S, \eps, \delta)$ terminates, $r = r_0$. Also note that $|R^{r+1}| = O( \ln \eps_r^{-1} )$ by Theorem~\ref{thm:HE:a}. Therefore, $|R_{r_0}| = O( \ln |S| \ln \eps_{r_0-1}^{-1} ) = O( (\ln |S|)^2 \ln \eps^{-1} )$. Together with $|T_{r_0}| \leq 10$, the first item is proved.

For the second item, for each round, we have $| \theta_{(T_{r+1} \cup R^{r+1})_{[1]} } - \theta_{ (T_r)_{[1]} }  |  \leq \eps_r $ by $\calF$. Since $R_{r+1} = R_r \cup R^{r+1}$, we get $| \theta_{(T_{r+1} \cup R_{r+1})_{[1]} } - \theta_{(T_r \cup R_r)_{[1]} }  |  \leq \eps_r $. Therefore, we obtain
\begin{multline*}
| \theta_{T_{[1]} } - \theta_{S_{[1]} } | = | \theta_{(T_{r_0} \cup R_{r_0})_{[1]} } - \theta_{(T_0 \cup R_0)_{[1]} }  |
\leq \sum_{r = 0}^{r_0 - 1} | \theta_{(T_{r + 1} \cup R_{r + 1})_{[1]} } - \theta_{(T_r \cup R_r)_{[1]} }  | = \sum_{r= 0}^{r_0 - 1} \eps_r = \eps (1 - \beta) \sum_{r= 0}^{r_0 - 1} \beta^r  \leq \eps,
\end{multline*}
which concludes the proof of the second item.

Now we come to the third item. By $\calF$, we have $\sum_{i \in T_{r + 1}} (\sigma_i^2 + \eps_{r+1}) \leq \frac{255}{256} \sum_{i \in T_r} (\sigma_i^2 + \eps_r)$ for $r = 0, 1, \dots, r_0 - 1$. Therefore, the sample complexity is 
\begin{multline*}
\sum_{r = 0}^{r_0 - 1} O \left( \frac{ \sum_{i \in T_r} (\sigma_i^2 + \eps_r) }{\eps_r^2} ( \ln \delta_r^{-1} + \ln \ln \eps_r^{-1} )  \right) \\
= \sum_{r = 0}^{r_0 - 1} O \left( \frac{ \sum_{i \in T_0} (\sigma_i^2 + \eps) }{\eps^2} \left( \frac{255/256}{ \beta^2 } \right)^r ( \ln \delta_r^{-1} + \ln \ln \eps_r^{-1} )  \right) \\
= O \left( \sum_{i \in S} \left( \frac{\sigma_i^2}{\eps^2} + \frac{1}{\eps} \right) ( \ln \delta^{-1} + \ln \ln \eps^{-1} )  \right),
\end{multline*}
which concludes the proof of the third item. 
\end{myproof}

\section{Missing Proofs in Section~\ref{SEC:N-ARM}} \label{app:proofs-in-sec:n-arm}

\subsection{Proof of Lemma~\ref{LEMMA:BEST-ARM-REMAINS}}

Assume the best arm is not eliminated before round $r \geq 1$ begins i.e., $S_{[1]} \in S_r$. According to Lemma~\ref{LEMMA:MEAN-EST}, with probability at least $1 - \delta_r / 18$, there is 
\begin{equation} \label{equ-1:best-arm-remains}
 | \hat{\theta}_{i}^r - \theta_i | \leq \eps_r/2
\end{equation}
for every arm $i$.
Via a union bound, with probability at least $1 - \delta_r / 9$, it holds that 
\[
\hat{\theta}_{a_r}^r \leq \theta_{a_r} + \eps_r / 2 \leq \theta_{ S_{[1]} } + \eps_r / 2 \leq \hat{\theta}_{ S_{[1]} }^r + \eps_r,
\]
where both the first and last inequalities are due to \eqref{equ-1:best-arm-remains}, which means the best arm $S_{[1]}$ will not be eliminated during round $r$.

Since $S_{[1]} \in S_1$, we obtain with probability at least $\prod_{r = 1}^{+\infty} (1 - \delta_r / 9) \geq 1 - \sum_{r=1}^{+\infty} \delta_r / 9 = 1 - \delta / 9 \sum_{r=1}^{+\infty} 1 / (2r^2) \geq 1 - \delta / 9$, $\calM_1$ holds concluding the proof of this lemma.

\subsection{Proof of Lemma~\ref{LEMMA:BEST-ARM-OUTPUT}}

Throughout this proof we condition on the event $\calM_1$.

By $\calM_1$, we know that the best arm is never eliminated. Hence, there is $|S_r| > 1$ when $r \leq \left\lceil \log_2 \Delta_2^{-1} \right\rceil + 1$.

By Lemma~\ref{LEMMA:MEAN-EST}, with probability at least $1 - \delta_r / 18$, there is 
\begin{equation}\label{equ-1:best-arm-output}
 | \hat{\theta}_{a_r}^r - \theta_{a_r}^r | \leq \eps_r/2 
\end{equation} 
and with probability at least $1 - \delta_r / 18$, there is 
\begin{equation}\label{equ-2:best-arm-output}
| \hat{\theta}_{a_r^*}^r - \theta_{a_r^*} | \leq \eps_r/2.
\end{equation} 
By Theorem \ref{thm:OptArmEst}, with probability at least $1 - \delta_r / 18$, there is 
\begin{equation}\label{equ-3:best-arm-output}
| \theta_{a_r} - \theta_{ (S_r)_{[1]} } | \leq \eps_r / 2,
\end{equation}
and with probability at least $1 - \delta_r / 18$, there is 
\begin{equation} \label{equ-4:best-arm-output}
| \theta_{a_r^*} - \theta_{(S_r \backslash \{ a_r \})_{[1]} } | \leq \eps_r / 2.
\end{equation}
Let $\calM_2^\prime$ denote the event that \eqref{equ-1:best-arm-output}, \eqref{equ-2:best-arm-output}, \eqref{equ-3:best-arm-output} and \eqref{equ-4:best-arm-output} hold for all round $r$. By a union bound, we have $\Pr[ \calM_2^\prime ] \geq 1- \sum_{r = 1}^{+\infty} 2\delta_r / 9 = 1- 2\delta / 9 \sum_{r = 1}^{+\infty} 1 / (2r^2) \geq 1- 2\delta / 9$.

Conditioning on event $\calM_2^\prime$ which happens with probability at least $1 - 2\delta / 9$, we will show that $\calM_2$ holds.

First we claim that the algorithm must terminate at round $r$ where $ r \leq r_0 = \left\lceil \log_2 \Delta_2^{-1} \right\rceil + 1$.
Note that when $r = r_0$, we have $\eps_r = \eps_{r_0} \leq \Delta_2 / 8$. Also by $\calM_1$, we have $(S_r)_{[1]} = S_{[1]} $.  Then according to (\ref{equ-3:best-arm-output}), we can get $\theta_{a_r} \geq \theta_{S_{[1]} } - \eps_r /2 > \theta_{S_{[2]} } $, which means $a_r = S_{[1]}$. Further by (\ref{equ-4:best-arm-output}), we can also get $\theta_{a_r^*} \leq \theta_{S_{[2]} } + \eps_r / 2$. Hence, when $r = r_0$, it holds that 
\begin{multline*}
| \hat{\theta}_{a_r} - \hat{\theta}_{a_r^*} | \geq \hat{\theta}_{a_r} - \hat{\theta}_{a_r^*} \geq \theta_{a_r} - \theta_{a_r^*} - \eps_r 
\geq \theta_{S_{[1]} }  - \theta_{a_r^*} - 3/2 \eps_r \geq \theta_{S_{[1]} }  - \theta_{S_{[2]} } - 2 \eps_r = \Delta_2 - 2\eps_r > 2\eps_r,
\end{multline*}
where the second inequality is due to (\ref{equ-1:best-arm-output}) and (\ref{equ-2:best-arm-output}). This means that the algorithm must terminate when $r = r_0$. Hence, this claim is proved.

Next we claim that when the algorithm terminates, it holds that $a_r = S_{[1]} $. Suppose not, by (\ref{equ-4:best-arm-output}), we have $| \theta_{a_r^*} - \theta_{S_{[1]} } | \leq \eps_r/2 $. Hence, 
\[
| \hat{\theta}_{a_r} - \hat{\theta}_{a_r^*} | \leq | \hat{\theta}_{a_r}^r - \theta_{a_r}^r | + | \theta_{a_r}^r - \theta_{S_{[1]} } | + | \theta_{S_{[1]} } - \theta_{a_r^*} | + | \theta_{a_r^*} - \hat{\theta}_{a_r^*} | \leq 2\eps_r,
\]
which is a contradiction since the only termination criteria is $| \hat{\theta}_{a_r} - \hat{\theta}_{a_r^*} | > 2\eps_r$.

\subsection{Proof of Lemma~\ref{LEMMA:ELIM-PROB}}
Note that $\Delta_i > 2^{-s} \geq 2^{r} = 4\eps_r$ when $i \in S_{r-1,s}$ and  $ r \geq s$.

By Lemma \ref{LEMMA:MEAN-EST}, with probability at least $1 - \delta_r / 18$, we have 
\begin{equation}\label{equ-1:elim-prob}
 | \hat{\theta}_{i}^r - \theta_i | \leq \eps_r/2 
\end{equation} 
and with probability at least $1 - \delta_r / 18$, we have 
\begin{equation}\label{equ-2:elim-prob}
| \hat{\theta}_{a_r}^r - \theta_{a_r} | \leq \eps_r/2.
\end{equation} 
By Theorem \ref{thm:OptArmEst}, with probability at least $1 - \delta_r / 18$, we have 
\begin{equation}\label{equ-3:elim-prob}
| \theta_{a_r} - \theta_{S_{[1]} } | \leq \eps_r / 2.
\end{equation}
Via a union bound, we can get with probability at least $1 - \delta_r / 4$, it holds that 
\[
\hat{\theta}_i^r \leq \theta_i + \eps_r /2 = \theta_1 - \Delta_i + \eps_r / 2 \leq \theta_{a_r} + \eps_r / 2 - \Delta_i + \eps_r / 2 \leq \hat{\theta}_{a_r}^r + 3/2 \eps_r - \Delta_i < \hat{\theta}_{a_r}^r - \eps_r,
\]
where the first inequality is due to (\ref{equ-1:elim-prob}), the third inequality is due to (\ref{equ-3:elim-prob}) and the second last inequality is due to (\ref{equ-2:elim-prob}).

\subsection{Proof of Lemma~\ref{LEMMA:SAMP-PER-ARM}}

By Lemma~\ref{LEMMA:ELIM-PROB}, the probability that arm $i \in A_s$ is eliminated at round $r \geq s + 1$ is at most 
\[
\prod_{i = s}^{r - 1} \delta_i / 4 \leq (\delta_1 / 4) ^{r - s - 1} = (\delta / 8) ^{r - s - 1}.
\] Hence, with probability at least $1 - (\delta / 8)^{r - s - 1}$, $\sum_{r = 1}^{+\infty} I_i^r T_i^r$ is bounded by
\begin{multline*}
\sum_{t = 1}^{r} T_i^t
= \sum_{t = 1}^{r} O \left( \left( \frac{\sigma_i^2}{ \eps_t^2 } + \frac{1}{\eps_t} \right) ( \ln \delta^{-1}  + \ln \ln \eps^{-1} + \ln r ) \right) \\
= O \left( \left( \frac{\sigma_i^2}{ \eps_{r}^2 } + \frac{1}{\eps_{r}} \right) ( \ln \delta^{-1}  + \ln \ln \eps^{-1} + \ln r ) \right) 
= O \left( 4^{r - s - 1} \left( \frac{\sigma_i^2}{ \Delta_i^2 } + \frac{1}{\Delta_i} \right) ( \ln \delta^{-1} + \ln \ln \Delta_i^{-1} ) \right),
\end{multline*}
where the last inequality is due to $\Delta_i = \Theta( \eps_s )$. 

\subsection{Proof of Lemma~\ref{LEMMA:SAMP-BY-SUB-OPTIMAL-ARMS}}

Let $Z_i = \sum_{r = 1}^{+\infty} I_i^r T_i^r$. Here, we only consider the arms indexed with $i \neq S_{[1]} $. Define $T_i = \Big( \Big( \frac{\sigma_i^2}{\Delta_i^2} + \frac{1}{\Delta_i} \Big) ( \ln \delta^{-1} + \ln \ln \Delta_i^{-1} ) \Big)$.  According to Lemma~\ref{LEMMA:SAMP-PER-ARM}, we can find a constant $c > 0$ such that $\Pr[ Z_i \leq 4^jcT_i] \geq 1 - (\delta/8)^j$. Define $\widetilde{Z}_i = (Z_i - 4cT_i) \cdot \mathds{1} \{ Z_i \geq 4cT_i \} $. Hence, we have $\widetilde{Z_i} \geq 0$ and 
\begin{align*}
\E[ \widetilde{Z}_i ] \leq \sum_{j \geq 1} \Pr [ \widetilde{Z}_i \in ( 4^{j -1}cT_i, 4^jcT_i ] ] \cdot 4^j c T_i  \leq \sum_{j \geq 1} (\delta / 8)^j \cdot 4^j c T_i = \left( \sum_{j \geq 1} (\delta /2)^j \right) c T_i \leq \delta c T_i.
\end{align*}
Note that $\widetilde{Z}_i \geq Z_i - 4cT_i$ which implies $\sum_{i} \widetilde{Z}_i \geq \sum_{i} Z_i - 4c \sum_{i} T_i$.
Applying Markov's Inequality, we obtain 
\begin{align*}
\Pr \left[ \sum_{i} Z_i \geq 22c \sum_i T_i \right] \leq \Pr \left[ \sum_{i} \widetilde{Z}_i \geq 18c\sum_i T_i \right]
\leq \frac{ \E\left[  \sum_i \widetilde{Z}_i \right]  }{ 18c \sum_i T_i } = \frac{ \sum_i \E[   \widetilde{Z}_i ]  }{ 18c\sum_i T_i } \leq \frac{\delta c \sum_i T_i }{18c\sum_i T_i} = \delta / 18.
\end{align*}
Therefore, with probability at least $1 - \delta / 18$, $\sum_{i \neq S_{[1]} } Z_i$ is bounded by  
\begin{equation*}
O\left( \sum_{i \neq S_{[1]} } T_i \right) = O \left( \sum_{i \neq S_{[1]} } \left( \frac{\sigma_i^2}{\Delta_i^2} + \frac{1}{\Delta_i} \right) ( \ln \delta^{-1} + \ln \ln \Delta_i^{-1} )  \right).
\end{equation*}

\subsection{Proof of Lemma~\ref{LEMMA:SAMP-BY-OPTIMAL-ARMS}}
By $\calM_2$, the algorithm terminates with $r = O( \ln \Delta_2^{-1}) = O( \ln \Delta_1^{-1})$.
Hence, $\sum_{r = 1}^{+\infty} I_{S_{[1]} }^r T_{S_{[1]} }^r$ is bounded by
\begin{multline*}
\sum_{t = 1}^{r} O \left( \left( \frac{\sigma_i^2}{ \eps_t^2 } + \frac{1}{\eps_t} \right) ( \ln \delta^{-1}  + \ln \ln \eps^{-1} + \ln r ) \right) \\
= O \left( \left( \frac{\sigma_i^2}{ \eps_{r}^2 } + \frac{1}{\eps_{r}} \right) ( \ln \delta^{-1}  + \ln \ln \eps^{-1} + \ln r ) \right) 
= O \left( \left( \frac{\sigma_i^2}{ \Delta_1^2 } + \frac{1}{\Delta_1} \right) ( \ln \delta^{-1} + \ln \ln \Delta_1^{-1} ) \right),
\end{multline*}
where the last inequality is due to $\Delta_1 = O( \eps_r )$. 

\subsection{Proof of Lemma~\ref{LEMMA:OPT-ARM-ID-ROUND-SAMP}}

We analyze the sample complexity in round $r$ as follows. According to Lemma \ref{lemma:n-mean-est-sample}, with probability at least $1 - \delta_r / 54$, Line~5 uses $O ( \sum_{i \in S_r} ( \frac{\sigma_i^2}{\eps_r^2} + \frac{1}{\eps_r} ) ( \ln \delta_r^{-1} + \ln \ln \eps_r^{-1} ) ) $ samples. By Theorem~\ref{thm:OptArmEst} and a union bound, with probability at least $1 - \delta_r / 9$, Line~6  and Line~7 use $O ( \sum_{i \in S_r} ( \frac{\sigma_i^2}{\eps_r^2} + \frac{1}{\eps_r} ) ( \ln \delta_r^{-1} + \ln \ln \eps_r^{-1} ) ) $ samples. Via a union bound, we obtain that with probability at least $1 - \delta_r / 54 - \delta_r / 9 \geq 1 -  \delta_r / 6$, the sample complexity of round $r$ is 
\[
O \left( \sum_{i \in S_r} \left( \frac{\sigma_i^2}{\eps_r^2} + \frac{1}{\eps_r} \right) ( \ln \delta_r^{-1} + \ln \ln \eps_r^{-1} ) \right) = \sum_{i \in S} O( I_i^r T_i^r ).
\]

Applying a union bound over all rounds, we have that with probability at least $1 - \sum_{r = 1}^{+\infty} \delta_r /6  = 1 -   \delta / 6 \sum_{r = 1}^{+\infty} 1 / (2r^2) \geq 1- \delta / 6 $, for each $r$, the sample complexity of round $r$ is $\sum_{i \in S} O( I_i^r T_i^r )$.

\section{The Lower Bound} \label{sec:LB}
Before presenting the lower bound, we would like to introduce some notations. Let $I = \{X_1, \dots, X_n\}$ denote the input instance where $X_i$ represents the random reward when arm $i$ is sampled. With a little abuse of notations, we let $I_{[i]}$ to be the index of the $i$-th best arm in $I$. For any best arm identification algorithm $\bbA$, and any input instance $I$, let $T^{\bbA}(I)$ and $T_i^{\bbA}(I)$ be the random variables denoting the numbers of samples made by $\bbA$ on input $I$ and arm $i$ respectively. When it is clear from the context, we usually omit the superscript. Let us denote by $\mathcal{I}_{\sigma_1^2, \dots,  \sigma_n^2, \Delta_2, \dots, \Delta_n}$ the set of instances where the $i$-th best arm has variance $\sigma_i^2$ and for $i \geq 2$, the gap between the $i$-th best arm and the best arm is $\Delta_i$. Our goal of this section is to prove the following theorem. 

\begin{theorem} \label{thm:lower-bound-n}
For any $\sigma_i^2 < 0.1, i \in [n]$  and $0 < \Delta_i < 0.1, i = 2, \dots, n$, there exists an instance $I \in \mathcal{I}_{\sigma_1^2, \dots,  \sigma_n^2, \Delta_2, \dots, \Delta_n}$ such that
for any $\delta$-correct best arm identification algorithm $\mathbb{A}_{\delta}$ ($\delta < 0.1$), there is  
\[
\frac{ \E  [ T(I) ] }{ \sum_{i = 1}^n \left( \frac{\sigma_i^2}{\Delta_i^2} + \frac{1}{\Delta_i} \right) \ln \delta^{-1}  } \geq \frac{1}{80},
\]
where $T(I) = T^{\mathbb{A}_{\delta}}(I)$ is the number of samples used by $\mathbb{A}_\delta$.
\end{theorem}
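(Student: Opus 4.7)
My plan is to follow the standard change-of-measure lower-bound template of \cite{mannor2004sample,kaufmann2016complexity}, carefully choosing the distribution family so that the resulting KL between the ``base'' and ``alternative'' instances scales like $\Delta_j^2/(\sigma_j^2+\Delta_j)$ rather than only $\Delta_j^2/\sigma_j^2$. Concretely, I would first exhibit a base instance $I \in \mathcal{I}_{\sigma_1^2,\dots,\sigma_n^2,\Delta_2,\dots,\Delta_n}$ in which arm $i$ has an explicit distribution $P_i$ on $[0,1]$ of mean $\theta_i$ (chosen consistently with the prescribed gaps) and variance exactly $\sigma_i^2$. For each arm $j \in [n]$ I would then construct an alternative instance $I^{(j)}$ that agrees with $I$ on every other arm and replaces $P_j$ by a distribution $Q_j$ whose mean is pushed across the optimality boundary --- above $\theta_1$ when $j \neq 1$, and below $\theta_2$ when $j = 1$ (using the convention $\Delta_1 = \Delta_2$).

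The technical crux is designing the pair $(P_j, Q_j)$ so that
\begin{equation*}
\KL(P_j, Q_j) \;\leqslant\; C \cdot \frac{\Delta_j^2}{\sigma_j^2 + \Delta_j}
\end{equation*}
uniformly in the parameter regime. In the ``high-variance'' branch $\sigma_j^2 \gtrsim \theta_j\Delta_j$, a two-point ``scaled Bernoulli'' distribution suffices: let $P_j$ be supported on $\{0,v_j\}$ with $v_j=\theta_j + \sigma_j^2/\theta_j$ and $P_j(v_j) = \theta_j/v_j$ (matching $(\theta_j,\sigma_j^2)$ exactly), and let $Q_j$ keep the same support but with a shifted probability; a direct Taylor expansion of the Bernoulli KL yields $\KL(P_j,Q_j) \asymp \Delta_j^2/\sigma_j^2$. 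In the low-variance branch the required probability would exceed $1$, so I would expand the support of $Q_j$ to include a third point (say at $1$) absorbing the overflow mass, while keeping $P_j$ absolutely continuous with respect to $Q_j$; a short direct computation then shows that the KL in this branch is $O(\Delta_j)$, matching the $1/\Delta_j$ term. The two regimes combine into the unified bound above.

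With the KL estimate in hand, I would invoke the transport form of the Kaufmann--Capp\'e--Garivier change-of-measure inequality: since $I$ and $I^{(j)}$ differ only in arm $j$ and any $\delta$-correct algorithm must return arm $1$ under $I$ and arm $j$ under $I^{(j)}$,
\begin{equation*}
\E_I[T_j(I)] \cdot \KL(P_j, Q_j) \;\geqslant\; \mathrm{kl}(\delta, 1-\delta) \;\geqslant\; c\,\ln \delta^{-1}
\end{equation*}
for an absolute constant $c > 0$, where $\mathrm{kl}$ denotes the binary relative entropy and the second inequality uses $\delta < 0.1$. Rearranging gives $\E_I[T_j(I)] \geqslant c \, (\sigma_j^2/\Delta_j^2 + 1/\Delta_j)\ln\delta^{-1}$, and summing $T(I)=\sum_j T_j(I)$ over $j \in [n]$ produces the claimed bound; carefully tracking constants through the KL estimate and the binary-KL inequality yields the stated factor $1/80$.

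The main obstacle I foresee is the uniform construction of $(P_j,Q_j)$ across regimes. The two-point scaled-Bernoulli family is tight in the high-variance branch but breaks as $\sigma_j^2/(\theta_j\Delta_j) \to 0$, and the three-point (or mixture) fix must be performed while simultaneously (i) keeping $\Var(P_j)=\sigma_j^2$ exactly, as required for membership in $\mathcal{I}_{\sigma_1^2,\dots,\sigma_n^2,\Delta_2,\dots,\Delta_n}$; (ii) ensuring $P_j \ll Q_j$ so that the KL stays finite; and (iii) bounding the ``extra-mass'' contribution to $\KL$ by $O(\Delta_j)$ rather than $\Omega(1)$. Once this is done, the remaining change-of-measure and summation steps are routine.
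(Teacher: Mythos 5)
Your proposal follows essentially the same route as the paper's proof: a two-point base distribution matching $(\theta_i,\sigma_i^2)$ exactly, per-arm alternative instances obtained by shifting mass on the same support in the high-variance regime ($\KL \asymp \Delta_i^2/\sigma_i^2$) and by adding a third support point at $0$ or $1$ in the low-variance regime ($\KL = O(\Delta_i)$), combined with the Kaufmann--Capp\'e--Garivier change-of-measure inequality applied arm by arm. The only cosmetic difference is your choice of a scaled Bernoulli on $\{0,v_j\}$ versus the paper's symmetric two-point law $0.5-\Delta_i\pm\sigma_i$; the case split, the KL estimates, and the final summation are the same.
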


To prove the theorem, given $\{ \sigma_i \}_{i \in [n]}$ and $\{ \Delta_i \}_{i = 2, \dots, n}$, we create an instance $I \in \mathcal{I}_{\sigma_1^2, \dots,  \sigma_n^2, \Delta_2, \dots, \Delta_n}$. Note that $\E[ T(I) ] = \sum_{i = 1}^n \E[ T_i(I) ]$ where $T_i(I) = T_i^{\bbA_{\delta}}(I)$ is the number of samples used by $\mathbb{A}_\delta$ on arm $i$. We utilize the Change of Distribution lemma (Lemma~\ref{lemma:change-of-distribution}) to bound every $\E[ T_i(I) ]$ separately. 
In order to bound $\E[ T_1(I) ]$, we create new instances similar to $I$ where the best arm in $I$ becomes the second best. To deal with the upper bound of $\E[ T_i(I) ]$ for $i \geq 2$, we create different new instances where the $i$-th best arm in $I$ becomes the best arm.  

More specifically, for any fixed $\sigma_i^2 < 0.1, i\in [n]$ and $0 < \Delta_i < 0.1, i = 2, \dots, n$, we consider the following instance $I = \{ X_1, \dots, X_n\}$ where 
$$
X_1 = \left\{
 \begin{array}{ll}  
    0.5 + \sigma_1, & \textrm{w.p.}\ 0.5  \\ 
    0.5 - \sigma_1,  & \textrm{w.p.}\  0.5  \\ 
  \end{array}
  \right. \text{and}
$$
$$
 X_i = \left\{
 \begin{array}{ll}  
    0.5 - \Delta_i + \sigma_i, & \textrm{w.p.}\ 0.5  \\ 
    0.5 - \Delta_i - \sigma_i,  & \textrm{w.p.}\ 0.5 \\ 
  \end{array}
  \right. \textrm{for} \ i \geq 2.
$$
It can be easily verified that $I \in \mathcal{I}_{\sigma_1^2, \dots,  \sigma_n^2, \Delta_2, \dots, \Delta_n}$. For this instance, we have the following two lemmas (Lemma~\ref{LEMMA:LOWER-BOUND-S1} and Lemma~\ref{LEMMA:LOWER-BOUND-S2}), among which Lemma~\ref{LEMMA:LOWER-BOUND-S1} gives a lower bound on $\E[ T_1[I] ]$ and Lemma~\ref{LEMMA:LOWER-BOUND-S2} gives a lower bound on $\E[ T_i[I] ]$ for $i \geq 2$. We defer the proof of these two lemmas to Section~$\ref{sec:lower-bound-s1}$ and Section~$\ref{sec:lower-bound-s2}$ respectively.

\begin{lemma} \label{LEMMA:LOWER-BOUND-S1}
$$
\frac{ \E [ T_1(I) ] } { \left( \frac{\sigma_1^2}{\Delta_2^2} + \frac{1}{\Delta_2} \right) \ln \delta^{-1} } \geq \frac{1}{80}.
$$
\end{lemma}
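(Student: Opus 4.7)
The plan is to invoke the Change of Distribution lemma (Lemma~\ref{lemma:change-of-distribution}) with two different alternative instances, each obtained from $I$ by modifying only the distribution of arm~$1$ so that arm~$2$ becomes the unique best arm. Since only arm~$1$'s distribution changes, the lemma reduces to
\begin{equation*}
\E_I[T_1(I)] \cdot \KL(X_1, X_1') \geq d\bigl(\Pr_I[\widehat{a}=1],\, \Pr_{I'}[\widehat{a}=1]\bigr) \geq d(1-\delta, \delta) \geq \tfrac{1}{2}\ln\delta^{-1},
\end{equation*}
where $\widehat{a}$ denotes the algorithm's output, $d(\cdot,\cdot)$ is the binary relative entropy, and the last step uses $\delta<0.1$. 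The $\delta$-correctness of $\mathbb{A}_\delta$ under both $I$ and $I'$ supplies the two output probability bounds used here.

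For the $\sigma_1^2/\Delta_2^2$ term, when $\sigma_1 \geq 4\Delta_2$ I would take $X_1'$ on the same support $\{0.5+\sigma_1,\, 0.5-\sigma_1\}$ as $X_1$, with probabilities $\tfrac{1}{2}-\Delta_2/\sigma_1$ and $\tfrac{1}{2}+\Delta_2/\sigma_1$. Then $\E[X_1'] = 0.5-2\Delta_2$, so arm~$2$ becomes the unique best arm in the alternative instance, and
\begin{equation*}
\KL(X_1, X_1') = \tfrac{1}{2}\ln\bigl(1/(1 - 4\Delta_2^2/\sigma_1^2)\bigr) \leq 4\Delta_2^2/\sigma_1^2,
\end{equation*}
using $\ln(1/(1-x)) \leq 2x$ for $x \leq \tfrac{1}{2}$. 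This gives $\E_I[T_1(I)] = \Omega((\sigma_1^2/\Delta_2^2)\ln\delta^{-1})$. For the $1/\Delta_2$ term (needed in every regime) I would instead enlarge the support and let $X_1''$ be supported on $\{0.5+\sigma_1,\, 0.5-\sigma_1,\, 0\}$ with probabilities $\tfrac{1}{2}-2\Delta_2$, $\tfrac{1}{2}-2\Delta_2$, and $4\Delta_2$. Again $\E[X_1''] = 0.5 - 2\Delta_2$, and because $X_1$ places no mass at $0$,
\begin{equation*}
\KL(X_1, X_1'') = \ln\bigl(1/(1 - 4\Delta_2)\bigr) \leq 8\Delta_2,
\end{equation*}
valid for $\Delta_2 < 1/8$, which yields $\E_I[T_1(I)] = \Omega((1/\Delta_2)\ln\delta^{-1})$.

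To combine the two estimates: when $\sigma_1 \geq 4\Delta_2$ both bounds hold and averaging them produces $\Omega((\sigma_1^2/\Delta_2^2 + 1/\Delta_2)\ln\delta^{-1})$; when $\sigma_1 < 4\Delta_2$ one has $\sigma_1^2/\Delta_2^2 < 16$, while $1/\Delta_2 > 10$ since $\Delta_2 < 0.1$, so the second bound alone already dominates $\sigma_1^2/\Delta_2^2 + 1/\Delta_2$ up to an absolute constant. Carefully tracking the explicit constants through $d(1-\delta,\delta) \geq \tfrac{1}{2}\ln\delta^{-1}$ and the two KL estimates above delivers the ratio $1/80$ stated in the lemma. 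The main obstacle is the second construction: a plain two-point shift within $\{0.5\pm\sigma_1\}$ becomes infeasible once $\sigma_1 < 2\Delta_2$, so I must enlarge the support to include a third atom at~$0$ while preserving $\mathrm{supp}(X_1) \subseteq \mathrm{supp}(X_1'')$ to keep the KL divergence finite. This seemingly minor modification is precisely what makes $\KL(X_1, X_1'')$ scale linearly in $\Delta_2$ rather than quadratically divided by $\sigma_1^2$, and is what delivers the $\Omega(1/\Delta_2)$ term that the first alternative instance cannot reach in the low-variance regime.
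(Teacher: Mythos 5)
Your proposal is correct and follows essentially the same route as the paper: the same Change of Distribution lemma applied to the same two alternative instances (the reweighted two-point distribution for the $\sigma_1^2/\Delta_2^2$ term and the three-point distribution with an atom at $0$ for the $1/\Delta_2$ term), with only cosmetic differences in the case-split threshold and in averaging the two bounds rather than invoking one per case. The constants you sketch do work out to at least $1/80$.
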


\begin{lemma} \label{LEMMA:LOWER-BOUND-S2}
For any $i \geq 2$, it holds that
$$
\frac{ \E [ T_i(I) ] } { \left( \frac{\sigma_i^2}{\Delta_i^2} + \frac{1}{\Delta_i } \right) \ln \delta^{-1} } \geq \frac{1}{30}.
$$
\end{lemma}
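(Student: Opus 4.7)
The plan is to exhibit the explicit two-point instance $I$ defined in the excerpt (with $X_1$ uniform on $\{0.5 \pm \sigma_1\}$ and, for $i\ge 2$, $X_i$ uniform on $\{0.5-\Delta_i\pm\sigma_i\}$) and then lower-bound $\E[T(I)] = \sum_{i=1}^n \E[T_i(I)]$ arm by arm. The hypotheses $\sigma_i^2 < 0.1$ and $\Delta_i < 0.1$ guarantee that both atoms stay in $[0,1]$ and that $I \in \mathcal{I}_{\sigma_1^2,\dots,\sigma_n^2,\Delta_2,\dots,\Delta_n}$. Lemma~\ref{LEMMA:LOWER-BOUND-S1} will handle $i=1$ and Lemma~\ref{LEMMA:LOWER-BOUND-S2} will handle $i \geq 2$; once both are in hand, summation, together with the convention $\Delta_1 = \Delta_2$, gives $\E[T(I)] \geq \frac{1}{80}\sum_{i=1}^n(\sigma_i^2/\Delta_i^2 + 1/\Delta_i)\ln\delta^{-1}$ (the worse of the two per-arm constants dominates).

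The main tool will be the standard Change of Distribution / transportation lemma (as formulated by Kaufmann--Capp\'e--Garivier): for any alternative instance $I^{(i)}$ that coincides with $I$ on every arm except $i$ and whose best arm differs from that of $I$, $\delta$-correctness of $\bbA_\delta$ on both instances yields
\[
\E[T_i(I)] \cdot \KL\!\left(X_i^I \,\big\|\, X_i^{I^{(i)}}\right) \;\ge\; \mathrm{kl}(1-\delta,\delta) \;\ge\; \tfrac{1}{2}\ln(1/(2.4\delta)).
\]
So for each $i$ it suffices to design $I^{(i)}$ whose arm-$i$ marginal is KL-close to $X_i^I$ while flipping the best-arm identity. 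For $i\ge 2$ we will push arm $i$'s mean above $0.5$; for $i=1$ we will pull arm $1$'s mean below $0.5-\Delta_2$, elevating arm $2$ to optimality. Inverting the displayed inequality then produces the desired $\Omega((\sigma_i^2/\Delta_i^2 + 1/\Delta_i)\ln\delta^{-1})$ term.

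The main obstacle lies in choosing the alternative marginal so that $\KL(X_i^I\|X_i^{I^{(i)}}) = O(\Delta_i^2/\sigma_i^2 + \Delta_i)$. A naive mean-shift of the support makes the two measures mutually singular and blows up the KL. I would split into two regimes. In the regime $\Delta_i \lesssim \sigma_i$, keep the support $\{0.5-\Delta_i\pm\sigma_i\}$ fixed and merely tilt the Bernoulli parameter from $1/2$ to $1/2+\Theta(\Delta_i/\sigma_i)$; this raises the mean by $\Theta(\Delta_i)$ above arm $1$'s mean while incurring $\KL = \Theta(\Delta_i^2/\sigma_i^2)$ via the elementary estimate $\mathrm{kl}(1/2,1/2+x) = \Theta(x^2)$ for small $x$. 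In the regime $\Delta_i \gtrsim \sigma_i$, a pure tilt cannot reach the required mean, so instead I would take the mixture $(1-q)X_i^I + q\,\delta_{\{1\}}$ with $q = \Theta(\Delta_i)$; a direct computation gives $\KL = \ln(1/(1-q)) = \Theta(\Delta_i)$ and shifts the mean by $\Theta(\Delta_i)$ above $0.5$. Taking whichever regime applies yields $\KL = O(\Delta_i^2/\sigma_i^2 + \Delta_i)$, whose reciprocal is the desired $\Omega(\sigma_i^2/\Delta_i^2 + 1/\Delta_i)$ scaling.

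For Lemma~\ref{LEMMA:LOWER-BOUND-S1} the construction is symmetric: either tilt the Bernoulli on $\{0.5\pm\sigma_1\}$ toward the lower atom to drop the mean by $2\Delta_2$ when $\Delta_2 \lesssim \sigma_1$, or else mix in an atom at $0$ with weight $\Theta(\Delta_2)$ when $\Delta_2 \gtrsim \sigma_1$. Both give $\KL = O(\Delta_2^2/\sigma_1^2 + \Delta_2)$, and the Change of Distribution inequality applied with $I^{(1)}$ (in which arm $2$ has the unique maximum mean) delivers $\E[T_1(I)] \geq \Omega((\sigma_1^2/\Delta_2^2 + 1/\Delta_2)\ln\delta^{-1})$, matching the $i=1$ term of the target sum. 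Tracking constants through $\mathrm{kl}(1-\delta,\delta) \ge \tfrac12\ln\delta^{-1}$ for $\delta < 0.1$ yields the explicit $1/80$ in the theorem statement.
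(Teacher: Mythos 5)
Your overall strategy is exactly the paper's: apply the Change of Distribution lemma to a single-arm alternative, using a probability tilt on the two-point support in one regime and a mixture with an atom at $1$ in the other. However, there is a genuine gap in where you place the boundary between the two regimes. You split at $\Delta_i \asymp \sigma_i$, but the tilt only yields $\E[T_i(I)] = \Omega\bigl(\tfrac{\sigma_i^2}{\Delta_i^2}\ln\delta^{-1}\bigr)$, and this dominates the target $\bigl(\tfrac{\sigma_i^2}{\Delta_i^2}+\tfrac{1}{\Delta_i}\bigr)\ln\delta^{-1}$ up to a constant only when $\tfrac{\sigma_i^2}{\Delta_i^2}\gtrsim\tfrac{1}{\Delta_i}$, i.e.\ when $\Delta_i\lesssim\sigma_i^2$. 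Since $\sigma_i\le 1$ forces $\sigma_i^2\le\sigma_i$, the intermediate regime $\sigma_i^2\ll\Delta_i\ll\sigma_i$ is nonempty, and there your tilt bound is asymptotically too weak: for instance with $\sigma_i=n^{-1/3}$ and $\Delta_i=n^{-1/2}$ the tilt gives $\Omega(n^{1/3})$ while the target is $\Theta(n^{1/2})$, so no fixed constant such as $1/30$ survives. The paper splits at $\sigma_i^2\geq 5\Delta_i$ versus $\sigma_i^2<5\Delta_i$ (equivalently $\Delta_i\asymp\sigma_i^2$), applying the tilt only in the first case and the mixture in the second; with that boundary each single alternative already dominates \emph{both} terms of the sum up to the factors $6/5$ and $6$ that produce the constants $1/6$ and $1/30$.

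Relatedly, your concluding step ``$\KL=O(\Delta_i^2/\sigma_i^2+\Delta_i)$, whose reciprocal is the desired $\Omega(\sigma_i^2/\Delta_i^2+1/\Delta_i)$'' is not valid as written: $\tfrac{1}{a+b}\asymp\tfrac{1}{\max(a,b)}$ whereas $\tfrac{1}{a}+\tfrac{1}{b}\asymp\tfrac{1}{\min(a,b)}$, so inverting a sum of KL upper bounds cannot produce a sum of sample-complexity lower bounds. The correct logic (and the paper's) is a case analysis showing that in each regime one of the two terms constitutes a constant fraction of $\tfrac{\sigma_i^2}{\Delta_i^2}+\tfrac{1}{\Delta_i}$, and that the corresponding single alternative instance certifies that term. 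Your constructions themselves (the tilt to parameter $\tfrac12+\Delta_i/\sigma_i$ with $\KL=\tfrac12\ln\tfrac{1}{1-4\Delta_i^2/\sigma_i^2}$, and the mixture with weight $2\Delta_i$ on the atom at $1$ with $\KL=\ln\tfrac{1}{1-2\Delta_i}$) are the right ones; only the regime threshold and the final aggregation step need repair.
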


With these two lemmas, we are ready to prove Theorem~\ref{thm:lower-bound-n}.

\begin{myproof}{Proof of Theorem~\ref{thm:lower-bound-n}}
By Lemma~\ref{LEMMA:LOWER-BOUND-S1}, Lemma~\ref{LEMMA:LOWER-BOUND-S2}, and noting that $\Delta_1 = \Delta_2$, we have 
\[
\frac{ \E [ T(I) ] }{ \sum_{i = 1}^n \left( \frac{\sigma_i^2}{\Delta_i^2} + \frac{1}{\Delta_i} \right) \ln \delta^{-1}  } \geq \min_{i \in [n]} \left\{ \frac{ \E [ T_i(I) ] } { \left( \frac{\sigma_i^2}{\Delta_i^2} + \frac{1}{\Delta_i } \right) \ln \delta^{-1} } \right\} \geq \frac{1}{80} .
\]
\end{myproof}

\subsection{Proof of Lemma~\ref{LEMMA:LOWER-BOUND-S1}} \label{sec:lower-bound-s1}
We prove the lemma under two different scenarios.

\paragraph{Case 1: $\sigma_1^2 \geq 5\Delta_2$.} Since $\sigma_1 \leq 1$, we have $\sigma_1 \geq 5\Delta_2$, Consider the following instance $I'_1 = \{ X_1', \dots, X_n'\}$ where 
$$
X_1' = \left\{
 \begin{array}{ll}  
    0.5 + \sigma_1, & \textrm{w.p.}\ 0.5 - \Delta_2 / \sigma_1  \\ 
    0.5 - \sigma_1,  & \textrm{w.p.}\  0.5 + \Delta_2 / \sigma_1 \\ 
  \end{array}
  \right.
$$
and $ X_i' = X_i $ for $i \geq 2$. Since $I_{[1]} = 1 \neq 2 = (I'_1)_{[1]}$, applying Lemma~\ref{lemma:change-of-distribution}, we have 
\begin{align*}
\E[ T_1[I] ] & \geq \frac{1}{\KL(X_1, X_1')} \ln \frac{1}{2.4 \delta } \\
& = \frac{2}{\ln (1 - 4 \Delta_2^2 / \sigma_1^2 )^{-1}} \ln \frac{1}{2.4\delta } \\
 & \geq \frac{1 - 4 \Delta_2^2 / \sigma_1^2 }{2 \Delta_2^2 / \sigma_1^2} \ln \frac{1}{e\delta } \\
 & \geq \frac{1}{5} \cdot \frac{\sigma_1^2}{\Delta_2^2} \ln \delta^{-1},
\end{align*}
where the second last inequality is due to $\ln (1 + x) \leq x$ for $x > -1$ and the last inequality is due to $\Delta_2 / \sigma_1 \leq 0.2$ and $\delta < 0.1$. Hence, there is
\begin{equation} \label{equ-1:lower-bound-n-s1}
\frac{ \E[ T_1[I] ] } { \left( \frac{\sigma_1^2}{\Delta_2^2} + \frac{1}{\Delta_2 } \right) \ln \delta^{-1} } \geq \frac{ \E[ T_1[I] ] } { \frac{6}{5} \cdot \frac{\sigma_1^2}{\Delta_2^2} \ln \delta^{-1} } \geq \frac{ \frac{1}{5} \cdot \frac{\sigma_1^2}{\Delta_2^2} \ln \delta^{-1} }{ \frac{6}{5} \cdot \frac{\sigma_1^2}{\Delta_2^2} \ln \delta^{-1}} = \frac{1}{6},
\end{equation}
where the first inequality is due to $\sigma_1^2 \geq 5 \Delta_2$.

\paragraph{Case 2: $\sigma_1^2 < 5\Delta_2$.} Consider the following instance $I''_1 = \{ X_1'', \dots, X_n'' \}$ where
$$
X_1'' = \left\{
 \begin{array}{ll}  
    0.5 + \sigma_1, & \textrm{w.p.}\ 0.5 - 2\Delta_2   \\ 
    0.5 - \sigma_1,  & \textrm{w.p.}\  0.5 - 2\Delta_2 \\
    0, & \textrm{w.p.}\ 4\Delta_2 
  \end{array}
  \right.,
$$
and $X_i'' = X_i $ for $i \geq 2$. Since $I_{[1]} =1 \neq 2 =  (I''_1)_{[1]} $, applying Lemma~\ref{lemma:change-of-distribution}, we have 
\begin{align*}
\E[ T_1[I] ] & \geq \frac{1}{\KL(X_1, X_1'')} \ln \frac{1}{2.4 \delta } \\
& = \frac{1}{\ln (1 - 4 \Delta_2 )^{-1}} \ln \frac{1}{2.4\delta } \\ 
&\geq \frac{1 - 4 \Delta_2 }{4 \Delta_2 } \ln \frac{1}{e\delta } \\
& \geq \frac{3}{40} \cdot \frac{1}{\Delta_2} \ln \delta^{-1},
\end{align*}
where the second last inequality is due to $\ln (1 + x) \leq x$ for $x > -1$ and the last inequality is due to $\Delta_2 \leq 0.1$ and $\delta < 0.1$. Hence, there is
\begin{equation} \label{equ-2:lower-bound-n-s1}
\frac{ \E[ T_1[I] ]  } { \left( \frac{\sigma_1^2}{\Delta_2^2} + \frac{1}{\Delta_2 } \right) \ln \delta^{-1} } \geq \frac{ \E[ T_1[I] ] } { 6 \cdot \frac{1}{\Delta_2} \ln \delta^{-1} } \geq \frac{ \frac{3}{40} \cdot \frac{1}{\Delta_2} \ln \delta^{-1} }{ 6 \cdot \frac{1}{\Delta_2} \ln \delta^{-1} } = \frac{1}{80},
\end{equation}
where the first inequality is due to $\sigma_1^2 < 5 \Delta_2$. 

Combining (\ref{equ-1:lower-bound-n-s1}) and (\ref{equ-2:lower-bound-n-s1}), we prove this lemma.  

\subsection{Proof of Lemma~\ref{LEMMA:LOWER-BOUND-S2}} \label{sec:lower-bound-s2}

The idea is the same as that used for bounding $\E[ T_1[I] ]$. However, we need to construct slightly different instances. 

Let $i$ be any fixed integer satisfying $2 \leq i \leq n$. Similarly, we prove the lemma under two different scenarios.

\paragraph{Case 1: $\sigma_i^2 \geq 5\Delta_i$.} Since $\sigma_i \leq 1$, we have $\sigma_i \geq 5\Delta_i$.  Consider the following instance $I'_i = \{ X_1', \dots, X_n' \}$ where
$$
X_i' = \left\{
 \begin{array}{ll}  
    0.5 - \Delta_i + \sigma_i, & \textrm{w.p.}\ 0.5 + \Delta_i / \sigma_i  \\ 
    0.5 - \Delta_i - \sigma_i,  & \textrm{w.p.}\  0.5 - \Delta_i / \sigma_i \\ 
  \end{array}
  \right.
$$
and $ X_j' = X_j $ for $j \neq i$. Since $I_{[1]} = 1 \neq i = (I'_i)_{[1]}$, applying Lemma~\ref{lemma:change-of-distribution}, we get 
\begin{align*}
\E[ T_i[I] ] & \geq \frac{1}{\KL(X_i, X_i')} \ln \frac{1}{2.4 \delta } \\
& = \frac{2}{\ln (1 -  4\Delta_i^2 / \sigma_i^2 )^{-1}} \ln \frac{1}{2.4\delta } \\
& \geq \frac{ 1 -  4\Delta_i^2 / \sigma_i^2 }{ 2\Delta_i^2 / \sigma_i^2} \ln \frac{1}{e\delta } \\
& \geq \frac{1}{5} \cdot \frac{\sigma_i^2}{\Delta_i^2} \ln \delta^{-1},
\end{align*}
where the second last inequality is due to $\ln (1 + x) \leq x$ for $x > -1$ and the last inequality is due to $\Delta_i / \sigma_i \leq 0.2$ and $\delta < 0.1$. Hence, there is 
\begin{equation} \label{equ-1:lower-bound-n-s2}
\frac{ \E[ T_i[I] ] } { \left( \frac{\sigma_i^2}{\Delta_i^2} + \frac{1}{\Delta_i } \right) \ln \delta^{-1} } \geq \frac{ \E[ T_i[I] ] } { \frac{6}{5} \cdot \frac{\sigma_i^2}{\Delta_i^2} \ln \delta^{-1} } \geq \frac{ \frac{1}{5} \cdot \frac{\sigma_i^2}{\Delta_i^2} \ln \delta^{-1} }{ \frac{6}{5} \cdot \frac{\sigma_i^2}{\Delta_i^2} \ln \delta^{-1}} = \frac{1}{6},
\end{equation}
where the first inequality is due to $\sigma_i^2 \geq 5 \Delta_i$.

\paragraph{Case 2: $\sigma_i^2 < 5\Delta_i$.} Consider the following instance $I''_i = \{ X_1'', \dots, X_n'' \}$ where
$$
X_i'' = \left\{
 \begin{array}{ll} 
    1, & \textrm{w.p.} \ 2\Delta_i \\
    0.5 - \Delta_i + \sigma_i, & \textrm{w.p.}\ 0.5 - \Delta_i   \\ 
    0.5 - \Delta_i - \sigma_i,  & \textrm{w.p.}\  0.5 - \Delta_i  \\
  \end{array}
  \right.
$$
and $X_j'' = X_j $ for $j \neq i$. Since $I_{[1]} = 1 \neq i = (I''_i)_{[1]}$, applying Lemma~\ref{lemma:change-of-distribution}, we have 
\begin{align*}
\E [ T_i(I) ] & \geq \frac{1}{\KL(X_i, X_i'')} \ln \frac{1}{2.4 \delta } \\
& = \frac{1}{\ln (1 - 2 \Delta_i )^{-1}} \ln \frac{1}{2.4\delta } \\
& \geq \frac{1 - 2 \Delta_i }{2 \Delta_i } \ln \frac{1}{e\delta } \\
& \geq \frac{1}{5} \cdot \frac{1}{\Delta_i} \ln \delta^{-1},
\end{align*}
where the second last inequality is due to $\ln (1 + x) \leq x$ for $x > -1$ and the last inequality is due to $\Delta_i \leq 0.1$ and $\delta < 0.1$. Hence, there is 
\begin{equation} \label{equ-2:lower-bound-n-s2}
\frac{ \E [ T_i(I) ] } { \left( \frac{\sigma_i^2}{\Delta_i^2} + \frac{1}{\Delta_i } \right) \ln \delta^{-1} } \geq \frac{ \E [ T_i(I) ] } { 6 \cdot \frac{1}{\Delta_i} \ln \delta^{-1} } \geq \frac{ \frac{1}{5} \cdot \frac{1}{\Delta_i} \ln \delta^{-1} }{ 6 \cdot \frac{1}{\Delta_i} \ln \delta^{-1} } = \frac{1}{30},
\end{equation}
where the first inequality is due to $\sigma_i^2 < 5 \Delta_i$. 

Combining (\ref{equ-1:lower-bound-n-s2}) and (\ref{equ-2:lower-bound-n-s2}), we prove this lemma.

\section{Change of Distribution Lemma}

\begin{lemma}[A special case of Lemma~1 in \cite{kaufmann2016complexity}] \label{lemma:change-of-distribution}
Given two multi-armed bandit instances $I = \{ X_1, \dots, X_n\}$ and $I' = \{ X_1', \dots, X_n' \}$ such that $I_{[1]} \neq I'_{[1]}$, for any $\delta$-correct algorithm $\bbA$, it holds that 
$$
\sum_{i = 1}^{n} \E[ T_i^{\bbA}(I) ] \KL(X_i, X_i') \geq \ln \frac{1}{2.4 \delta}.
$$  
\end{lemma}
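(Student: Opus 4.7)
The plan is to carry out a standard change-of-measure argument combined with the data processing inequality for KL divergence, closely following the approach of Kaufmann, Cappé, and Garivier. First I would fix the $\delta$-correct algorithm $\mathbb{A}$ and couple its execution on the two bandit instances $I$ and $I'$ on a common probability space, recording the history $\mathcal{H}_\tau$ up to its random stopping time $\tau = \sum_{i=1}^n T_i^{\mathbb{A}}(I)$ and the output arm $\hat{a}$. Define the event $\mathcal{E} = \{\hat{a} = I_{[1]}\}$. Since $\mathbb{A}$ is $\delta$-correct and $I_{[1]} \neq I'_{[1]}$ by hypothesis, we obtain $\Pr_I[\mathcal{E}] \geq 1 - \delta$ and $\Pr_{I'}[\mathcal{E}] \leq \delta$.

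Next I would compute the KL divergence between the laws of $\mathcal{H}_\tau$ under $I$ and under $I'$. Since $\mathbb{A}$'s sampling choices depend on the past observations in the same (measurable) way under either instance, the log-likelihood ratio telescopes along the trajectory and a Wald-type identity for stopping times yields
\begin{equation*}
\KL\!\left(\Pr\nolimits_I^{\mathcal{H}_\tau}, \Pr\nolimits_{I'}^{\mathcal{H}_\tau}\right) \;=\; \sum_{i=1}^{n} \E_I[T_i^{\mathbb{A}}(I)] \cdot \KL(X_i, X_i').
\end{equation*}
I would then apply the data processing inequality with the binary ``test'' $\mathds{1}\{\mathcal{E}\}$ to lower bound the left-hand side by the binary KL divergence $\mathrm{kl}(\Pr_I[\mathcal{E}], \Pr_{I'}[\mathcal{E}])$, which by monotonicity of $\mathrm{kl}(p, q)$ in appropriate directions is at least $\mathrm{kl}(1-\delta, \delta)$.

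To finish, I would invoke the elementary inequality $\mathrm{kl}(1-\delta, \delta) \geq \log\frac{1}{2.4\delta}$, which can be verified by direct expansion of $\mathrm{kl}(1-\delta, \delta) = (1-2\delta)\log\frac{1-\delta}{\delta}$ and a short calculation valid for $\delta < 1/2$. Chaining these three steps gives the claimed bound $\sum_i \E_I[T_i^{\mathbb{A}}(I)] \, \KL(X_i, X_i') \geq \log\frac{1}{2.4\delta}$.

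The main technical obstacle is justifying the Wald-type identity at the random time $\tau$: one must verify that $\tau$ is almost surely finite under both $I$ and $I'$ (which follows from $\delta$-correctness up to standard truncation arguments) and that the log-likelihood increments form a martingale difference sequence with respect to the filtration generated by $\mathcal{H}$. A secondary care point is the constant $2.4$, which requires the clean binary-KL inequality above rather than a naive bound such as $\mathrm{kl}(1-\delta, \delta) \geq \log(1/\delta) - \log 2$; this is exactly the refinement carried out in Kaufmann et al.\ to absorb the entropy term neatly into a single logarithm.
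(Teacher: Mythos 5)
The paper does not prove this lemma at all: it is stated verbatim as a special case of Lemma~1 of \citet{kaufmann2016complexity} and used as a black box. Your sketch correctly reproduces the standard argument from that reference (change of measure plus Wald's identity for the log-likelihood ratio at the stopping time, data processing down to the binary divergence $\mathrm{kl}(\Pr_I[\mathcal{E}],\Pr_{I'}[\mathcal{E}])\geq \mathrm{kl}(1-\delta,\delta)$, and the elementary bound $\mathrm{kl}(1-\delta,\delta)=(1-2\delta)\ln\frac{1-\delta}{\delta}\geq\ln\frac{1}{2.4\delta}$), so it is consistent with the source the paper relies on.
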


\end{document}